\renewcommand{\emph}[1]{\textit{#1}}
\newcommand{\MOD}[0]{\textit{Mod}}
\newcommand{\beh}[0]{\sigma}
\newcommand{\OBS}[0]{\textit{o}}
\newcommand{\hypo}[0]{\textit{hypo}}
\newcommand{\hypos}[0]{\textit{hypos}}
\newcommand{\hypospace}[0]{\mathbb{H}}
\newcommand{\props}[0]{\textit{props}}
\newcommand{\propspace}[0]{\mathbb{P}}
\newcommand{\propindex}[1]{\footnotesize #1}
\newcommand{\propdesc}{p_{\textrm{\propindex{desc}}}}
\newcommand{\propanc}{p_{\textrm{\propindex anc}}}
\newcommand{\descendants}[0]{\mathrm{descendants}}
\newcommand{\children}[0]{\mathrm{children}}
\newcommand{\ancestors}[0]{\mathrm{ancestors}}
\newcommand{\projhypof}[0]{\alpha}
\newcommand{\projhypo}[1]{\projhypof(#1)}
\newcommand{\setspace}[0]{\mathbb{S}}
\newcommand{\candidate}[1]{\mathrm{candidate}(#1)}
\newcommand{\minimal}[1]{\mathrm{minimal}(#1)}
\newcommand{\coverage}[1]{\mathrm{covers}(#1)}
\newtheorem{defi}{Definition}
\newtheorem{theo}{Theorem}
\newtheorem{lemm}{Lemma}
\newtheorem{exam}{Example}
\newcounter{myqnum}
\newenvironment{question}%
{\begin{trivlist}
 \refstepcounter{myqnum}
 \item[\hskip\labelsep {\bfseries Question \arabic{myqnum}.}]
}%
{\end{trivlist}}
\newenvironment{proof}[1][]{\noindent\textbf{\ifthenelse{\equal{#1}{}}{Proof: }{Proof #1: }}}{\hfill{\scriptsize $\Box$}}
\newcommand{\SO}[0]{S_{\mathrm{O}}}
\newcommand{\SR}[0]{S_{\mathrm{R}}}
\newcommand{\set}[1]{\{#1\}}
\newcommand{\JT}[0]{\textsc{jt}}
\newcommand{\BDD}[0]{\textsc{bdd}}
\newcommand{\PFS}[0]{\textsc{pfs}}
\newcommand{\PLS}[0]{\textsc{pls}}
\begin{document}

\title{A More General Theory of Diagnosis from First Principles}

\author{Alban Grastien, Patrik Haslum, Sylvie Thi\'ebaux\\
  The Australian National University, Canberra, Australia\\
  \texttt{first.last@anu.edu.au}
}
\date{September 2023}

\maketitle

\begin{abstract}
  Model-based diagnosis has been an active research topic in different communities including artificial intelligence, formal methods, and control. This has led to
a set of disparate approaches addressing different classes of systems and seeking different forms of diagnoses. For instance Reiter's ``Theory of Diagnosis from First Principles'' primarily targets static systems, considers that diagnoses are minimal sets of faults consistent with the system's model and the observation, and efficiently explores the powerset of faults by means of simple consistency tests.
In contrast, diagnosis approaches to discrete event dynamic systems, pioneered by Sampath, Zanella, and others, traditionally reconstruct all system traces consistent with the observation, either explicitly or through a precompiled structure.
In this paper, we resolve such disparities by generalising Reiter's theory to be agnostic to the types of systems and diagnoses considered. This more general theory of diagnosis from first principles defines the minimal diagnosis as the set of preferred diagnosis candidates in a search space of hypotheses. Computing the minimal diagnosis is achieved by exploring the space of diagnosis hypotheses, testing
sets of hypotheses for consistency with the system's model and the observation, and generating conflicts that rule out successors and other portions of the search space. Under relatively mild assumptions, our algorithms correctly compute the set of preferred diagnosis candidates. The main difficulty here  is that the search space is no longer a powerset  as in Reiter's theory,  and that, as consequence, many of the implicit properties  (such as finiteness of the search space) no longer hold.  The notion of conflict also needs to be generalised and we present such a more general notion.
We present two implementations of these algorithms, using test solvers based on satisfiability and heuristic search, respectively, which we evaluate on instances from two real world discrete event problems. Despite the greater generality of our theory, these implementations surpass the special purpose algorithms designed for discrete event systems, and enable solving instances that were out of reach of existing diagnosis approaches.

\end{abstract}

\section{Introduction}
Discrete event systems \cite{cassandras-lafortune::99} (DESs)
are models of dynamic systems
that represent states and events in a discrete manner.
DESs are a natural model of many kinds of event-based systems,
such as, for example, protocols \cite{holzmann::1991} or business
processes \cite{van-der-aalst::survey::2013}, and also often form
a natural abstraction of hybrid discrete--continuous dynamical
systems.
The diagnosis problem, in the context of dynamical systems, is to
infer from a system model and partial observation of events emitted
by the system some diagnostically relevant properties of its current
state or behaviour -- for example, whether any abnormal events have
occurred, and if so, which ones, how many times and in what order?

Since the seminal work of Sampath et al.\ \cite{sampath-etal::tac::95},
DESs diagnosis methods have examined all sequences of events that represent possible system behaviours under the system model and the observation,
and have extracted the diagnostic information from those sequences.

This contrasts with the approach developed by
the Artificial Intelligence 
community for static systems:
known as ``diagnosis from first principles'' 
(i.e., model-based diagnosis, as opposed to expert-based diagnosis)
the approach pioneered by de Kleer, Reiter and Williams \cite{reiter::aij::87,dekleer-williams::aij::87} uses a theorem prover
to test the consistency of diagnostic hypotheses 
with the model and 
the observation.
By working directly at the level of hypotheses relevant to the diagnosis,
this approach
avoids enumerating all explanations of the observation (which are, in general,
exponentially many).

When trying to understand why such a ``test-based'' diagnosis approach for DESs did not eventuate, two main reasons come to mind.
The first is the absence of an efficient ``theorem prover'' for checking
the consistency of a set of hypotheses and an observed DES,
which is a problem akin to 
planning or model checking.
However, there has been considerable work 
in these areas in the last decades 
so that available tools can now be used for diagnosis
(cf., \cite{grastien-etal::aaai::07,sohrabi-etal::kr::10,%
haslum-grastien::spark::11}).
The second reason is that 
the \texttt{diagnose} algorithm proposed by Reiter \cite{reiter::aij::87}
was designed
to diagnose
circuits, and therefore returns only a set of faults.
DESs, in contrast, can experience multiple occurrences of the same
fault event,
and the diagnoser may be required to determine the number of repetitions of
faults, or order in which they took place.
Reiter's algorithm cannot be applied in this setting
and extending it in this direction raises major issues.
Our 
main contribution in this paper is to 
resolve these issues and 
generalise the
test-based diagnosis framework to a larger class of diagnostic hypothesis
spaces, appropriate to DESs and other models of dynamical systems.

We present a general definition of model-based diagnosis, independent
of the form of the system model and the form of diagnosis required.
This definition encompasses the existing theory of diagnosis of
circuits as a special case, but also applies to dynamic system models,
such as DESs, and beyond. 
As a result, DES diagnosis problems can be solved 
using the same techniques as for circuit diagnosis.  

More precisely, we formulate the diagnosis problem as follows: 
given a set of \textit{hypotheses} 
(abstractions of the system behaviour that discriminate only 
according to aspects that are relevant to the diagnosis)
and a preference relation over the hypotheses, 
the diagnosis is defined as the set of minimal (most-preferred) \textit{diagnosis candidates}, where a candidate is a hypothesis that is consistent with the model and the observation.
\textit{Diagnosis} is therefore 
the problem of exploring the \textit{hypothesis space} 
to identify these minimal diagnosis candidates.
We present different \textit{exploration strategies} 
that require only an oracle capable of testing
whether a given set of hypotheses intersects the diagnosis.
This test solver plays a role similar to the theorem prover in
Reiter's algorithm.
Importantly, we show that the test solver does not have to be
given an explicit, enumerated set of hypotheses. Instead, the
set of hypotheses to test is implicitly represented as those
that satisfy a set of \textit{diagnostic properties}; the test
solver's task is then to find a candidate that satisfies these
properties.
The implicit representation of hypothesis sets allows the diagnosis
algorithm to test infinite sets of hypotheses that can be represented
by a finite set of properties.

The exploration strategies we propose fall into two classes:
The ``preferred-first'' strategies start by testing the most
preferred hypotheses, until candidates are found; these candidates
are then minimal.
The ``preferred-last'' strategies generate and refine candidates until
their minimality is proven.
For each exploration strategy, we determine the conditions on the
hypothesis space that are necessary to ensure termination of the
diagnosis algorithm. 

Reiter's \texttt{diagnose} algorithm follows a preferred-first
strategy, but additionally uses \textit{conflicts} to improve its
efficiency. Conflicts enable the test solver to provide more
information when the outcome of a test is negative.
We generalise this idea and incorporate it into our preferred-first
strategy. In our framework, a conflict is a chunk of the hypothesis
space, which may be larger than the set of hypotheses tested, that
is proven to contain no candidate. We show that they can be represented
as sets of diagnostic properties that are inconsistent with the observed
system. Because at least one of these properties must be negated, 
conflicts focus the exploration of the hypothesis space 
and thus accelerate the search for a diagnosis. 

This work was motivated by our experience with real-world DES
diagnosis problems occurring in a number of application domains, including in
particular power systems alarm processing and business process
conformance checking, which we describe below. Existing model-based
diagnosis approaches were unable to cope with the complexity of these
problems. We use these problems to benchmark various instances of our
approach, differing in the hypotheses space, the strategy for
exploring it, and the test solver implementation chosen, against other
DES diagnosis methods. We show that our approach, using a test
solver based on SAT, is able to solve most of these problems, 
significantly outperforming earlier state-of-the-art algorithms.
We also obtain good performance with a test solver based on heuristic
search.

The present article builds on our earlier conference publications 
\cite{grastien-etal::dx::11,grastien:etal:12,grastien::ecai::14}.
The first article formulates diagnosis as a search problem on the hypothesis space 
and introduces the idea of a search strategy;
the second one explains how conflicts can be exploited 
for the specific case of diagnosis of discrete event systems; 
and the last one shows how the theory can be applied to hybrid systems.
Compared to these original works, 
we now present a unified theory motivated by a number of real world examples.
This theory is more thoroughly developed, complete with proofs, 
and comprehensively evaluated wrt other algorithms.

This paper is organised as follows:
In the next section, we provide some motivating examples 
for the present work.  
Section~\ref{sec::diag} gives a definition of the diagnosis problem 
that is independent from the modeling framework and the hypothesis space.  
Section~\ref{sec::sets} introduces the key concept of representation
of sets of hypotheses by sets of properties and explains how
questions relevant to diagnosis are formulated as diagnosis tests.
Section~\ref{sec::prop} demonstrates how these definitions are
instantiated for two different modeling frameworks: diagnosis of
circuits and diagnosis of discrete event systems.
Section~\ref{sec::strategies} presents 
different strategies for exploring the hypothesis space.
In Section~\ref{sec::related}, we discuss the relation to previous
work, in particular that which our theory generalises.
Section~\ref{sec::imple} describes two implementations of test solvers
for discrete event systems diagnosis, and Section~\ref{sec::xp} the
results of our experiments with these implementations.
Section~\ref{sec::future} concludes.


\section{Motivating Examples}
In this section, we briefly present examples of diagnosis problems
for discrete event and hybrid dynamical systems.
Each one of these problems requires a more expressive concept of
diagnosis than the classical ``set of faults'' definition, and thus
serves to motivate our general framing of the problem and our
generalisation of test-based diagnosis algorithms.

\subsection{Conformance Checking and Data Cleaning}
\label{sub::motiv::conformancechecking}

Deciding if a record of events matches or does not match a specified
process, or obeys or does not obey a set of rules, is a problem that
arises in several contexts.
It is known as \emph{conformance} or \emph{complicance checking} in
the Business Process Modelling (BPM) literature
\cite{van-der-aalst::survey::2013,hashmi:etal::KAIS::2018}.
Although there are many BPM formalisms, most of them model discrete
event systems.
Conformance checking may be just deciding whether the recorded event trace
matches the process specification (in diagnosis terms, whether the
system's execution is normal or abnormal), but often one seeks to find
a best \emph{trace alignment} \cite{degiacomo-etal:icaps:2016}: a set of
insertions (events missing from the trace), deletions (spurious events
in the trace) and substitutions (erroneous events in the trace) that
together are sufficient to make the event trace match the process.
In diagnosis terms, these adjustments to the trace are fault events,
and a best trace alignment corresponds to a minimal diagnosis candidate.
Note that in such a candidate, the same fault event may occur multiple
times, for example if the trace has multiple spurious events of the same
type. Thus, the space of diagnosis hypotheses can not be modelled simply
as sets of fault events.
The problem of process model adaptation, which examines event
traces corresponding to multiple executions of the process and seeks
a minimal modification of the process specification that suffices to
make all traces match, can likewise be viewed as an instance of DES
diagnosis.

Another example of diagnosis of recorded event traces occurs in
longitudinal, or temporal, databases, where each record denotes a
change in the status of some entity occurring at some time. The
ordered set of records relating to one entity forms a timeline, or
event trace, of that entity.
In the case study described by
Boselli et al.\ \cite{boselli-etal::icaps::14},
each entity is a person, and each
record pertains to a change in their employment status: starting
work for a new employer, ceasing work, extending a fixed-term
position or converting a current job between part-time and full-time,
or between fixed-term and continuing.
Entity timelines are typically subject to integrity constraints,
rules that prescribe events that cannot or must happen. For example,
a person must have started work with an employer before that job
can cease, or be converted or extended; a person can only hold one
full-time job at a time, and thus cannot start a part-time job if
already on a full-time position, or start a full-time job if already
working at all, but a person can start a new part-time job if they
are already working another part time.

However, errors and omissions in data entry mean that entity timelines
often do not satisfy the database rules. Rather than rejecting such
records, the problem of \emph{data cleaning} is to find a minimal set
of corrections that will restore consistency to the timeline
\cite{dallachiesa-etal:2013,geerts-etal:2013,boselli-etal::icaps::14}.
For example, consider the following timeline, from Boselli et al.'s
data set:
\begin{center}
  {\small
    \begin{tabular}{rllccl}
      Date & Worker & Event type & Full/Part & Term/Cont. & Employer \\
      \hline
      $d_1$    & 1370 & start   & full-time & fixed-term & 8274 \\
      $d_2$    & 1370 & cease   & full-time & fixed-term & 8274 \\
      \textit{$d_3$} & \textit{1370} & \textit{convert} & \textit{full-time} & \textit{fixed-term} & \textit{8274} \\
      \textit{$d_4$} & \textit{1370} & \textit{cease}   & \textit{full-time} & \textit{fixed-term} & \textit{8274} \\
      $d_5$ & 1370 & start   & full-time & fixed-term & 36638
    \end{tabular}
  }
\end{center}
The records on dates $d_3$ and $d_4$ violate the integrity constraints,
because they record a conversion event for a position that has already
ceased, and a double cessation record for the same position.

Like trace alignment, these corrections may be insertion of missing
records, deletion of spurious records, or changes to individual fields
of a record, including changes to the timing of records, and thus the
order of event in the timeline, and like in that case each correction
can occur multiple times in a timeline. Thus, viewed as a DES diagnosis
problem, a minimal diagnosis candidate is a multiset of faults events.
In the example above, the minimal diagnosis candidates include replacing
the conversion on $d_3$ with a ``start'' event (i.e., the person starting
work again for the same employer), or deleting the cessation event on
$d_2$ and changing either full-time or fixed-term status in the records
on $d_3$ and $d_4$.
Because there is no way to know with certainty which diagnosis candidate
corresponds to the true sequence of events, a data cleaning diagnoser
needs to return the complete set of minimal fault event multisets, for
a human to decide which corrections to apply or whether to investigate
further.

Note that when the diagnostic hypotheses are multisets (or sequences)
of faults rather than simple sets, the hypothesis space is infinite,
and even the set of candidates or the diagnosis may be infinite.
Close attention must therefore be given to avoiding non-termination
of the diagnosis algorithm.
In this paper, we present a number of algorithms that are able
to compute the complete diagnosis, also in infinite hypothesis
spaces, along with sufficient assumptions to guarantee their
termination (Section \ref{sec::strategies}).

\subsection{Alarm Processing}

In large complex systems, such as power grids or telecommunication networks, 
faults can produce non-trivial effects.
Alarms are time-stamped system-generated messages intended to aid
operators in diagnosing fault conditions and take timely corrective
actions. However, system complexity and the local nature of alarm
conditions mean that when a fault occurs, its secondary effects often
result in ``alarm cascades'' which obscure rather than inform about
the root cause. This problem has been recognised for some time
\cite{prince-wollenberg-bertagnolli89}, and there have been several
attempts to use AI techniques to ease the interpretation of alarms
through filtering, prioritising and explaining them
\cite{cordier:etal:98,cordier-dousson::safeprocess::00,taisne:06,larsson09,bauer-etal::dx::11}.
Framing the problem as dynamical system diagnosis treating unexplained
alarms as fault events means that a diagnoser can identify secondary
alarms, and thus focus attention on root causes
\cite{bauer-etal::dx::11,haslum-grastien::spark::11}.

Alarm logs have an important temporal dimension. For example, in a
power network, the event of a circuit breaker opening can explain
a following voltage drop alarm on the power line protected by the
breaker, if the breaker opening isolates the line. This implies
that the \emph{sequence} of fault (unexplained) events in the
diagnosis also matters: An unexplained circuit breaker opening
followed by an unexplained voltage drop does not carry the same
meaning as the same two unexplained alarms in the opposite order
(the former implies that it could not be inferred from the model
and observation that the breaker opening was sufficient to isolate
the line). Thus, the diagnostic hypotheses in this setting are
sequences of fault events, rather than sets.

Sequences of faults pose particular problems for classical
diagnosis algorithms.
Decomposition, for instance, is no longer as easy:
In the simple case when diagnostic hypotheses are sets of faults,
inferring independently that faults $f_1$ and $f_2$ are present
implies that any candidate fault set must contain $\set{f_1,f_2}$ as
a subset. However, when diagnostic hypotheses are fault sequences,
inferring the presence of fault events $f_1$ and $f_2$ does not
distinguish between sequences in which $f_1$ occurs before $f_2$
and those with the two events in opposite order.
Existing conflict-directed algorithms for diagnosis over fault-set
hypotheses are based on such a decomposition.
We show in this paper how the notion of conflict can be generalised
to any type of diagnostic hypothesis space. This is done by making
the concept of \emph{properties} of a hypothesis explicit, and
defining a set of properties that is sufficient to represent every
relevant hypothesis set, for any hypothesis space
(Section \ref{sub::properties}).

\subsection{Diagnosis of Hybrid Systems}

Hybrid systems are a class of models of dynamic systems that exhibit
both discrete mode changes and continuous evolution. Hybrid systems
naturally model physical processes under discrete control, such as
electrical systems
\cite{kurtoglu-etal:adapt:2009,fox-long-magazzeni:JAIR:2012}
and heating, ventilation, and air conditioning (HVAC) systems
\cite{behrens:provan:10,ono:etal:12,lim-etal::AAAI::2015}.
Diagnosis of hybrid systems can exhibit all the complexities of
discrete event systems, and more.
Consider, for example, the possible fault modes of sensors in the
\textsc{Adapt} benchmark system \cite{kurtoglu-etal:adapt:2009}:
When operating normally, a sensor's output is the real-valued sensed
value plus a bounded random noise. However, the sensor can fail by
becoming stuck at a fixed reading, by returning a value at a fixed
offset from the true reading, or by becoming subject to drift, which
is an offset value that increases over time. At a discrete abstraction
level, this is simply four possible fault modes, but a fully precise
diagnosis should also identify the offset constant or drift rate for
those fault modes.

The consistency-based approach can be applied to diagnosis of
hybrid systems \cite{grastien::ecai::14}, and has some advantages
over approaches that require a predictive model to simulate the system,
which are unable to handle unspecified or unpredictable behaviour
modes \cite{hofbaur-williams::tsmc::04}.
However, as we will show in this paper, there are limitations to
what can be guaranteed. If the diagnosis is required to estimate
real-valued fault parameters, such as the offset or drift rate of
a faulty sensor, the hypothesis space is \emph{dense}, in which case
a finite minimal diagnosis may not exist.


\section{The Diagnosis Problem}
\label{sec::diag}
In this section, we first present a generic definition of the
diagnosis problem, based on the notion of hypothesis space. 
The hypothesis space is motivated by the fact 
that different diagnostic environments 
(static systems and dynamic systems, in particular) 
require different types of 
diagnoses.
We then illustrate 
the generic definition with
different types of hypothesis spaces 
and discuss their relative expressiveness.  
Finally, we discuss a number of properties of these spaces 
that will influence the type of strategy that may be used
to explore the space.

\subsection{Diagnosis Definition}

We consider a system with a model $\MOD$, 
i.e., a description of all behaviours the system can exhibit.  
We assume this model is ``complete'', 
by which we mean that 
if a behaviour $\beh$ is possible in the system, 
then this behaviour is allowed by the model; 
we then write $\beh \in \MOD$.
A (partial) observation $o$ of the system is a predicate on behaviours:
$\OBS(\beh)$ is true if behaviour $\beh$ is consistent with
what has been observed.
We make no assumptions about how $\MOD$ and $\OBS$ are represented 
other than that they are of a form that the test solver (that is,
the theorem prover, model checker, etc, 
that will be used to reason about the system) can work
with. Typically, they will be given in some compact form (such as
a set of logical constraints, 
a factored representation of a discrete event system, or similar).

Given the model $\MOD$ and an observation $\OBS$,
the purpose of diagnosis is 
not to retrieve
the exact behaviour (or set of possible behaviours),
but to infer the diagnostic information associated with it.
For instance, we may want to identify which faults have occurred 
in the system and in which order.  
The diagnostic abstraction of a behaviour is called a ``hypothesis'' 
and we write $\hypo(\beh)$ for the (single) hypothesis 
associated with behaviour~$\beh$.
We write $\hypospace$ for the hypothesis space 
and we assume that hypotheses are mutually exclusive 
(i.e., $\hypo: \MOD \rightarrow \hypospace$ is a function).  
Because the system is only partially observable 
and may not be diagnosable\footnote{Diagnosability is the property that a fault 
will always be precisely identified; 
there is generally a correlation between diagnosability 
and the uniqueness of the diagnosis candidate~\cite{grastien-torta::sara::11}.}, 
it is generally not possible to precisely 
retrieve the hypothesis $\hypo(\beh)$.
Instead, the \textit{diagnosis} is the collection of hypotheses 
that are consistent (compatible) with both 
the model and the observation; 
such hypotheses are called ``diagnosis candidates''.  
From now on, we will use $\delta$ to represent a candidate, 
whilst $h$ will refer to a hypothesis that may not be a candidate.  

\begin{defi}[Diagnosis]
  Given a model $\MOD$, an observation $\OBS$, 
  and a hypothesis space $\hypospace$, 
  the \emph{diagnosis} is the subset $\Delta(\MOD,\OBS,\hypospace)$ 
  of hypotheses supported by at least one behaviour 
  consistent with the observation: 
  \begin{equation}
    \Delta(\MOD,\OBS,\hypospace) = 
    \{\delta \in \hypospace \mid 
    \exists \beh \in \MOD: \OBS(\beh) \land \hypo(\beh) = \delta
    \}.
  \end{equation}
\end{defi}

Because it asks only for consistency between the candidate and the
observation, this definition of diagnosis is weaker than 
that of an abductive diagnosis \cite{brusoni-etal::aij::98}, which 
requires each candidate to 
logically imply (part of) the observation. 

To make the diagnosis more
precise, it is common to impose a minimality condition.
The hypothesis space is equipped with a partial order relation $\preceq$ 
such that if $\delta \preceq \delta'$, 
then $\delta$ is preferred to $\delta'$, 
meaning $\delta'$ may be removed from the diagnosis.  
Recall that a partial order relation is antisymmetric, 
i.e., $\left(h \preceq h'\right) \land \left(h' \preceq h\right) 
\Rightarrow \left( h=h'\right)$.  
In the rest of the paper, we assume without loss of generality the existence of a unique most preferred hypothesis $h_0$ of $\hypospace$. This will normally correspond to the nominal system behavior, but if necessary (e.g., if they were multiple such behaviors or even an infinite number of them), one can always take $h_0$ to be a dummy hypothesis inconsistent with the system model.

We want to ignore the candidates 
that are not minimal with respect to $\preceq$,
where the subset of minimal elements of a set H is defined as
$\min_\preceq H = \{ h \in H \mid \nexists h' \in H.\ h' \prec h \}$.
We also want every ignored candidate to be supported by at least one
minimal candidate.
We then say that the minimal diagnosis \emph{covers} the diagnosis.  
Formally given two subsets $H$ and $H'$ of $\hypospace$, 
$H$ covers $H'$ 
if
\begin{displaymath}
  \forall h' \in H'.\ \exists h \in H.\ h \preceq h'.  
\end{displaymath}
The definition of the minimal diagnosis is as follows:

\begin{defi}[Minimal Diagnosis]\label{def:minimal-diagnosis}
  Given a model $\MOD$, an observation $\OBS$, 
  and a hypothesis space $\hypospace$,
  the subset $\Delta_\preceq(\MOD,\OBS,\hypospace)$ 
  of candidates in $\Delta(\MOD,\OBS,\hypospace)$
  that are minimal with respect to $\preceq$ is
  the \emph{minimal diagnosis} if it covers the diagnosis.  
\end{defi}

In most diagnosis environments, it will be the case that a minimal
diagnosis always exists. However, in Subsection
\ref{sec::hypo-space-props} we show an example of where it does not.  
To simplify notation, we will in the rest of the paper omit the
parameters from the diagnosis and the minimal diagnosis, i.e., we
will simply write $\Delta$ and $\Delta_\preceq$. 

\subsection{Examples of Hypothesis Spaces}
\label{sub::examplesofspaces}

The simplest hypothesis space is the Binary Hypothesis Space (BHS), 
where each behaviour is classified only as either nominal or faulty.
This leads to a fault detection problem rather than a diagnosis one. 
The preferred hypothesis is generally the nominal hypothesis.  

The most commonly used hypothesis space is the Set Hypothesis Space (SHS).  
Given a set $F$ of faults, 
a hypothesis is the subset $h \subseteq F$ of faults 
that appear in the behaviour.  
Preference is given to hypotheses 
that contain a subset of faults: 
$h \preceq h' \Leftrightarrow h \subseteq h'$.  

Another popular hypothesis is the Minimal Cardinality Set Hypothesis Space (MC-SHS).
Hypotheses are defined similarly to SHS,
as the set of faults that affect the system.
The preference relation however is defined through the number of faults,
with $h$ preferred over $h'$
if it has the smallest cardinality (number of faults in the hypothesis):
\begin{displaymath}
  h \preceq h' \Leftrightarrow \bigg( h = h'\ \lor\ |h| < |h'| \bigg).
\end{displaymath}

For the case where the probability of faults varies,
each fault $f$ is associated with an a-priori probability $Pr(f) \in (0,0.5)$,
and the a-priori probability of hypothesis $h$ is then
$h = \Pi_{f \in h}\ Pr(f) \times \Pi_{f \in F \setminus h}\ (1-Pr(f))$.
The preference relation of the A-priori Probability Set Hypothesis Space (AP-SHS)
then maximises the a-priori probability:
\begin{displaymath}
  h \preceq h' \Leftrightarrow \bigg( h = h' \ \lor\ Pr(h) < Pr(h')\bigg).
\end{displaymath}
Bylander et al. proposed more elaborate definitions 
based on qualitative plausibilities of the hypotheses 
\cite{bylander-etal::aij::91}.

Our theory does not handle diagnosis problems
in which probability is maximised a-posteriori,
i.e., after the likelihood of the hypothesis given the observations has been factored in
\cite{LUCAS200199}.

In dynamic systems, faults may occur several times.  
The Multiset Hypothesis Space (MHS) associates each fault 
with the number of occurrences of this fault: 
$h: F \rightarrow \mathbf{N}$.  
A hypothesis is preferred to another
if it has no more occurrences of any fault: 
$h \preceq h' \Leftrightarrow 
\left(\forall f \in F,\ h(f) \le h'(f)\right)$.  

If we wish to also distinguish the order of occurrences of faults,
a hypothesis in the Sequence Hypothesis Space (SqHS) is a (possibly
empty) sequence of faults: $h \in F^\star$.  
A hypothesis is preferred to another
if the former is a subsequence of the latter.  
Formally, if $h = [f_1,\dots,f_k]$ and $h' = [f'_1,\dots,f'_n]$, 
then $h \preceq h' \Leftrightarrow 
\exists g: \{1,\dots,k\} \rightarrow \{1,\dots,n\}:$ 
$\big(\forall i \in \{1,\dots,k-1\},\ g(i) < g(i+1)\big)$
$\land$
$\big(\forall i \in \{1,\dots,k\},\ f_i = f'_{g(i)}\big)$.  
For instance, hypothesis $[a,b]$ is preferable 
to hypothesis $[c,a,d,b]$.  

We can also strengthen the preference order to treat faults differently,
for instance, to reflect their relative likelihood.
As an example, we consider the Ordered Multiset Hypothesis Space (OMHS).
The hypotheses in this space are the same as in MHS, i.e., mappings
from each fault to the number of times it occurred, but we also have
an ordering of the faults, and any number of occurrences of a fault $f'$
is preferred to a single occurrence of a fault $f \prec f'$.
Formally, $h \preceq h' \Leftrightarrow 
\forall f' \in F,\ h(f') > h'(f') \Rightarrow 
\exists f \in F: \big( f \prec f' \big) \land \big( h(f) < h'(f) \big)$.
This corresponds to fault $f'$ being infinitely more likely than fault $f$.

Finally, we consider faults that are represented by a continuous value.
This can be used to model, for example, the situation where the fault is
a drift in a model parameter.
We assume a single continuous-valued fault. This is a very simple case,
but it will be sufficient for illustrative purposes. 
In the Continuous Hypothesis Space (CHS), 
a hypothesis is a positive real value: $h \in \mathbf{R}^+$.  
Preference is given to smaller values: 
$h \preceq h' \Leftrightarrow h \le h'$.

\subsection{Properties of Hypothesis Spaces}
\label{sec::hypo-space-props}

In this section, we define 
the terminology related to hypothesis spaces which will be used to define our 
framework and 
to formulate termination conditions for different exploration
strategies.

\paragraph{{\bf Relations Between Hypotheses}}
If $h \preceq h'$, we say that $h$ is an \textit{ancestor} of $h'$ 
and that $h'$ is a \textit{descendant} of $h$ 
(note that, since $\preceq$ is non-strict, $h$ is an ancestor and a
descendant of itself).  
If $h \prec h'$ and there is no $h''$ such that $h \prec h'' \prec h'$, 
then we say that $h'$ is a \textit{child} of $h$ 
and $h$ is a \textit{parent} of $h'$.  

\paragraph{{\bf Finiteness}}
The first condition we consider
is whether the hypothesis space is finite.
Infinite hypothesis spaces must be dealt with more cautiously,
as they may prevent the diagnosis algorithm from terminating.
In a finite space, any systematic exploration strategy (i.e, one that
does not revisit a previously rejected hypothesis) will terminate.
BHS, SHS, MC-SHS, and AP-SHS are finite.  

\paragraph{{\bf Well Partial Orderness}}
A binary relation on a set $\setspace$ is a \textit{well partial order} iff
it is a (non-strict) partial order and every non-empty subset of $\setspace$
has a finite and non-empty set of minimal elements according to the
order (e.g., \cite{kruskal::joct::72}). That is,
\begin{displaymath}
  \forall S \subseteq \setspace.\quad
  S \neq \emptyset\ \Rightarrow\ 0 < | \min_{\preceq}(S) | < \infty.  
\end{displaymath}
If the preference order $\preceq$ is a well partial order on
$\hypospace$, we say that $\hypospace$ is \textit{well partially ordered}
(by $\preceq$).
A well partial order is always well-founded, meaning it has no
infinite descending chains.

The continuous hypothesis space given in the previous section
(CHS) is not well partially ordered.
To see this, consider the set of hypotheses that correspond to a
strictly positive value, i.e.,
$S = \{h \in \hypospace_{\mathrm{CHS}} \mid h > 0\}$. This set has no
minimal value, which means that $\min_{\preceq}(S)$ is empty. 
All the other hypothesis spaces discussed in the previous section are
well partially ordered. For the non-trivial cases of MHS and SqHS,
this follows from the work of Nash-Williams \cite{nashwilliams::mpcps::63}
on well-quasi-ordered finite trees.

Well partially ordered hypothesis spaces have several useful properties:
First, that the minimal diagnosis always exists and is finite (this is
shown in Theorem~\ref{theo::minimaldiagnosis} below). Second, that the
set of parents and the set of children of any given hypothesis are both
finite. This follows from 
the fact that all parents of a hypothesis are themselves
unordered; thus, they are all minimal in the set of the hypothesis'
parents and, therefore, there cannot be infinitely many of them. The same is true of its children.
Third, any strict descendant of a hypothesis 
is also a (possibly non-strict) descendant 
of some child of that hypothesis.  

\begin{theo}\label{theo::minimaldiagnosis}
  If the hypothesis space is well partially ordered,
  then the minimal diagnosis exists and is defined by: 
  \begin{equation}
    \Delta_\preceq = \min_\preceq(\Delta)
    = \{h \in \Delta \mid \forall h' \in \Delta,\ 
    h' \preceq h \Rightarrow h = h'\}.
  \end{equation}
  Furthermore, $\Delta_\preceq$ is finite.
\end{theo}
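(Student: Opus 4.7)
The plan is to establish the two conjuncts of the conclusion separately, both as direct consequences of the well-partial-order hypothesis. Let $\Delta_\preceq$ denote $\min_\preceq(\Delta)$; we must show (i) $\Delta_\preceq$ is finite, and (ii) $\Delta_\preceq$ covers $\Delta$, so that by Definition~\ref{def:minimal-diagnosis} the minimal diagnosis exists and coincides with $\Delta_\preceq$.

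For finiteness, I would simply appeal to the definition of well partial order. If $\Delta$ is empty, then $\min_\preceq(\Delta) = \emptyset$, which is finite. If $\Delta$ is non-empty, then because $\preceq$ is a well partial order on $\hypospace$, and $\Delta \subseteq \hypospace$ is a non-empty subset, we have $0 < |\min_\preceq(\Delta)| < \infty$. Either way, $\Delta_\preceq$ is finite.

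For the covering property, the key move is to localise the minimality argument. Fix an arbitrary $h' \in \Delta$ and let $A = \{h \in \Delta \mid h \preceq h'\}$. Since $h' \in A$, the set $A$ is non-empty, so by the well partial order property applied to $A \subseteq \hypospace$, there exists some $h \in \min_\preceq(A)$. I would then argue that such an $h$ is in fact minimal in the whole of $\Delta$: if some $h'' \in \Delta$ satisfied $h'' \prec h$, then by transitivity $h'' \prec h \preceq h'$, so $h'' \in A$ and $h''$ would contradict the minimality of $h$ in $A$. Hence $h \in \Delta_\preceq$ and $h \preceq h'$, establishing that $\Delta_\preceq$ covers $\Delta$.

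I do not foresee a real obstacle here; the theorem is essentially a packaging of the definition of well partial order. The only subtle point is the step that lifts minimality in the restricted set $A$ to minimality in the full diagnosis $\Delta$, and that is just a one-line transitivity argument. An alternative route using well-foundedness (iteratively descending from any $h' \in \Delta$ until a minimal element is reached) would also work, but the localisation argument above is cleaner because it directly reuses the finiteness-of-minima clause of the well partial order rather than invoking no-infinite-descending-chains separately.
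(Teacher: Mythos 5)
Your proof is correct, but it takes a different route from the paper's. The paper establishes the covering property by contradiction: starting from an uncovered candidate $\delta_1$, it repeatedly produces a strictly preferred uncovered candidate, obtaining an infinite descending chain $\delta_1 \succ \delta_2 \succ \dots$ that contradicts the well-foundedness of a well partial order. Your argument instead localises: for each $\delta \in \Delta$ you form the non-empty set $A = \{h \in \Delta \mid h \preceq \delta\}$, apply the finite-and-non-empty-minima clause of the definition directly to $A$, and lift minimality in $A$ to minimality in $\Delta$ by a one-line transitivity step. The two approaches invoke different facets of the same hypothesis: the paper uses the derived no-infinite-descending-chains property (and implicitly a form of dependent choice to build the sequence), while you use the defining clause of a well partial order exactly as the paper states it, which is arguably more self-contained. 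You also gain something the paper's proof leaves implicit: an explicit treatment of the finiteness of $\Delta_\preceq$, including the degenerate case $\Delta = \emptyset$. Both proofs are sound; yours is slightly longer but more direct, and the paper's is more compact at the cost of routing through an auxiliary consequence of the definition.
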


 \begin{proof}
  We must show that $\min_\preceq(\Delta)$ satisfies 
  the condition of Definition~\ref{def:minimal-diagnosis} 
  which states that $\min_\preceq(\Delta)$ must cover the diagnosis.  

  Assume that the diagnosis is not covered by $\min_\preceq(\Delta)$.  
  Let $\delta_1$ be a diagnosis candidate that is not covered: 
  $\nexists \delta' \in \min_\preceq(\Delta)$ 
  such that $\delta' \preceq \delta_1$.  
  Then because $\delta_1 \not\in \min_\preceq(\Delta)$, 
  there exists another preferable candidate $\delta_2 \prec \delta_1$ 
  that is not covered.  
  Applying the same reasoning, we end up 
  with an infinite sequence of hypotheses 
  $\delta_1 \succ \delta_2 \succ \dots$  
  This sequence contradicts the property of well partially order.  
\end{proof}

If the space is not well partially ordered, there is no such
guarantee. For instance, in the CHS, if
$\Delta =\{h \in \hypospace_{\mathrm{CHS}} \mid h > 0\}$, as in the
example above, then $\min_{\preceq}(\Delta) = \emptyset$ which does
not satisfy the covering requirement 
of Definition~\ref{def:minimal-diagnosis}.
Thus, in this situation there exists no minimal diagnosis.

\paragraph{{\bf Path, Depth and Distance}}
Finally, we define concepts that relate 
to a hypothesis' ``position'' in the hypothesis space, 
which we will use in Section~\ref{sec::strategies} 
when proving termination of our diagnosis algorithms.  

A \emph{path} from hypothesis $h$ to $h'$ 
is a sequence of hypotheses $h_1\prec \dots \prec h_k$ 
such that $h_1 = h$ and $h' = h_k$.  
An \emph{atomic path} is a path $h_1\prec\dots\prec h_k$ 
such that each $h_i$ is a parent of $h_{i+1}$.

The \emph{distance} of hypothesis $h$ (implicitely from $h_0$) 
is the minimal length of an atomic path 
from hypothesis $h_0$ to hypothesis $h$; 
if no such atomic path exists, 
hypothesis $h$ is said to have an infinite distance.  
A hypothesis is said to be \emph{finitely reachable} 
if it has a finite distance.  

The ordered multiset hypothesis space (OMHS) 
illustrates a situation with non-finitely reachable hypotheses.  
Assume two fault events, $f_1$ and $f_2$ where $f_1 \prec f_2$ 
(any number of occurrences of $f_2$ 
is preferred to one occurrence of $f_1$) 
and consider hypothesis $h = \{ f_1 \rightarrow 1, f_2 \rightarrow 0\}$.  
Then $h$ has no parent: 
indeed, all strict ancestors of $h$ are hypotheses $h_i$
with no occurrence of $f_1$ and $i$ occurrences of $f_2$:
$h_i = \{ f_1 \rightarrow 0, f_2 \rightarrow i\}$.
Then for all $i$ the property $h_i \prec h_{i+1} \prec h$ holds,
and $h_i$ is not a parent of $h$.  
Since 
$h$ has no parent 
no atomic path leads to $h$.  

The \emph{depth} of a hypothesis $h$ 
is the maximal length of a path from $h_0$ to $h$.  
If there is no maximal length, 
the depth is said to be infinite.  

The depth of a hypothesis is, by definition, 
larger than or equal to its distance, hence
a hypothesis that is not finitely-reachable 
has an infinite depth.  
The converse may not hold however: 
there can be a finite atomic path $h_0 \prec h_1 \prec h$ 
and, at the same time, an infinite number of paths 
$h_0 \prec h'_1 \prec \dots \prec h'_k \prec h$ 
for any $k$.
To find an example we have to look 
at some even more fine-grained preference order. 
For example, with reference to Figure~\ref{fig::distancedepth}, consider a system
consisting of a component monitored by a sensor.  
The component can exhibit any number of temporary failures 
(represented by a natural number), 
while the sensor has two modes: nominal ($N$) and faulty ($F$).  
It is assumed that 
the component  and the sensor both 
experiencing faults is infinitely more unlikely 
than any number of faults on the component.  
Consequently, $h_0 = \langle 0,N\rangle$ 
is the unique preferred hypothesis; 
$h_1 = \langle 0,F\rangle$ is a child of $h_0$ 
(any $h'_i = \langle i,N\rangle$, $i \ge 1$, 
is incomparable to $h_1$); 
$h = \langle 1,F\rangle$ is a child of $h_1$ 
(there is no hypothesis $h'$ such that $h_1 \prec h' \prec h$) 
hence $h$'s distance is $2$ and $h$ is finitely-reachable.  
On the other hand, we have $h_0 \prec h'_1 \prec h'_2 \prec 
\dots \prec h$, i.e., $h$ is infinitely deep.  

\begin{figure}[ht]
  \begin{center}
    \begin{tikzpicture}
  \node (0n) at (2,6) {$\langle 0,N\rangle$};

  \node (1n) at (0,5) {$\langle 1,N\rangle$};
  \node (2n) at (0,4) {$\langle 2,N\rangle$};
  \node (dn) at (0,3) {$\dots$};
  \node (in) at (0,2) {$\langle i,N\rangle$};
  \node (en) at (0,1) {$\dots$};

  \node (0f) at (4,3) {$\langle 0,F\rangle$};

  \node (1f) at (2,0) {$\langle 1,F\rangle$};

  \draw[->] (1f) -- (0f);
  \draw[->] (0f) -- (0n);
  \draw[->,dashed] (1f) -- (en);
  \draw[->,dashed] (1f) -- (in);
  \draw[->,dashed] (1f) -- (dn);
  \draw[->,dashed] (1f) -- (2n);
  \draw[->,dashed] (1f) -- (1n);
  \draw[->] (en) -- (in);
  \draw[->] (in) -- (dn);
  \draw[->] (dn) -- (2n);
  \draw[->] (2n) -- (1n);
  \draw[->] (1n) -- (0n);
\end{tikzpicture}
  \end{center}
  \caption{Hypothesis space illustrating that the depth can be infinite
    while the distance is finite.
    An unbroken line indicates a parent/child relationship;
    a dashed one, an ancestor one.
    The distance between $\langle 0,N\rangle$ and $\langle 1,F\rangle$ is two;
    the depth is infinite.
  }
  \label{fig::distancedepth}
\end{figure}
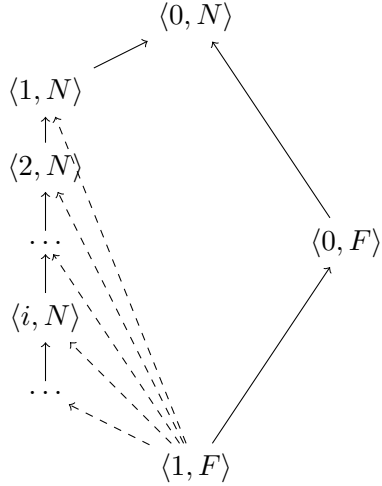

\subsection{Abstraction of Hypothesis Spaces}
\label{sub::expressiveness}

In the previous section, we hinted
that the diagnosis in some hypothesis spaces 
is more informative than in others.  
We now formalise this notion.  

A hypothesis space $\hypospace$ 
(together with its preference relation $\preceq$ 
and its function $\hypo: \MOD \rightarrow \hypospace$) 
is a \emph{refinement} of hypothesis space $\hypospace'$ 
(together with $\preceq'$ and $\hypo'$), 
and conversely $\hypospace'$ is an \emph{abstraction} of $\hypospace$, 
if each hypothesis of $\hypospace'$ 
corresponds exactly to a subset of hypotheses in $\hypospace$.  
Formally, there exists a function 
$\projhypof: \hypospace \rightarrow \hypospace'$ 
that projects each hypothesis of $\hypospace$ on $\hypospace'$ 
such that
\begin{itemize}
\item 
  $\forall \beh \in \MOD.\ \hypo'(\beh) = \projhypo{\hypo(\beh)}$, 
  i.e., $\hypospace'$ is an abstraction of $\hypospace$, 
  and
\item 
  $\forall \{h_1,h_2\} \subseteq \hypospace.\ 
  h_1 \preceq h_2 \Rightarrow \projhypo{h_1} \preceq' \projhypo{h_2}$, 
  i.e., the preference relation is maintained by the abstraction. 
\end{itemize}
The projection is extended naturally to a set of hypotheses, 
i.e., $\projhypo{H} = \{ h' \in \hypospace' \mid 
\exists h \in H.\ h' = \projhypo{h}\}$.  

For instance, the set hypothesis space is an abstraction of the
multiset hypothesis space (over the same set of faults).
Given a multiset hypothesis, 
i.e., a mapping $F \rightarrow \mathbf{N}$, 
the abstraction function $\projhypof$ 
returns the subset of faults that are associated 
with a strictly positive number: 
$\projhypo{h} = \{f \in F \mid h(f) > 0\}$.  
Furthermore, the preference relation is maintained: 
if $h_1 \preceq_{\mathrm{MHS}} h_2$, 
then $h_1(f) \le h_2(f)$ for all $f$; 
consequently, $\projhypo{h_1} \subseteq \projhypo{h_2}$ 
and $\projhypo{h_1} \preceq_{\mathrm{SHS}} \projhypo{h_2}$.  

An abstraction/refinement relationship between two hypothesis
spaces implies that the diagnoses (and minimal diagnoses) in those
two spaces are also related. This is shown by the following two
lemmas. Theorem~\ref{theo::all-abstractions} below states all
abstraction relations (summarised in Figure~\ref{fig::abtractions})
between the hypothesis spaces for discrete event systems 
(BHS, SHS, MC-SHS, AP-SHS, OMHS, MHS, and SqHS) 
described in the previous subsection.

\begin{lemm}\label{lemm::abstractionofdiag}
  If $\hypospace'$ is an abstraction of $\hypospace$, 
  the projection on $\hypospace'$ of the diagnosis in $\hypospace$ 
  is the diagnosis in $\hypospace'$: 
  $\projhypo{\Delta} = \Delta'$.  
\end{lemm}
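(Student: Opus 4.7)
The plan is to prove equality by double inclusion, using directly the definition of the diagnosis together with the characterising property of an abstraction, namely that $\hypo'(\beh) = \projhypo{\hypo(\beh)}$ for every $\beh \in \MOD$. The preference-preservation clause is not needed here, since the ordinary diagnosis $\Delta$ does not involve $\preceq$.

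For the forward inclusion $\projhypo{\Delta} \subseteq \Delta'$, I would take an arbitrary element $\delta' \in \projhypo{\Delta}$ and unfold the projection of a set: there exists some $\delta \in \Delta$ with $\projhypo{\delta} = \delta'$. By the definition of $\Delta$ there is a behaviour $\beh \in \MOD$ such that $\OBS(\beh)$ holds and $\hypo(\beh) = \delta$. Substituting into the abstraction property gives $\hypo'(\beh) = \projhypo{\hypo(\beh)} = \projhypo{\delta} = \delta'$, which witnesses $\delta' \in \Delta'$.

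For the reverse inclusion $\Delta' \subseteq \projhypo{\Delta}$, I would take $\delta' \in \Delta'$ and pick a witnessing behaviour $\beh \in \MOD$ with $\OBS(\beh)$ and $\hypo'(\beh) = \delta'$. The natural candidate preimage is $\delta := \hypo(\beh)$, which lies in $\Delta$ by construction (same $\beh$ witnesses it). The abstraction property then yields $\projhypo{\delta} = \projhypo{\hypo(\beh)} = \hypo'(\beh) = \delta'$, so $\delta' \in \projhypo{\Delta}$.

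There is no real obstacle: the result is essentially a diagram chase through the commuting triangle $\hypo' = \projhypof \circ \hypo$. The only subtle point is making sure to use the extended definition of $\projhypof$ on sets in the first inclusion (unpacking ``$\delta' \in \projhypo{\Delta}$'' to obtain an actual preimage $\delta$), and then to reuse the \emph{same} witnessing behaviour $\beh$ in both directions rather than looking for a new one. Antisymmetry of $\preceq$, finiteness, and well partial orderness are not invoked.
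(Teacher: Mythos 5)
Your proof is correct and follows essentially the same route as the paper's: both directions are a double-inclusion diagram chase through $\hypo' = \projhypof \circ \hypo$, reusing the same witnessing behaviour $\beh$ in each direction. Nothing is missing.
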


\begin{proof}
  We prove that $\Delta'$ is exactly the set of hypotheses 
  $\delta' = \projhypo{\delta}$ for some candidate $\delta \in \Delta$.  
  \begin{displaymath}
    \begin{array}{c @{\quad\Rightarrow\quad} c}
      \delta \in \Delta & 
      \exists \beh \in \MOD.\ \OBS(\beh) \land \hypo(\beh) = \delta\\
      & 
      \exists \beh \in \MOD.\ \OBS(\beh) \land 
      \hypo'(\beh) = \projhypo{\delta}\\
      & 
      \projhypo{\delta} \in \Delta'
    \end{array}
  \end{displaymath}
  Conversely, 
  \begin{displaymath}
    \begin{array}{c @{\quad\Rightarrow\quad} c}
      \delta' \in \Delta' &
      \exists \beh \in \MOD.\ \OBS(\beh) \land 
      \hypo'(\beh) = \delta'\\
      & 
      \exists \beh \in \MOD.\ 
      \hypo(\beh) \in \Delta \land \hypo'(\beh) = \delta'\\
      & 
      \exists \beh \in \MOD.\ 
      \hypo(\beh) \in \Delta \land \projhypo{\hypo(\beh)} = \delta'\\
      & 
      \exists \delta \in \Delta.\ 
      \projhypo{\delta} = \delta'
    \end{array}
  \end{displaymath}
\end{proof}

\begin{lemm}
  If $\hypospace'$ is an abstraction of $\hypospace$, 
  the projection on $\hypospace'$ of the minimal diagnosis in $\hypospace$ 
  is contained in the diagnosis in $\hypospace'$ 
  and contains the minimal diagnosis in $\hypospace'$: 
  $\Delta'_{\preceq'} \subseteq 
  \projhypo{\Delta_{\preceq}}
  \subseteq \Delta'$.  
\end{lemm}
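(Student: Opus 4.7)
The plan is to prove the two inclusions separately, using Lemma~\ref{lemm::abstractionofdiag} and the covering property that the minimal diagnosis has (by Definition~\ref{def:minimal-diagnosis}).

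For the right inclusion $\projhypo{\Delta_{\preceq}} \subseteq \Delta'$, I would simply observe that $\Delta_{\preceq} \subseteq \Delta$ by definition, so $\projhypo{\Delta_{\preceq}} \subseteq \projhypo{\Delta}$, and by Lemma~\ref{lemm::abstractionofdiag} we have $\projhypo{\Delta} = \Delta'$. This part is routine and really just chains existing facts.

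For the more substantial left inclusion $\Delta'_{\preceq'} \subseteq \projhypo{\Delta_{\preceq}}$, I would pick an arbitrary $\delta' \in \Delta'_{\preceq'}$ and construct a preimage in $\Delta_{\preceq}$. First, since $\Delta'_{\preceq'} \subseteq \Delta'$ and Lemma~\ref{lemm::abstractionofdiag} gives $\Delta' = \projhypo{\Delta}$, there exists some candidate $\delta \in \Delta$ with $\projhypo{\delta} = \delta'$. The minimal diagnosis $\Delta_{\preceq}$ covers $\Delta$, so there is a $\delta^{*} \in \Delta_{\preceq}$ with $\delta^{*} \preceq \delta$. The abstraction relation preserves the preference order, so $\projhypo{\delta^{*}} \preceq' \projhypo{\delta} = \delta'$. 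By the right inclusion already established, $\projhypo{\delta^{*}} \in \Delta'$. Since $\delta'$ is minimal in $\Delta'$ with respect to $\preceq'$ and $\projhypo{\delta^{*}} \preceq' \delta'$, antisymmetry of the partial order $\preceq'$ forces $\projhypo{\delta^{*}} = \delta'$. Hence $\delta' \in \projhypo{\Delta_{\preceq}}$, as required.

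The main subtlety — and really the only place where care is needed — is invoking the covering property correctly: without it, one might land on a $\delta \in \Delta$ that is not below any minimal candidate, and the argument would collapse. Definition~\ref{def:minimal-diagnosis} guarantees this covering, so the argument goes through cleanly. Note that the proof does not require the hypothesis spaces to be well partially ordered; it only uses the existence of the minimal diagnoses (assumed via Definition~\ref{def:minimal-diagnosis}) together with antisymmetry of $\preceq'$.
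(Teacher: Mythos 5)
Your proof is correct and takes essentially the same route as the paper's, which proves the first inclusion identically and establishes the second by the same chain of steps (lift $\delta'$ to a candidate $\delta \in \Delta$ via Lemma~\ref{lemm::abstractionofdiag}, use the covering property of $\Delta_{\preceq}$ to find a minimal candidate below $\delta$, project back using order-preservation of $\projhypof$, and invoke minimality of $\delta'$), merely phrased as a proof by contradiction rather than directly. The only nit is that your final equality $\projhypo{\delta^{*}} = \delta'$ follows from the minimality of $\delta'$ in $\Delta'$ together with the definition of the strict order $\prec'$, not from antisymmetry, but the step itself is sound.
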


\begin{proof}
  Since $\Delta_{\preceq} \subseteq \Delta$ then clearly
  $\projhypo{\Delta_{\preceq}} \subseteq \projhypo{\Delta} = \Delta'$.  

  Assume now that there exists a minimal candidate $\delta'_1$ in
  $\hypospace'$ such that $\delta'_1 \in \Delta'_{\preceq'} 
  \setminus \projhypo{\Delta_{\preceq}}$.  
  Then, by Lemma~\ref{lemm::abstractionofdiag}, 
  there exists a candidate $\delta_1 \in \Delta$ 
  such that $\projhypo{\delta_1} = \delta'_1$.  
  Furthermore, since $\delta'_1 \not\in \projhypo{\Delta_{\preceq}}$, 
  $\delta_1 \not\in \Delta_{\preceq}$.  
  Therefore, there must exist another candidate 
  $\delta_2 \in \Delta_{\preceq}$ 
  such that i) $\delta_2 \preceq \delta_1$ 
  (which is why $\delta_1 \not\in \Delta_{\preceq}$) 
  and ii) $\projhypo{\delta_2} = \delta'_2 \neq \delta'_1$ 
  (since $\delta'_2 \in \projhypo{\Delta_{\preceq}}$ but
  $\delta'_1 \not\in \projhypo{\Delta_{\preceq}}$).  
  However, by Lemma~\ref{lemm::abstractionofdiag}, 
  $\delta'_2$ is a candidate, and by the second condition on $\projhypof$, 
  $\delta'_2 \preceq \delta'_1$.  
  Hence, $\delta'_1$ is not a minimal candidate, 
  which contradicts its existence.  
\end{proof}

In other words, the projection of the minimal diagnosis $\Delta_{\preceq}$
in $\hypospace$ is a subset of (possibly equal to) the diagnosis in the
more abstract space $\hypospace'$, whose minimisation is the minimal
diagnosis in $\hypospace'$.

Returning to the example of the set and multiset hypothesis spaces,
given a minimal diagnosis $\Delta^{\mathrm{MHS}}_{\preceq} = \{
\{a \rightarrow 2, b \rightarrow 0\},
\{a \rightarrow 1, b \rightarrow 1\}, 
\{a \rightarrow 0, b \rightarrow 2\}
\}$ in the multiset hypothesis space, 
its projection on the set hypothesis space is
$\projhypo{\Delta^{\mathrm{MHS}}_{\preceq}} = \{\{a\}, \{a,b\}, \{b\}\}$.
The minimal diagnosis in the set hypothesis space is
$\Delta^{\mathrm{SHS}}_{\preceq} = \{\{a\},\{b\}\}$, 
which is the set of minimal elements of 
$\projhypo{\Delta^{\mathrm{MHS}}_{\preceq}}$.

This relation between the (minimal) diagnosis in a hypothesis space
$\hypospace$ and an abstraction $\hypospace'$ of $\hypospace$ has
implications for the complexity of computing it:
Since the (minimal) diagnosis in $\hypospace'$ can be computed 
from the (minimal) diagnosis in $\hypospace$, in time polynomial
in the size of the diagnosis, we can say that diagnosing in a more
refined hypothesis space is at least as hard as diagnosing in the
more abstract space.

\begin{theo}\label{theo::all-abstractions}
  The set of abstraction relations between hypothesis spaces shown
  in Figure~\ref{fig::abtractions} is correct and complete.
\end{theo}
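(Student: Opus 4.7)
The plan is to split the theorem into a correctness half (every relation shown in Figure~\ref{fig::abtractions} is a valid abstraction) and a completeness half (no pair of spaces outside the figure stands in an abstraction relation), and to leverage the fact that abstraction composes: if $\alpha_1: \hypospace \to \hypospace'$ and $\alpha_2: \hypospace' \to \hypospace''$ both satisfy the two conditions of the definition, then so does $\alpha_2 \circ \alpha_1$. For correctness this means it is enough to exhibit witnesses for a spanning set of arrows; for completeness, by contraposition, it is enough to refute a minimal set of non-arrows.

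For correctness I would introduce four basic projections and verify both conditions in each case. The identity map witnesses that MC-SHS and AP-SHS each abstract SHS---because $h \subseteq h'$ forces $|h| \le |h'|$ with strict inequality when the inclusion is strict, and forces $Pr(h) > Pr(h')$ since every fault has probability in $(0, 0.5)$---and that OMHS abstracts MHS, since the MHS preference trivially makes the OMHS implication vacuous. The support map $h \mapsto \{ f \mid h(f) > 0 \}$ witnesses that SHS abstracts MHS. The ``forget order'' map from a sequence to its multiset of occurrences witnesses that MHS abstracts SqHS. The ``non-nominal indicator'' (sending $h_0$ to the nominal element and every other hypothesis to the faulty one) witnesses that BHS abstracts every other space. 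All remaining claimed arrows are then obtained by composing two or three of these.

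For completeness I would argue that the first condition $\hypo'(\sigma) = \projhypo{\hypo(\sigma)}$ pins down $\projhypof$ on every hypothesis that is realised by some behaviour; since the theorem is a statement about the spaces themselves rather than about a fixed model, one may assume a model rich enough that $\hypo$ is surjective, so $\projhypof$ is forced to coincide with the natural semantic projection. It then remains to refute preference preservation for each non-edge. The key counterexamples are: with $F = \{a, b\}$ and $a \prec b$, the OMHS pair $\{a \mapsto 0, b \mapsto n\} \preceq_{\mathrm{OMHS}} \{a \mapsto 1, b \mapsto 0\}$ projects to incomparable supports $\{b\}$ and $\{a\}$, so OMHS does not abstract SHS, MC-SHS, or AP-SHS; $\{a\} \preceq_{\mathrm{MC}} \{b,c\}$ while $\{a\} \not\subseteq \{b,c\}$, so SHS does not abstract MC-SHS (and analogously for AP-SHS); and choosing $Pr(a) = 0.01$ with $Pr(b) = Pr(c) = 0.49$ gives $\{a\} \preceq_{\mathrm{MC}} \{b,c\}$ yet $\{a\} \not\preceq_{\mathrm{AP}} \{b,c\}$, with a symmetric probability choice refuting the reverse direction, establishing that MC-SHS and AP-SHS are mutually incomparable. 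Non-edges pointing in the refinement direction (for instance, SqHS abstracting MHS) are easier still: two behaviours sharing a multiset can produce different sequences, so no function $\projhypof$ can satisfy the first condition at all.

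The main obstacle will be completeness, both because of the bookkeeping required to rule out every non-edge systematically and because one has to be confident that inspecting only the natural semantic projection really is sufficient. The reduction to a surjective model handles the latter point for all seven finite hypothesis spaces in the theorem's scope, so the completeness argument ultimately boils down to a handful of targeted counterexamples of the kind above.
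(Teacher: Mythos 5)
Your proposal follows the same basic route as the paper's (sketch) proof: exhibit a witness projection for each arrow in the figure and refute the non-arrows. Your witnesses coincide with the paper's --- identity maps for SHS into MC-SHS and AP-SHS and for MHS into OMHS, the support map for MHS into SHS, the occurrence-count map for SqHS into MHS, and the nominal indicator for the arrows into BHS --- and your appeal to compositionality is exactly what the figure's ``directed path'' phrasing presupposes. Where you genuinely go beyond the paper is on completeness: the paper explicitly refutes only the SHS/OMHS pair (with essentially your example, $\{a\to 0,b\to 1\} \prec_{\mathrm{OMHS}} \{a\to 1,b\to 0\}$ projecting to incomparable supports), whereas you make explicit the step that a surjective model forces $\projhypof$ to be the natural semantic projection and you supply counterexamples for the remaining non-edges, including the mutual incomparability of MC-SHS and AP-SHS via a skewed probability assignment. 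That is a real strengthening of the written argument. One slip to fix: the incomparable-supports example refutes a preference-preserving map \emph{from} OMHS \emph{to} the set-based spaces, so what it shows is that SHS, MC-SHS and AP-SHS are not abstractions \emph{of} OMHS; the converse non-relations (OMHS is not an abstraction of SHS, etc.) instead follow from your final observation, since a set of faults does not determine occurrence counts and so no $\projhypof$ can satisfy the first condition. Both directions are covered by your two arguments, but the sentence as written attaches the counterexample to the wrong one. (Minor note: the paper's displayed AP-SHS order has its inequality reversed relative to the stated intent of maximising probability; your counterexamples, like the paper's own sketch, use the intended reading.)
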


\begin{figure}[ht]
  \begin{center}
    \begin{tikzpicture}
\draw (0, 0) node (BHS)  {BHS};
\draw (2, 1.1)   node (CASHS)  {CA-SHS};
\draw (2, 0.5)   node (APSHS)  {AP-SHS};
\draw (4, 0.8)   node (SHS)  {SHS};
\draw (3, -0.5)   node (OMHS) {OMHS};
\draw (6, 0)   node (MHS)  {MHS};
\draw (9, 0)   node (SqHS) {SqHS};
\draw[<-] (BHS) -> (CASHS);
\draw[<-] (CASHS) -> (SHS);
\draw[<-] (BHS) -> (APSHS);
\draw[<-] (APSHS) -> (SHS);
\draw[<-] (SHS) -> (MHS);
\draw[<-] (MHS) -> (SqHS);
\draw[<-] (BHS) -> (OMHS);
\draw[<-] (OMHS) -> (MHS);
\end{tikzpicture}
  \end{center}
  \caption{Abstraction relations between the hypothesis spaces of DES 
    presented in Subsection~\ref{sub::examplesofspaces}; 
    $\hypospace'$ is an abstraction of $\hypospace$ iff there
    is a directed path from $\hypospace$ to $\hypospace'$.}
  \label{fig::abtractions}
\end{figure}
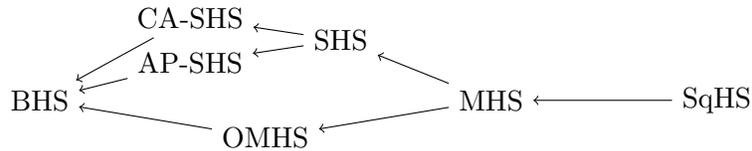

\begin{proof}(Sketch)
  {
  The abstraction function from *SHS to BHS 
  is $\projhypo{h} = \mathrm{nominal}$ iff $h = \emptyset$.
  The abstraction function from SHS to CA-SHS is the identity function,
  and the preference relation of SHS is indeed maintained in CA-SHS:
  $h \subseteq h' \Rightarrow \bigg(h = h' \ \lor \ |h| < |h'|\bigg)$.
  Similarly the preference between two SHS hypotheses is maintained
  when these hypotheses are interpreted as AP-SHS thanks to the fact
  that each fault has an a-priori probability below $0.5$,
  which implies that removing a fault from a hypothesis increases its a-priori probability.
  }
  The abstraction function from MHS to SHS has already been described.
  The abstraction function from SqHS to MHS counts the number of
  occurrences of each faulty event in the sequence.  
  The abstraction function from OMHS to BHS 
  is $\projhypo{h} = \mathrm{nominal}$ 
  iff $h(f) = 0$ for all faulty events $f$.  
  The abstraction function from MHS to OMHS is the identity function; 
  OMHS is an abstraction of MHS because its associated precedence relation 
  is more restrictive than that of MHS.  
 Finally the abstraction function from CHS to BHS 
 is $\projhypo{h} = \mathrm{nominal}$ iff $h = 0$.  

  There is no relation between SHS and OMHS 
  since SHS does not mention the number of occurrences as OMHS does, 
  while the mapping from OMHS to SHS does not maintain the preference
  relation: for instance, if $a \prec b$, then 
  $\{a \rightarrow 0, b \rightarrow 1\} 
  \prec_{\mathrm{OMHS}} \{a \rightarrow 1, b \rightarrow 0\}$, 
  while $\{b\} \not\prec_{\mathrm{SHS}} \{a\}$.  
\end{proof}


\section{Representing and Testing Sets of Hypotheses}
\label{sec::sets}
The diagnosis approach developed in this paper 
is based on an operation called the \textit{diagnosis test}.
A test, defined in Subsection~\ref{sub::diagnosistest}, decides 
whether a given set of hypotheses has a non-empty intersection
with the diagnosis, that is, whether any hypothesis in the set
is a candidate.
The set of hypotheses to be tested is not enumerated but represented
symbolically. To this end, we define in Subsection \ref{sub::properties}
\textit{hypothesis properties}, which are atomic statements used to
describe hypotheses. We show how to construct for any hypothesis space
a matching property space
that is ``sufficient'', in the sense that
any set of hypotheses that we need to test has a representation using
properties in this space. In Subsection \ref{sub::relevantsetsofproperties}
we discuss three specific types of tests, which we term ``diagnosis
questions'', that together are sufficient to implement the exploration
strategies we propose.
The strategies themselves are described in Section~\ref{sec::strategies}.

Here, and in the remainder of the paper, we consider only well
partially ordered hypothesis spaces. As shown earlier, this ensures
that the minimal diagnosis exists and is finite, so that the diagnosis
algorithm can output it in finite time. 

\subsection{The Diagnosis Test}\label{sub::diagnosistest}

Our diagnosis algorithms are based on an operation 
called the \textit{diagnosis test}. 
We assume the existence of an ``oracle'', called the \textit{test solver}, 
that is able to perform such tests.
We will describe several concrete implementations of test solvers,
for DES and different hypothesis spaces, in 
Section~\ref{sec::imple}.

A diagnosis test is the problem 
of deciding whether a given set $H \subseteq \hypospace$ 
contains a diagnosis candidate.  

\begin{defi}\label{defi::test}
  A \emph{diagnosis test} is a tuple $\langle \MOD, \OBS, H\rangle$ 
  where $\MOD$ is a system model, $o$ is an observation, 
  and $H \subseteq \hypospace$ is a set of hypotheses. 

  The \emph{result} of a test is either a hypothesis $\delta \in H$ 
  such that $\delta \in \Delta(\MOD,\OBS,\hypospace)$ if any such
  $\delta$ exists, 
  and $\bot$ otherwise (where $\bot \notin \hypospace$ 
  is a distinct symbol).
\end{defi}

Later, in Section \ref{sec::conflicts}, we will amend this definition
to allow the test solver to return a conflict instead of $\bot$,
but for now we limit ourselves to the simple version.  
Given a diagnosis problem $\langle \MOD,\OBS,\hypospace\rangle$, 
a test is defined solely by the hypothesis set $H$.  
If the test returns a candidate, we say it is successful; 
otherwise, we say it failed.  

\subsection{Hypothesis Properties}\label{sub::properties}

Some of the sets of hypotheses we will need to test to compute the
diagnosis can be very large, and some of them will even be infinite.
Therefore, we represent such sets symbolically, by a finite set of
\textit{hypothesis properties}. 
These properties are atomic statements about hypotheses. 
A set of properties represents those hypotheses 
that satisfy all properties in the set.

Not all sets of hypotheses will be represented in this way.
The minimal diagnosis returned by our algorithms is an explicitly
enumerated set of candidates, as are some other sets manipulated by
the algorithms during computation of the minimal diagnosis. However,
all hypothesis sets given to the test solver to test are represented
symbolically; that is, the test solver's input will be a set of
properties, rather than a set of hypotheses.
To distinguish the two types of sets, 
we will use $H$ for sets of hypotheses represented symbolically 
and $S$ for explicitly enumerated hypothesis sets.

\begin{defi}\label{defi::prop}
  A \emph{hypothesis property} (or simply, property) is an object $p$
  that implicitly represents a (possibly infinite) set of hypotheses
  $\hypos(p) \subseteq \hypospace$.  If hypothesis $h$ belongs to
  $\hypos(p)$, we say that $h$ exhibits property $p$, or that $p$ is a
  property of $h$. For any property~$p$, we also use $\neg p$ as a
  property, with the meaning $\hypos(\neg p)
  = \hypospace \setminus \hypos(p)$.  

  Given a hypothesis property space $\propspace$, we write
  $\props(h)\subseteq \propspace$ for the set of properties of $h$.
  A set $P\subseteq \propspace$ of properties implicitly represents 
  the set $\hypos(P)$ of hypotheses that exhibit all properties in $P$: 
  $\hypos(P) 
  = \{h \in \hypospace \mid P \subseteq \props(h)\} 
  = \bigcap_{p \in P} \hypos(p)$.  
\end{defi}

Simple examples of properties are that a given fault occurred; 
or did not; that it occurred at most once; or more than once;
that one type of fault occurred before another; and so on. 
We give more examples of properties later 
in this subsection.  

A priori, we can define properties to represent any set of
hypotheses. Given a set $H$ of hypotheses, we could define a
property $p_H$ such that $\hypos(p_H) = H$. 
However, implementing support for such ad hoc properties in the
test solver is not practical, and is also not very useful, since
it does not help in the formation of informative conflicts.
Useful properties are ones that allow the test solver to
automatically infer information that can be generalised.  
For instance, the property that states that a specific fault did
not occur is of this kind.

Next, we define the property space $\propspace$ that we will use in
the rest of this paper. $\propspace$ is derived from the hypothesis space 
$\hypospace$ considered and its preference relation, and is therefore
defined for any hypothesis space.
For each hypothesis $h \in \hypospace$, 
$\propspace$ contains the following two properties 
and their negations:
\begin{itemize}
\item 
  $\propdesc(h)$ is the property of being a descendant of hypothesis $h$,
  i.e., $\hypos(\propdesc(h)) = \{h' \in \hypospace \mid h \preceq h'\}$ 
  and 
\item 
  $\propanc(h)$ is the property of being an ancestor of hypothesis $h$,
  i.e., $\hypos(\propanc(h)) = \{h' \in \hypospace \mid h' \preceq h\}$.  
\end{itemize}

These properties may appear somewhat abstract; their concrete meaning
depends on the hypothesis space and preference order that underlies
them. To give a more concrete example, let us look at the set hypothesis
space (SHS): Let $h = \{f_1,f_2\} \subseteq F = \{f_1,\dots,f_4\}$ 
be the hypothesis that faults $f_1$ and $f_2$ took place, while the
other two faults ($f_3$ and $f_4$) did not.
Then 
\begin{itemize}
\item $\propdesc(h)$ is the property that $f_1$ and $f_2$ took place
 (not ruling out that other faults may also have happened);
\item $\neg\propdesc(h)$ is the property that not both $f_1$ and $f_2$
 occurred;
\item $\propanc(h)$ is the property that no fault other than $f_1$
 or $f_2$ took place, i.e., neither $f_3$ nor $f_4$; and
\item $\neg\propanc(h)$ is the property that some fault other than
 $f_1$ and $f_2$ took place, i.e., either $f_3$ or $f_4$ happened.
\end{itemize}

These properties are sufficient to represent all of the sets of
hypotheses that we will need to test in any of our strategies for
exploring the hypothesis space. In fact, we can give a more precise
characterisation of the hypothesis sets that can be represented with
conjunctions of properties in $\propspace$.
To do this, we first need to recall some standard terminology: Let
$\preceq$ be a partial order on some set $\setspace$; 
a subset $S$ of $\setspace$ is \textit{convex} 
iff for any two distinct elements $a,b \in S$,
every element $c$ such that $a \preceq c \preceq b$ is also in $S$.

\begin{theo}
Hypothesis set $H \subseteq \hypospace$ can be represented by a
finite conjunction of properties over $\propspace$ if and only if $H$
is convex.
\end{theo}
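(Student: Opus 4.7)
My plan for the forward direction ($\Rightarrow$) is to observe that each atomic property and its negation in $\propspace$ denotes a convex set, and that the intersection of convex sets is convex. The atoms $\hypos(\propdesc(h))$ and $\hypos(\propanc(h))$ are respectively the upper set $\{h' : h \preceq h'\}$ and the lower set $\{h' : h' \preceq h\}$, both trivially convex. For the negations, a short verification shows that the complement of an upper set is a lower set (if $y$ lies outside an upper set $U$ and $x \preceq y$, then $x \in U$ would force $y \in U$ by upward closure, a contradiction), and dually the complement of a lower set is an upper set; both are also convex. Since the intersection of convex sets is convex (a direct check from the definition), $\hypos(P) = \bigcap_{p \in P} \hypos(p)$ is convex for any finite $P$.

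For the reverse direction ($\Leftarrow$), my plan is to give, for a convex $H$, an explicit construction of a finite $P \subseteq \propspace$ with $\hypos(P) = H$. The starting point is the identity $H = \ancestors(H) \cap \descendants(H)$, where $\ancestors(H) = \{h : \exists h' \in H,\ h \preceq h'\}$ and $\descendants(H) = \{h : \exists h' \in H,\ h' \preceq h\}$: the inclusion $\supseteq$ follows directly from convexity (any $h$ sandwiched between two elements of $H$ lies in $H$), and $\subseteq$ is trivial. I intend to represent each half as a finite conjunction of negated properties by working with its complement. The complement $X_1 = \hypospace \setminus \ancestors(H)$ is upward-closed, and because $\hypospace$ is well partially ordered, $X_1$ has a finite set of minimal elements $\min X_1$, so $X_1 = \bigcup_{c \in \min X_1} \descendants(c)$ and hence $\ancestors(H) = \bigcap_{c \in \min X_1} \hypos(\neg \propdesc(c))$ is a finite conjunction. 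Symmetrically, I would aim to represent $\descendants(H)$ as $\bigcap_{d \in \max X_2} \hypos(\neg \propanc(d))$, where $X_2 = \hypospace \setminus \descendants(H)$.

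The hard part, I anticipate, is making the second half of this construction work in full generality: although $\max X_2$ is a finite antichain (a well partial order admits no infinite antichains), it need not cover $X_2$. A well partial order rules out only infinite descending chains and infinite antichains; since the downward-closed set $X_2$ may contain infinite \emph{ascending} chains, there can be elements of $X_2$ that lie above every maximal element of $X_2$, which the $\neg \propanc(d)$ properties alone cannot exclude. To handle this, the construction should augment the conjunction with positive properties $\propdesc(a)$ for carefully-chosen common lower bounds $a$ of $H$: each such property excludes every $h$ with $a \not\preceq h$, and this can cover the uncovered infinite-chain regions of $X_2$ (and, symmetrically, $\propanc(b)$ for upper bounds of $H$, when they exist). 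Verifying that for every convex $H$ some finite combination of all four property types suffices, and that the resulting conjunction equals $H$ exactly, is the delicate technical crux of the proof.
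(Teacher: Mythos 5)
Your first direction (representable $\Rightarrow$ convex) is complete, correct, and takes a genuinely different route from the paper: you argue structurally that $\hypos(\propdesc(h))$ and $\hypos(\propanc(h))$ are up- and down-sets, that their complements are down- and up-sets, that all four are therefore convex, and that convexity is closed under intersection. The paper instead proves the contrapositive by a four-way case analysis showing that any single property excluding the middle element of a non-convex triple $h_a \prec h_c \prec h_b$ must also exclude $h_a$ or $h_b$. Both arguments are sound; yours is shorter and makes the underlying reason visible.

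The genuine gap is in the converse direction, which you have not completed. Your handling of $X_1 = \hypospace \setminus \ancestors(H)$ is fine (it is upward closed, well-foundedness places every element above a minimal one, and well partial orderedness makes $\min_\preceq X_1$ finite), but for $X_2 = \hypospace \setminus \descendants(H)$ you only observe that $\max_\preceq X_2$ need not dominate $X_2$ and then gesture at adding $\propdesc(a)$ for ``carefully-chosen'' common lower bounds $a$ of $H$, without specifying the choice or verifying that finitely many properties suffice; as written this direction is a plan, not a proof. It is worth saying that you have put your finger on exactly the weak point: the paper's own construction covers this region with $\{\neg\propanc(h') \mid h' \in \max_\preceq(H^{\prec})\}$ and asserts that every element of $H^{\prec}$ is an ancestor of a maximal element of $H^{\prec}$ --- but a well partial order excludes infinite descending chains and infinite antichains, not infinite \emph{ascending} chains, so a downward-closed set need not be dominated by its maximal elements. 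The step you deferred cannot in fact be closed in the stated generality: take $\hypospace = \{a_0 \prec a_1 \prec \cdots\} \cup \{b,c\}$ with $a_i \prec b$ and $a_i \prec c$ for all $i$ and with $b,c$ incomparable; this is well partially ordered with least element $a_0$, and $H = \{b,c\}$ is convex, yet every property in $\propspace$ whose hypothesis set contains both $b$ and $c$ excludes only finitely many of the $a_i$ (the candidates are $\propdesc(a_i)$ and $\neg\propanc(a_i)$; $\propanc(\cdot)$ and $\neg\propdesc(\cdot)$ are unusable here), so no finite conjunction represents $H$. So the ``delicate technical crux'' you flagged is a real obstruction, not a routine verification; the construction does go through under extra assumptions (e.g., every hypothesis has finitely many ancestors, or $H$ is a singleton, where $\{\propdesc(h),\propanc(h)\}$ suffices), but not from convexity and well partial orderedness alone.
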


\begin{proof}
First, let $H$ be a convex hypothesis set.
If $H = \emptyset$, the claim holds trivially, since the empty set
can be represented by any contradictory set of properties, e.g.,
$\{\propdesc(h), \neg\propdesc(h)\}$.
Therefore, suppose $H$ is non-empty.

Let $H^{\prec} = \{ h' \not\in H \mid \exists h \in H : h' \preceq h \}$,
$H^{\succ} = \{ h' \not\in H \mid \exists h \in H : h \preceq h' \}$, and
$H^{\textrm{u}} = \{ h' \not\in H \mid \forall h \in H : h' \not\preceq h
 \, \textrm{and} \, h \not\preceq h' \}$, that is, $H^{\prec}$ is the set
of ancestors of hypotheses in $H$ that are not themselves in $H$,
$H^{\succ}$ is the set of descendants of hypotheses in $H$ that are not
themselves in $H$, and $H^{\textrm{u}}$ is the set of hypotheses that are
not ordered with respect to any element in $H$. Because $H$ is convex,
every hypothesis $h' \in \hypospace \setminus H$ must belong to one of
these three sets: if $h'$ is not unrelated to every hypothesis in $H$,
it must be either preferred to some $h \in H$, or some $h \in H$
preferred to it; thus it belongs to either $H^{\prec}$, $H^{\succ}$.
Furthermore, it cannot belong to both: if it did, there would be some
hypothesis $h \in H$ such that $h \preceq h'$ and some hypothesis
$h'' \in H$ such that $h' \preceq h''$; this contradicts the convexity
of $H$.

Construct the property set $P = \{ \neg\propanc(h') \mid h' \in
 \max_\preceq(H^\prec) \} \cup \{ \neg\propdesc(h') \mid h' \in
 \min_\preceq(H^\succ ) \} \cup \{ \neg\propdesc(h') \mid h' \in
 \min_\preceq(H^{\textrm{u}}) \}$. We claim that $P$ is finite and
that $\hypos(P) = H$.

That $\min_\preceq(H^\succ)$ and $\min_\preceq(H^{\textrm{u}})$ are finite
follows directly from that $\hypospace$ is well partially ordered.
For every hypothesis $h' \in H^\prec$ there is a $h \in H$ such that
$h' \preceq h$ (by construction) and such that $h$ is minimal in $H$.
Hence, the maximal elements in $H^\prec$ are exactly the minimal
elements in the set of parents of the hypotheses in $H$, and
thus this set is also finite by the well partial orderedness of
$\hypospace$. Since all three sets are finite, so is $P$.

If $h$ exhibits $\propanc(h')$ for some $h' \in H^\prec$, then
$h \preceq h' \prec h''$ for some $h'' \in H$. Since $h' \not\in H$,
by convexity, $h$ cannot be in $H$ either. Thus, all $h \in H$
exhibit $\neg\propanc(h')$ for all $h' \in H^\prec$.

If $h$ exhibits $\propdesc(h')$ for some $h' \in H^\succ$, then
$h'' \prec h' \preceq h$ for some $h'' \in H$. Analogously to the
previous case, because $h' \not\in H$ and $H$ is convex, $h$ cannot
be in $H$. Thus, all $h \in H$ exhibit $\neg\propdesc(h')$ for all
$h' \in H^\succ$.

Finally, if $h$ exhibits $\propdesc(h')$ for some $h' \in H^{\textrm{u}}$,
then $h' \preceq h$. $h$ cannot belong to $H$ because if it did, $h'$
would be related to some element in $H$, contradicting the construction
of $H^{\textrm{u}}$. Thus, all $h \in H$ exhibit $\neg\propdesc(h')$ for all
$h' \in H^{\textrm{u}}$.

In summary, each hypothesis $h \in H$ exhibits all properties in $P$.
Thus, $H \subseteq \hypos(P)$.

Now, let $h'$ be a hypothesis not in $H$. We know that $h'$ belongs
to at least one of $H^{\prec}$, $H^{\succ}$ or, $H^{\textrm{u}}$.
If $h' \in H^{\prec}$ then it is either maximal in $H^{\prec}$ or the
ancestor of a hypothesis that is maximal in $H^{\prec}$; in either
case, it exhibits $\propanc(h'')$ for some $h'' \in H^{\prec}$.
Likewise, if $h' \in H^{\succ}$ then it is either minimal in $H^{\succ}$
or the descendant of a hypothesis that is minimal in $H^{\succ}$, so
it exhibits $\propdesc(h'')$ for some $h'' \in H^{\succ}$. Finally,
$h' \in H^{\textrm{u}}$ then it is either minimal in $H^{\textrm{u}}$ or
the descendant of a hypothesis that is minimal in $H^{\textrm{u}}$, so
it exhibits $\propdesc(h'')$ for some $h'' \in H^{\textrm{u}}$. In all
three cases, $h'$ exhibits a property whose negation is in $P$, and
therefore $h' \not\in \hypos(P)$. 
Hence $\hypos(P) \subseteq H$.

So far, we have shown that if $H$ is convex, then it can be represented
by a finite conjunction of properties in $\propspace$.
To show the converse (only if), let $H$ be a non-convex set. This
means there are three hypotheses, $h_a$, $h_b$ and $h_c$, such that
$h_a \preceq h_c \preceq h_b$, $h_a, h_b \in H$ and $h_c \not\in H$.
(Since the three hypotheses are necessarily distinct, we have in fact
$h_a \prec h_c \prec h_b$.)

Suppose there is a property set $P$ such that $\hypos(P) = H$: $P$
must exclude $h_c$, that is, there must be at least one property
$p \in P$ that $h_c$ does not exhibit. There are only four ways to
construct such a property:

\noindent%
(1) $p = \propanc(h)$ for some strict ancestor $h \prec h_c$. But this
property also excludes $h_b$ from $\hypos(P)$, since $h_c \preceq h_b$.

\noindent%
(2) $p = \propdesc(h)$ for some strict descendant $h_c \prec h$. This
property excludes $h_a$, since $h_a \preceq h_c$.

\noindent%
(3) $p = \neg\propanc(h)$ for some descendant $h_c \preceq h$. (Note
that here, $h$ may be equal to $h_c$.) Again, this property excludes
$h_a$, since $h_a \preceq h_c$.

\noindent%
(4) $p = \neg\propdesc(h)$ for some ancestor $h \preceq h_c$ (which
may also equal $h_c$). This property excludes $h_b$, since
$h_c \preceq h_b$.

\noindent%
Thus, it is not possible to exclude $h_c$ from $\hypos(P)$ without
also excluding either $h_a$ or $h_b$. Therefore, since $H$ includes
both $h_a$ and $h_b$ but not $h_c$, $\hypos(P)$ cannot equal $H$.
\end{proof}

\subsection{Diagnostic Questions and Their Representations}
\label{sub::relevantsetsofproperties}

Next, we describe three different ``diagnostic questions''. Each
question is a specific test that provides a piece of information
about the diagnosis problem at hand. The strategies we present in
Section~\ref{sec::strategies} to explore the hypothesis space in search of the
minimal diagnosis use these questions as their main primitives
for interacting with the problem.

We show how each of the questions is formulated as sets of hypotheses
to test, and how those hypothesis sets can be represented by
(conjunctive) sets of properties. In most cases, the mapping from a
question to a test and from a test to its representation is
straightforward, but for some, there are alternative representations.
Which is the best representation depends in part on the strategy for
exploring the hypothesis space: For conflict-directed strategies
(introduced in Subsection~\ref{sec::conflicts}), the representation
should produce conflicts that are as general as possible. In addition,
for the preferred-first strategy (Subsection~\ref{sub::pfs}),
those conflicts should generate as few successors as possible.
Finally, the property set should facilitate the task of the test solver.

\begin{question}\label{ques::candidate}
  Is a given hypothesis $h$ a diagnosis candidate? ($\candidate{h}$)
  \begin{itemize}
  \item {\bf Test hypothesis set}: $H = \{h\}$.
  \item {\bf Representation by properties}: 
    $\{\propdesc(h)\} \cup \{\neg\propdesc(h') \mid h' \in \children(h)\}$.
  \item {\bf Test result}: yes or no. The test solver returns $h$
    if successful, and $\bot$ otherwise.
  \end{itemize}
\end{question}

Note that this question could also be represented by the property set
$\{\propdesc(h),\propanc(h)\}$ (since $h$ is the only hypothesis that
is both an ancestor and a descendant of $h$).
However, the representation given above is the better one for the
conflict-directed preferred-first strategy, and the basis of the one
that we use.
For particular hypothesis spaces, there can also be other, simpler but
equivalent ways of representing them by properties. We discuss some
alternatives in conjunction with the SAT-based implementation of a test
solver for discrete event system diagnosis in Subsection~\ref{sub::dessat}.

\begin{question}\label{ques::minimal}
  Is a given candidate $\delta$ minimal? ($\minimal{\delta}$)
  \begin{itemize}
  \item {\bf Test hypothesis set}: 
    $H = \{h \in \hypospace \mid h \prec \delta\}$; 
  \item {\bf Representation by properties}: 
    $\{\propanc(\delta), \neg\propdesc(\delta)\}$.  
  \item {\bf Test result}: Testing $H$ above amounts to asking,
   ``is there a candidate preferred to $\delta$?''. Thus, the
   answer to the original question ($\delta$ is minimal) is yes 
   if the outcome of the test is $\bot$. If $\delta$ is not
   minimal, the test solver returns a strictly preferred candidate.
  \end{itemize}
\end{question}

\begin{question}\label{ques::coverage}
  Given a finite and explicitly enumerated set of hypotheses $S$,
  does $S$ cover the diagnosis? ($\coverage{S}$)
  \begin{itemize}
  \item {\bf Test hypothesis set}: 
    $H = \{h \in \hypospace \mid \forall h' \in S : h' \not\preceq h\}$; 
  \item {\bf Representation by properties}: 
    $\{\neg\propdesc(h') \in \propspace \mid h' \in S\}$.  
  \item {\bf Test result}: As in Question \ref{ques::minimal},
    testing $H$ asks the reverse of the question; thus, the answer
    is yes ($S$ does cover the diagnosis) if the test solver returns
    $\bot$, and if $S$ does not cover the diagnosis, it returns a
    counter-example, in the form of a candidate not covered by $S$.
  \end{itemize}
\end{question}

{
It is possible to characterise the minimal diagnosis 
in terms of diagnosis questions.

\begin{theo}\label{theo::mindiagasthreequestions}
  A subset of hypothesis $S$ is the minimal diagnosis
  if and only if it satisfies the following three conditions:
  \begin{itemize}
  \item
    $\forall h \in S.\ \candidate{h}$;
  \item
    $\forall h \in S.\ \minimal{h}$;
  \item
    $\coverage{S}$.
  \end{itemize}
\end{theo}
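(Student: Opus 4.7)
The plan is a straightforward two-direction argument, where both directions unpack the three conditions against the definitions of $\Delta$, minimality with respect to $\preceq$, and coverage.

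For the forward direction, I would assume $S = \Delta_\preceq$ and verify each condition using earlier results. Condition (1) follows immediately because Theorem~\ref{theo::minimaldiagnosis} tells us $\Delta_\preceq = \min_\preceq(\Delta) \subseteq \Delta$, so every element of $S$ is a candidate. Condition (2) follows because every element of $\min_\preceq(\Delta)$ is by construction a minimal candidate: there is no $h' \in \Delta$ with $h' \prec h$, which is exactly what $\minimal{h}$ asserts. Condition (3) is Definition~\ref{def:minimal-diagnosis} applied to $\Delta_\preceq$: the minimal diagnosis covers the diagnosis.

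For the backward direction, I would assume that $S$ satisfies the three conditions and show the two inclusions $S \subseteq \Delta_\preceq$ and $\Delta_\preceq \subseteq S$. The first inclusion is immediate: condition (1) gives $S \subseteq \Delta$, and condition (2) combined with Theorem~\ref{theo::minimaldiagnosis} gives $S \subseteq \min_\preceq(\Delta) = \Delta_\preceq$. For the second inclusion, take any $\delta \in \Delta_\preceq \subseteq \Delta$. By $\coverage{S}$ applied to $\delta$, there exists $h \in S$ such that $h \preceq \delta$. By condition (1), $h$ is itself a candidate, i.e., $h \in \Delta$. Since $\delta$ is minimal in $\Delta$ and $h \preceq \delta$ with $h \in \Delta$, we must have $h = \delta$ (otherwise $h \prec \delta$ would contradict the minimality of $\delta$). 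Thus $\delta = h \in S$.

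The main obstacle I expect is just being careful with the interaction between conditions (1) and (3) in the backward direction: coverage alone only gives a preferred hypothesis in $S$, not equality, so condition (1) is essential to ensure that preferred hypothesis is itself a candidate, allowing the antisymmetry of $\preceq$ (via the minimality of $\delta$) to force equality. Everything else is essentially a direct appeal to the definitions and to Theorem~\ref{theo::minimaldiagnosis}.
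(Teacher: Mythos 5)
Your proof is correct and follows essentially the same route as the paper's: the forward direction is a direct unpacking of Definition~\ref{def:minimal-diagnosis} and Theorem~\ref{theo::minimaldiagnosis}, and your two inclusions in the backward direction correspond exactly to the paper's two contradiction cases (an element of $S$ outside $\min_\preceq(\Delta)$ would violate $\minimal{h}$; a minimal candidate outside $S$ would, via coverage and antisymmetry, force a non-candidate into $S$). Your remark that condition (1) is what lets coverage be upgraded to equality is precisely the crux the paper relies on in its second case.
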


\begin{proof}
  That the minimal diagnosis satisfies these three conditions
  is a direct consequence of its definition 
  (Definition~\ref{def:minimal-diagnosis}).

  Assume now that $S$ satisfies the conditions of the theorem.  
  We show that $S = \min_{\preceq}(\Delta)$ 
  which,
  by Theorem~\ref{theo::minimaldiagnosis}, concludes the proof.
  Assume that $S \neq \min_{\preceq}(\Delta)$; 
  this means that either $S \setminus \min_{\preceq}(\Delta) \neq \emptyset$ 
  or $\min_{\preceq}(\Delta) \setminus S \neq \emptyset$.  

  i) Let $h$ be a hypothesis of $S \setminus \min_{\preceq}(\Delta)$; 
  $h$ is a candidate (by definition of $S$) but a non-minimal one.  
  Consequently, there exists a minimal candidate 
  $\delta \in \min_{\preceq}(\Delta)$ such that $\delta \prec h$.  
  This contradicts the condition $\minimal{h}$. 

  ii) Let $\delta$ be a minimal candidate 
  of $\min_{\preceq}(\Delta) \setminus S$.   
  Since $S$ covers the diagnosis, 
  it must contain a hypothesis $h \preceq \delta$; 
  furthermore, since $\delta \not\in S$, 
  $h \prec \delta$.  
  Because $\delta$ is a minimal candidate, $h$ is not a candidate.  
  This contradicts the first condition 
  that all hypotheses in $S$ should be candidates.  
\end{proof}

Some of our diagnosis procedures will not rely on the diagnosis question $\minimal{h}$.
For these procedures, we will rely on the following theorem instead.

\begin{theo}\label{theo::plscharacterisationofmindiag}
  A subset of hypotheses $S$ is the minimal diagnosis 
  if and only if it satisfies the following three conditions: 
 \begin{itemize}
 \item 
   $\forall h \in S,\ \candidate{h}$; 
 \item 
   $\forall h,h' \in S,\ h' \not\prec h$;
 \item 
   $\coverage{S}$.
 \end{itemize}
\end{theo}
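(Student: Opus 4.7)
(Plan)
The plan is to mirror the structure of the proof of Theorem~\ref{theo::mindiagasthreequestions}, using the antichain condition (condition 2) of the present theorem in place of the minimality condition $\minimal{h}$. As before, the ``only if'' direction is immediate from Definition~\ref{def:minimal-diagnosis}: if $S = \Delta_\preceq$, then every element of $S$ is a (minimal) candidate, no two minimal candidates can be strictly comparable, and $S$ covers $\Delta$ by definition.

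For the ``if'' direction, I assume $S$ satisfies the three conditions and argue by contradiction that $S = \min_\preceq(\Delta)$, which by Theorem~\ref{theo::minimaldiagnosis} is the minimal diagnosis. The set difference $S \triangle \min_\preceq(\Delta)$ splits into two cases. First, if there is some $h \in S \setminus \min_\preceq(\Delta)$, then by condition~1, $h \in \Delta$, but $h$ is not minimal, so by Theorem~\ref{theo::minimaldiagnosis} there exists $\delta \in \min_\preceq(\Delta)$ with $\delta \prec h$. Since $\delta \in \Delta$, the coverage condition~3 gives some $h' \in S$ with $h' \preceq \delta \prec h$, hence $h' \prec h$ with both $h', h \in S$, contradicting condition~2.

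Second, if there is some $\delta \in \min_\preceq(\Delta) \setminus S$, then by coverage (condition~3) there exists $h \in S$ with $h \preceq \delta$. Because $\delta \notin S$ we have $h \neq \delta$, so $h \prec \delta$. By condition~1, $h \in \Delta$, which contradicts $\delta$ being minimal in $\Delta$. Both cases being impossible, $S = \min_\preceq(\Delta)$.

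The proof is essentially routine once the roles of the three conditions are identified; the only subtlety to watch for is that the antichain condition~2 alone is weaker than $\minimal{h}$ for each $h \in S$, so the derivation that $S$ contains no strictly dominated candidate must now go through coverage (to produce an element of $S$ below the putative minimal $\delta$) rather than directly via $\minimal{h}$. This is the one place where the argument genuinely differs from the proof of Theorem~\ref{theo::mindiagasthreequestions}, and I do not anticipate any further obstacle.
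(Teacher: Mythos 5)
Your proof is correct and follows essentially the same route as the paper's: the ``only if'' direction from Definition~\ref{def:minimal-diagnosis}, and the ``if'' direction by splitting $S \neq \min_\preceq(\Delta)$ into the two inclusion failures, with the only substantive change from the proof of Theorem~\ref{theo::mindiagasthreequestions} being exactly where you place it --- in the first case, deriving the contradiction from coverage plus the antichain condition rather than from $\minimal{h}$. The second case is verbatim the paper's argument, so there is nothing further to flag.
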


\begin{proof}
  The proof is essentially the same as that of Theorem~\ref{theo::mindiagasthreequestions}.
  The difference lies in the part i).
  We reuse the same notation, i.e., $h \in S \setminus \min_{\preceq}(\Delta)$
  and $\delta \in \min_{\preceq}(\Delta)$ is such that $\delta \prec h$.
  From the third condition, we know that there is $h' \in S$ such that $h' \preceq \delta$
  (actually, the stronger relation $h' \prec \delta$ holds since $\delta$ is not an element of $S$).
  Therefore the two elements $h$ and $h'$ from $S$ satisfy $h' \prec h$,
  which contradicts the second condition of Theorem~\ref{theo::plscharacterisationofmindiag}.
\end{proof}
}

\section{Diagnostic Properties in Different Settings}
\label{sec::prop}
In this section, 
we illustrate how the abstract definitions in the previous section
are instantiated in two different modeling frameworks: static and
dynamic (discrete event) systems. For the latter, we also show the
instantiation of different hypothesis spaces.

\subsection{Static Systems}

Static systems are systems whose state does not normally change
over time (except for becoming faulty). A typical example of a
static system is a Boolean circuit. 
Static systems diagnosis consists in identifying the set of faults
the system exhibits at a given point in time; there is
no notion of multiple occurrences of the same fault, nor of
temporal order between fault occurrences.
Hence, the diagnosis is normally defined over the set hypothesis space
(power set of the set $F$ of faults), 
with the preference order defined as the subset relation.

Static systems are typically modeled 
by a finite set of variables, each with their own domain of values.
The set of possible system behaviours, which is a subset of all
assignments of values to the variables, is defined by a set $\MOD$ of
constraints over the variables. (These can be expressed in
propositional or first-order logic, or some other constraint
formalism.)
The observation is also defined by a set $\OBS$ of constraints 
on the values of certain variables of the model 
(for instance $\texttt{voltage} = \textrm{low}$).  
Each possible fault $f \in F$ is modeled  by a Boolean variable
$v_f \in V_F$: 
this variable takes the value \textit{true} 
iff the fault is present.
The hypothesis associated with a behaviour 
is then the subset of faults $f$ 
whose corresponding variable $v_f$ is \emph{true}: 
$\hypo(\beh) = \left\{ f \in F \mid \beh \rightarrow v_f\right\}$.

A hypothesis $h$ can be represented 
by a propositional formula $\Phi_h = 
\bigwedge_{f \in h} v_f \land \bigwedge_{f \in F \setminus h} \neg v_f$.  
A hypothesis $h \subseteq F$ is a candidate 
if it is logically consistent with the model and observation,
i.e., if 
\begin{displaymath}
  \MOD, \OBS, \Phi_h
  \not\models \bot.  
\end{displaymath}
Performing a test is therefore equivalent to solving a constraint
satisfaction problem. (In case the model is represented by a
propositional logic formula, that means a propositional satisfiability
problem).

The property $p_H$ corresponding to a hypothesis set
$H \subseteq \hypospace$, i.e., such that $\hypos(p_H) = H$, 
is the logical disjunction of the formulas of the hypotheses in $H$: 
$\Phi_{p_H} = \bigvee_{h \in H} \Phi_h$. Of course, $\Phi_{p_H}$ can
also be represented by any formula that is equivalent to this.
It is easy to show that: 
\begin{itemize}
\item $\Phi_{\propdesc(h)} \equiv \bigwedge_{f \in h} v_f$.
  That is, the descendants of hypothesis $h$ (which are those
  hypotheses that $h$ is preferred or equal to) are exactly
  those that include all faults that are present in $h$, and
  possibly other faults as well.
\item $\Phi_{\propanc(h)} \equiv \bigwedge_{f \in F \setminus h} \neg v_f$.
  That is, the ancestors of hypothesis $h$ (which are those
  hypotheses that are preferred or equal to $h$) are exactly
  those that do not include any fault not present in $h$,
  and possibly exclude some of the faults that $h$ has.
\end{itemize}

\subsection{Discrete Event Systems}
\label{sub::prop::dynamic}

Event-driven dynamic systems are characterised by transitions (be they
discrete, timed or continuous) taking place over time.
To simplify the discussion of this example, we will consider discrete
untimed transitions, i.e., the classical discrete event system (DES)
framework \cite{cassandras-lafortune::99}.
However, the formulation below, and the diagnosis algorithms we
present in the next section, generalise to other types of dynamic
systems.

Let $\Sigma$ be the set of events that can take place in the system.
A behaviour $\beh \in \Sigma^\star$ of the system is a (finite) sequence
of events. Thus, the system model is a language
$\MOD \subseteq \Sigma^\star$.
It is common to distinguish in $\Sigma$ a subset of observable events
($\Sigma_o$), and to define the observable consequence of a behaviour
as the projection $\Pi_{\Sigma_o}(\beh)$
of the event sequence $\beh$ on $\Sigma_o$
\cite{sampath-etal::tac::95}. Then, an observation, expressed as a
predicate on behaviours, has the form
$\OBS(\beh) \equiv (\Pi_{\Sigma_o}(\beh) = w)$, for some fixed
$w \in \Sigma_o^\star$. More general forms of observation, such as
partially ordered or ambiguous occurrences of observable events,
can be specified similarly.
Whichever form it takes, we can say that the observation is another
language $\mathcal{L}_{\OBS} \subseteq \Sigma^\star$ 
such that a behaviour $\beh$
is consistent with it iff $\beh \in \mathcal{L}_{\OBS}$.
The faults are modeled by a subset $F \subseteq \Sigma$ of
(unobservable) events.
The set of behaviours that correspond to a hypothesis $h$ is also
a language: $\mathcal{L}_h = \{ \beh \in \Sigma^\star \mid 
\hypo(\beh) = h \}$. The precise definition of $\mathcal{L}_h$
depends on the type of hypothesis space.
 
In most cases, these languages are all regular, and hence
representable by finite state machines. However, such a
representation is normally too large to be feasible for
computational purposes, so in practice an exponentially compact
factored representation, such as a network of partially
synchronised automata \cite{pencole-cordier::aij::05}, Petri nets \cite{benveniste:etal:03}, or description in a modelling
formalism like PDDL \cite{haslum-grastien::spark::11}, is used
instead.
As we describe the hypotheses and properties for different spaces
in the following, we will simply give them as (regular) languages.

For the set hypothesis space (SHS), 
a hypothesis $h \subseteq F$ corresponds to the language
$\mathcal{L}_h = 
\bigcap_{f \in h} (\Sigma^\star \{f\} \Sigma^\star)
\ \cap \ 
\bigcap_{f \in F \setminus h} (\Sigma \setminus \{f\})^\star$. 
For the multiset hypothesis space (MHS), 
the language $\mathcal{L}_h$ of hypothesis $h$ is the intersection
$\bigcap_{f \in F} \mathcal{L}_f^{=h(f)}$, where for each $f$,
$\mathcal{L}_f^{=h(f)}$ contains all event sequences that have
exactly $h(f)$ occurrences of $f$.
For instance 
$\mathcal{L}_f^{=2} = \left(\Sigma \setminus \{f\}\right)^\star \{f\}
\left(\Sigma \setminus \{f\}\right)^\star \{f\}
\left(\Sigma \setminus \{f\}\right)^\star$.  
For the sequence hypothesis space (SqHS), 
$\mathcal{L}_h$ is the language of words whose projection over
$F$ is $h$: if $h = [f_1,\dots,f_k]$, then $\mathcal{L}_h = 
\left(\Sigma \setminus F\right)^\star \{f_1\}
\left(\Sigma \setminus F\right)^\star \dots 
\left(\Sigma \setminus F\right)^\star \{f_k\}
\left(\Sigma \setminus F\right)^\star$.  

A hypothesis $h$ is a candidate 
if the intersection $\MOD \cap \mathcal{L}_{\OBS} \cap \mathcal{L}_h$ 
is non-empty.
Essentially, any $\beh$ that belongs to this intersection 
is a possible behaviour of the system.
Thus, a test can be seen as a discrete-state reachability problem.
Given compact representations of the languages involved, tests can
be carried out using, for example, model checking
\cite{clarke-etal::00} or
AI planning \cite{ghallab-etal::00} tools.

The property $p_H$ is also a language, specifically
$\mathcal{L}_{p_H} = \bigcup_{h \in H} \mathcal{L}_h$.  
Immediate from the definition, the language of a set of properties
$P$ is the intersection of the properties' languages: 
$\mathcal{L}_P = \bigcap_{p \in P} \mathcal{L}_p$.  
Likewise, the language of the negation of a property 
is the complement of its language, 
i.e., $\mathcal{L}_{\neg p} = \Sigma^\star \setminus \mathcal{L}_p$.
Using these, the languages of properties $\propdesc(h)$
and $\propanc(h)$ can be built up according to their definitions.

However, just as in the case of static systems, we can also find
simpler, and more intuitive, equivalent expressions for
$\mathcal{L}_{\propdesc(h)}$ and $\mathcal{L}_{\propanc(h)}$.
For the set hypothesis space, these are:
\begin{itemize}
\item 
  $\mathcal{L}_{\propdesc(h)} = 
  \bigcap_{f \in h} (\Sigma^\star \{f\} \Sigma^\star)$.
  In other words, descendants of $h$ are those event sequences
  that contain at least one occurrence of each fault $f \in h$.
\item 
  $\mathcal{L}_{\propanc(h)} = 
  \bigcap_{f \in F \setminus h} (\Sigma \setminus \{f\})^\star$.
  The ancestors of $h$ are those event sequences that do not
  contain any occurrence of any fault event not in $h$.
\end{itemize}
For the multiset hypothesis space, the languages of these
properties can be written as follows:
\begin{itemize}
\item 
  $\mathcal{L}_{\propdesc(h)} = 
  \bigcap_{f \in F} \mathcal{L}_f^{\ge h(f)}$, 
\item 
  $\mathcal{L}_{\propanc(h)} = 
  \bigcap_{f \in F} \mathcal{L}_f^{\le h(f)}$, 
\end{itemize}
where $\mathcal{L}_e^{\ge x}$ is the language of event sequences
in which $e$ occurs at least $x$ times 
and $\mathcal{L}_e^{\le x}$ the language of sequences
where $e$ occurs at most $x$ times.
The former can be written as $\mathcal{L}_e^{\ge x} =
\Sigma^\star \mathcal{L}_e^{=x} \Sigma^\star$,
and the latter as $\bigcup_{i=0,\ldots,x} \mathcal{L}_e^{=i}$.

For the sequence hypothesis space, the properties can be written
as follows. Let $h = [f_1,\dots,f_k]$: 
\begin{itemize}
\item 
  $\mathcal{L}_{\propdesc(h)} = 
  \Sigma^\star \{f_1\}
  \Sigma^\star \dots 
  \Sigma^\star \{f_k\}
  \Sigma^\star$.
  In other words, the descendants of $h$ are all event sequences
  in which the sequence $h = [f_1,\dots,f_k]$ is ``embedded''.
\item 
  $\mathcal{L}_{\propanc(h)} = 
  \left(\Sigma \setminus F\right)^\star \{f_1\}^{0/1} 
  \left(\Sigma \setminus F\right)^\star \dots 
  \left(\Sigma \setminus F\right)^\star \{f_k\}^{0/1}
  \left(\Sigma \setminus F\right)^\star$.
  That is, the ancestors of $h$ are all event sequences that
  contain some substring of $h$ as an embedded subsequence,
  and that do not contain any fault event interspersed between
  the occurrences of these fault events.
\end{itemize}


\section{Diagnosis Strategies}
\label{sec::strategies}

We have cast the diagnosis problem as a search for the minimal
candidates in the space of hypotheses, and we have shown how this
search can query the problem using symbolic
tests. To
instantiate the framework into a concrete diagnosis algorithm, we must
also specify a strategy for the exploration of the hypothesis space,
and an implementation of the test solver that is appropriate for the
class of system models and the hypothesis space. We describe
implementations of test solvers in Section \ref{sec::imple}.

In this section, we outline two broad types of exploration
strategies: The first, which we call ``preferred-last'',
maintains a set of candidates, which is iteratively extended until
it covers the diagnosis. The second, which we call
``preferred-first'', searches in a top-down fashion, testing
at each step the most preferred hypothesis that has not yet been
rejected. 
In each case, we first present the basic strategy,
followed by refined versions. In particular, we show how
the preferred-first strategy can be enhanced through the use of
\textit{conflicts}, in a manner analogous to their use in 
\texttt{diagnose} \cite{reiter::aij::87}.  

\subsection{The Preferred-Last Strategy}

The preferred-last strategy (PLS) begins with an empty set $S$ of
candidates, and repeatedly tests whether this set covers the
diagnosis. This test is an instance of Question \ref{ques::coverage},
described in Subsection~\ref{sub::relevantsetsofproperties}.  
If the answer to the question is
negative, it leads to the discovery of a new candidate which is
added to $S$. When $S$ covers the diagnosis we know that it is a
superset of the minimal diagnosis, because it contains only candidates.
The minimal diagnosis is then extracted from $S$ by removing non-minimal
elements, as 
required by Theorem~\ref{theo::plscharacterisationofmindiag}. 
The strategy is summarised in Algorithm~\ref{algo::pls}. 

\begin{algorithm}[ht]
  \begin{algorithmic}[1]
    \STATE {\bf Input}: Model $\MOD$, observation $o$, 
    hypothesis space $\hypospace$
    \STATE $S := \emptyset$
    \WHILE{$\neg \coverage{S}$}
      \STATE Let $\delta$ be the candidate found by the coverage test.  
      \STATE $S := S \cup \{\delta\}$
    \ENDWHILE
    \RETURN $\min_{\preceq} (S)$
  \end{algorithmic}
  \caption{The preferred-last strategy (PLS)}
  \label{algo::pls}
\end{algorithm}

\begin{theo}
PLS returns the minimal diagnosis. Furthermore, if the hypothesis space is well partially ordered,
then PLS terminates.
\end{theo}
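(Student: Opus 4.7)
(Sketch/Plan)
The proof splits naturally into two parts: correctness (assuming termination) and termination (under well partial orderedness). The plan is to verify the three conditions of Theorem~\ref{theo::plscharacterisationofmindiag} to get correctness, and then to derive termination from the definitional property of well partial orders.

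For correctness, I would argue as follows. If and when the algorithm exits the loop, $S$ is a set all of whose elements were obtained as witnesses produced by a coverage test, so every $h \in S$ satisfies $\candidate{h}$. Moreover, the loop condition $\coverage{S}$ holds at exit. I would then pass from $S$ to $\min_\preceq(S)$ and check the three conditions of Theorem~\ref{theo::plscharacterisationofmindiag}. The first (candidacy) is inherited from $S$. The second ($\forall h, h' \in \min_\preceq(S).\ h' \not\prec h$) is immediate from the definition of $\min_\preceq$. For the third (coverage), I would show that $\min_\preceq(S)$ still covers $\Delta$: given any $\delta \in \Delta$, coverage of $\Delta$ by $S$ yields some $h \in S$ with $h \preceq \delta$, and since $S$ will be finite at termination, the descending chain of elements of $S$ below $h$ terminates at some minimal element of $S$, which is then $\preceq \delta$.

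For termination, I would suppose for contradiction that the loop runs forever, producing an infinite sequence $\delta_1, \delta_2, \ldots$ of hypotheses added to $S$. The key observation is that $\delta_j$ is drawn from the test set $H = \{h \in \hypospace \mid \forall h' \in S.\ h' \not\preceq h\}$ computed just before iteration $j$, where $S = \{\delta_1, \ldots, \delta_{j-1}\}$. Therefore, for every $i < j$, we have $\delta_i \not\preceq \delta_j$. I would then invoke the standard fact that a well partial order is a well-quasi-order: every infinite sequence of elements contains indices $i < j$ with $\delta_i \preceq \delta_j$. This directly contradicts the constraint above, so the loop must halt after finitely many iterations.

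The main (and really only) obstacle is the termination argument, and specifically the appeal to the well-quasi-order property of well partial orders. Since the paper already takes well partial orderedness as its standing assumption, this step is clean, but it is worth being explicit that ``no infinite antichain and no infinite strictly descending chain'' together yield the Ramsey-style ``every infinite sequence has an increasing pair'' property used above. Everything else is bookkeeping on top of Theorem~\ref{theo::plscharacterisationofmindiag}.
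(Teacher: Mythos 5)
Your proposal is correct and follows essentially the same route as the paper: the same three-condition check via Theorem~\ref{theo::plscharacterisationofmindiag} (including the finite descending chain argument for coverage by $\min_\preceq(S)$), and the same observation that $\delta_i \not\preceq \delta_j$ for $i<j$ in any non-terminating run. The only difference is in the final step of the termination argument, where you invoke the standard equivalence between well partial orders and well-quasi-orders (``every infinite sequence contains an increasing pair''), whereas the paper avoids citing this fact by explicitly extracting an infinite antichain from the bad sequence using well-foundedness; both are valid.
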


\begin{proof}
  Assume PLS terminates. We first show that the three conditions 
  of Theorem~\ref{theo::plscharacterisationofmindiag} are satisfied 
  by the returned set $R = \min_{\preceq}(S)$.  
  Observe that both $S$ and $R \subseteq S$ are finite 
  since $S$ is enumerated.  
  1) All hypotheses in $R$ are candidates.  
  2) Since $R$ is minimised, it contains no pair of hypotheses 
  that are comparable.  
  3) Let $\delta \in \Delta$ be a candidate.  
  Since $S$ covers $\Delta$, 
  there exists $h_1 \in S$ such that $h_1 \preceq \delta$.  
  If $h_1 \in R$, then $\delta$ is covered, 
  but we need to consider the general case where $h_1 \not\in R$.
  Because $h_1$ is in the set of non-minimal elements of $S$ and $S$ is finite,
  there is another hypothesis $h_2 \in S$ such that $h_2 \preceq h_1$ holds.
  This hypothesis $h_2$ could also not belong to $R$,
  in which case this hypothesis is also covered by another hypothesis $h_3$.
  This gives us a sequence of hypotheses $h_1 \succ h_2 \succ \dots$
  that all belong to $S$.
  Since $S$ is finite, there is a minimal hypothesis $h_k$ for this sequence,
  and this hypothesis belong to $\min_{\preceq}\ S$.  
  Thus $R$ covers the diagnosis.  

{
  Now, suppose that PLS does not terminate: This means PLS generates an
  infinite sequence of candidates, $\delta_1, \delta_2, \ldots$
  Because $\delta_j$ is generated from a test of coverage of $\{\delta_1,\dots,\delta_{j-1}\}$,
  we know that $\delta_i \not\preceq \delta_j$ for all $i < j$.
  Furthermore, since the preference order is well-founded,
  we know that any strictly descending subchain of this sequence is finite.
  Therefore, for any index $i$, there exists at least one index $k \ge i$
  such that $\delta_k \preceq \delta_i$ and $\delta_k$ is minimal in the sequence.
  We write $m(i)$ the smallest such index $k$.
  We note that for any index $j > m(i)$,
  $\delta_{m(i)}$ and $\delta_j$ are incomparable
  (as $\delta_{m(i)}$ is minimal in $S$ and $\delta_j$ is after $\delta_{min(i)}$ in the sequence).
  We also note $m(i+1) > i$ for any index $i$.
  Therefore, the set
  \begin{displaymath}
    S' = \{ \delta_{m(i)}, \delta_{m(m(i)+1)}, \delta_{m(m(m(i)+1)+1)}, \dots \}
  \end{displaymath}
  contains infinitely many mutually-incomparable candidates (hence, all minimal in $S'$),
  which contradicts the well partial orderness of $\preceq$.
}
\end{proof}

Although the PLS algorithm is guaranteed to eventually terminate, for
infinite hypothesis spaces there is no worst-case bound on the number of
iterations required before a covering set has been found (for finite
hypothesis spaces it is of course bounded by the size of the space).
Consider, for instance, the Sequence Hypothesis Space with only one fault $f$
and write $h_i = f^i$
(i.e., $h_i$ indicates that $f$ occurred precisely $i$ times);
assume that the diagnosis is $\Delta = \{h_0,h_1,h_2,\dots\} = \hypospace$
(any number of occurrences of $f$ might have happened);
then for any $i$, PLS could generate this sequence of candidates:
$h_i, h_{i-1}, h_{i-2}, \dots, h_0$.
All sequences will eventually end with $h_0$,
but there is no a-priori bound on their size until (in this instance) the first candidate is found.

PLS computes some candidates and then tries to improve them.
Sometimes, however, instead of improving known candidates,
it will go sideways and compute other irrelevant candidates.
The following example illustrates this problem of slow convergence.

\begin{exam}\label{ex::pls-slow-convergence}
  Consider a set hypothesis space over a large set of faults $F$, 
  and a diagnosis problem in which $\Delta = \hypospace$, i.e., all
  hypotheses are candidates 
  (this would be the situation for example 
  in a weak-fault model with nominal observations).  
  The minimal diagnosis is then the singleton $\Delta_{\preceq} = \{h_0\}$.

  All candidates that involve $\lfloor \frac{|F|}{2}\rfloor$ faults 
  are mutually incomparable, which means the coverage test can
  iteratively generate all of them, leading to an exponential-time
  computation.
\end{exam}

In order to speed up convergence of PLS, 
we add an extra step which ``refines'' each new candidate found into a minimal
one. The intuition is that if minimal candidates are generated early,
we can avoid exploring ``redundant'' options. 
For instance, in Example \ref{ex::pls-slow-convergence} above, 
the number of iterations will be at most $|F| + 1$.  

The refinement of a candidate $\delta$ is performed by testing
whether $\delta$ is minimal, i.e., asking Question
\ref{ques::minimal}. If  $\delta$ is not minimal, the test returns
a preferred candidate; this is repeated until the current candidate
is minimal.
The revised algorithm, called PLS+r, is shown in Algorithm~\ref{algo::plsr}.
Note that, in this algorithm, all elements inserted in $S$ are guaranteed
to be minimal. Thus, there is no need to remove non-minimal elements at
the end.

\begin{algorithm}[ht]
  \begin{algorithmic}[1]
    \STATE {\bf Input}: Model $\MOD$, observation $o$, 
    hypothesis space $\hypospace$
    \STATE $S := \emptyset$
    \WHILE{$\neg \coverage{S}$}
      \STATE Let $\delta$ be the candidate found by the coverage test.
      \WHILE{$\neg \minimal{\delta}$}
        \STATE Replace $\delta$ with the candidate 
        found by the minimality test.  
      \ENDWHILE
      \STATE $S := S \cup \{\delta\}$
    \ENDWHILE
    \RETURN $S$
  \end{algorithmic}
  \caption{The preferred-last strategy with refinement (PLS+r)}
  \label{algo::plsr}
\end{algorithm} 

\begin{theo}
PLS+r returns the minimal diagnosis. Furthermore, if the hypothesis
space is well partially ordered, then PLS+r terminates.
\end{theo}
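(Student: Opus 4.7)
The plan is to establish correctness via Theorem~\ref{theo::mindiagasthreequestions} and termination via the well partial order assumption, tracking a loop invariant that every element inserted into $S$ is a minimal candidate.

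For correctness, I would first prove an invariant: at the start of each iteration of the outer loop, every hypothesis in $S$ is a minimal candidate. The inner \textbf{while} loop only exits when $\minimal{\delta}$ holds, and each replacement inside the loop yields a strictly preferred candidate (the witness returned by the minimality test). Hence the $\delta$ that is finally inserted into $S$ is indeed a minimal candidate. Combined with the fact that any two such $\delta$'s are pairwise incomparable (shown for termination below), the invariant holds throughout. When the outer loop exits, $\coverage{S}$ is true, every $h \in S$ is a candidate, and every $h \in S$ is minimal. The hypotheses in $S$ are pairwise incomparable, which in particular implies that no $h' \in S$ satisfies $h' \prec h$. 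All three conditions of Theorem~\ref{theo::mindiagasthreequestions} (or equivalently Theorem~\ref{theo::plscharacterisationofmindiag}) are satisfied, so $S$ is the minimal diagnosis and may be returned without further minimisation.

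For termination of the inner loop, observe that each iteration produces a strict descendant of the current $\delta$, giving a strictly decreasing chain in $\hypospace$. Since a well partial order is well-founded, no infinite descending chain exists, so the inner loop terminates in finitely many steps with a minimal candidate $\delta^{\ast}$.

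The main obstacle is termination of the outer loop, and for this I would establish that the added minimal candidate $\delta^{\ast}$ is incomparable with every $h \in S$ already present. Suppose $h \preceq \delta^{\ast}$: since the refinement only moves downward we have $\delta^{\ast} \preceq \delta$, so $h \preceq \delta$, contradicting the fact that $\delta$ was returned by the coverage test on $S$ (which would have reported $\delta$ as covered). Conversely, suppose $\delta^{\ast} \preceq h$: since $h$ is itself a minimal candidate (by the invariant), antisymmetry would force $\delta^{\ast} = h$, but then $h \preceq \delta$, again contradicting non-coverage. Hence $S$ grows as an antichain of minimal candidates, i.e., $S \subseteq \min_{\preceq}(\Delta)$ at all times. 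By Theorem~\ref{theo::minimaldiagnosis}, $\min_{\preceq}(\Delta)$ is finite under the well partial order assumption, so the outer loop can execute at most $|\min_{\preceq}(\Delta)|$ times and must terminate. Combining correctness with termination yields the theorem.
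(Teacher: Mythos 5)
Your proof is correct, but it takes a genuinely different route from the paper's. The paper proves this theorem by reduction to the preceding PLS theorem: every candidate that PLS+r inserts into $S$ is one that the coverage test of plain PLS could itself have returned (the refined candidate $\delta^{\ast}$ is still uncovered by the current $S$, since $\delta^{\ast} \preceq \delta$), so an execution of PLS+r corresponds to a possible execution of PLS with a particular candidate-generation order; correctness and termination are then inherited, the refinement step ``changing only the order in which candidates are generated''. You instead give a direct, self-contained argument: an invariant that $S$ is an antichain of minimal candidates, termination of the inner loop by well-foundedness of $\preceq$, and termination of the outer loop because $S \subseteq \Delta_\preceq$ grows strictly while $\Delta_\preceq$ is finite by Theorem~\ref{theo::minimaldiagnosis}. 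Your route is longer but buys something the paper's does not: an explicit bound of $|\Delta_\preceq|+1$ on the number of coverage tests, which is exactly the convergence speed-up that motivates introducing PLS+r in the first place (cf.\ Example~\ref{ex::pls-slow-convergence}); the paper's reduction yields termination but no better bound than for PLS. One terminological slip: the replacement produced by the minimality test is a strict \emph{ancestor} of the current $\delta$ (a strictly preferred candidate), not a ``strict descendant''; your subsequent appeal to the absence of infinite strictly descending chains is nevertheless the right argument, and the rest of the reasoning is sound.
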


\begin{proof}
  Any candidate added to $S$ by PLS that is not also added by PLS+r
  is non-minimal, and therefore removed from the final set by PLS.
  Thus, PLS+r returns the same diagnosis. The refinement step
  effectively changes only the order in which candidates are
  generated. Since PLS terminates regardless of the order in which the
  candidates are generated in, PLS+r also terminates under the same
  condition.
\end{proof}

\subsection{The Preferred-First Strategy}
\label{sub::pfs}

The preferred-first strategy is based on the following intuition:
Because faults are rare events,
it can be expected that minimal candidates have small depth.  
Therefore, a sensible approach to the hypothesis space exploration 
is to start by testing the most preferred hypotheses; 
if those hypotheses are proven to be candidates,
then their descendants do not need to be explored,
since we are only interested in the minimal diagnosis. 
The basic version of the preferred-first strategy (PFS) 
is presented in Algorithm~\ref{algo::pfs}.   

\begin{algorithm}[ht]
  \begin{algorithmic}[1]
    \STATE {\bf Input}: Model $\MOD$, observation $o$, 
    hypothesis space $\hypospace$
    \STATE $\SR := \emptyset$
    \COMMENT{Will store the result}
    \STATE $\SO := \min_{\preceq}(\hypospace)$
    \COMMENT{i.e., $\{h_0\}$}
    \WHILE{$\SO \neq \emptyset$}
    \STATE $h := \textrm{pop}(\SO)$\label{line::pfs::pop}
    \IF{($\exists h' \in \SO \cup \SR: h' \preceq h$)}
    \label{line::pfs::subsumption}
    \STATE {\bf continue}
    \ENDIF\label{line::pfs::endsubsumption}
    \IF {$\candidate{h}$}\label{line::pfs::test}
    \STATE $\SR := \SR \cup \{h\}$\label{line::pfs::addingcandidate}
    \ELSE
    \STATE $\SO := \SO \cup \children(h)$\label{line::pfs::addingchildren}
    \ENDIF\label{line::pfs::endexpansion}
    \ENDWHILE
    \RETURN $\SR$
  \end{algorithmic}
  \caption{The preferred-first strategy (PFS).}
  \label{algo::pfs}
\end{algorithm}

Both $\SO$ and $\SR$ are enumerated sets of hypotheses and,
because any hypothesis has only a finite set of children, 
both sets are guaranteed to be finite.
The set $\SO$ contains all hypotheses that are ``promising'',
in the sense that their parents have been ruled out as
candidates but the hypotheses themselves have not yet been
tested.
Starting with the unique most preferred hypothesis $h_0$, 
the algorithm selects a current hypothesis 
$h$ to test, which it removes from $\SO$
and stores it in $\SR$ if it is a candidate; 
otherwise, it adds the children of $h$ to $\SO$.

PFS returns the correct diagnosis, but termination is only ensured if
the hypothesis space is finite.  To demonstrate these results, we
first prove the following lemma:

\begin{lemm}\label{lemm::coverage}
  Whenever the condition of the \textbf{while} loop in PFS is tested, 
  the diagnosis is covered by $\SO \cup \SR$, i.e.,
  $\forall \delta \in \Delta,\ \exists h \in \SO \cup \SR:\ h \preceq \delta$.
\end{lemm}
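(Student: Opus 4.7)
The plan is to prove the invariant by induction on the iteration count of the outer \textbf{while} loop. The base case is straightforward: at initialization, $\SO \cup \SR = \{h_0\}$, and since $h_0$ is the unique most preferred hypothesis, $h_0 \preceq h$ for every $h \in \hypospace$; in particular $h_0 \preceq \delta$ for every $\delta \in \Delta$, so coverage holds.

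For the inductive step, I would assume coverage holds at the start of an iteration and then examine the three possible outcomes after $h$ is popped from $\SO$. Let $\delta \in \Delta$ be arbitrary; by the inductive hypothesis there is some $g \in \SO \cup \SR$ (before the pop) with $g \preceq \delta$. If $g \neq h$, then $g$ remains in $\SO \cup \SR$ after the iteration regardless of which branch is taken, so $\delta$ stays covered. The only interesting case is $g = h$.

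When $g = h$, I would split on the three branches of the iteration. In the subsumption branch (lines~\ref{line::pfs::subsumption}--\ref{line::pfs::endsubsumption}), there exists $h' \in \SO \cup \SR$ with $h' \preceq h$; combined with $h \preceq \delta$, transitivity gives $h' \preceq \delta$ with $h'$ still present, so $\delta$ stays covered. In the candidate branch (line~\ref{line::pfs::addingcandidate}), $h$ is merely moved from $\SO$ to $\SR$, leaving $\SO \cup \SR$ unchanged as a set, so coverage of $\delta$ is trivially preserved. In the expansion branch (line~\ref{line::pfs::addingchildren}), $h$ is not a candidate, so $h \neq \delta$ and in fact $h \prec \delta$; here I would invoke the property, recalled in Subsection~\ref{sec::hypo-space-props}, that in a well partially ordered space every strict descendant of $h$ is a (possibly non-strict) descendant of some child of $h$. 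This yields a child $h'' \in \children(h)$ with $h'' \preceq \delta$, and since $h''$ is added to $\SO$, $\delta$ remains covered.

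The main obstacle is the expansion branch: it is the only step where $\SO \cup \SR$ loses a hypothesis that genuinely may have been the sole witness of coverage for some $\delta$, and it relies essentially on the well-partial-orderedness assumption stated at the beginning of Section~\ref{sec::sets}. The other branches are book-keeping: the subsumption case uses only transitivity of $\preceq$, and the candidate case uses only that $\SO \cup \SR$ is unchanged as a set. Combining the three branches closes the induction and establishes the lemma.
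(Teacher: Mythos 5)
Your proof is correct and follows essentially the same route as the paper's: induction on the loop iterations, with the same case split on whether the covering witness is the popped hypothesis $h$, and the same three sub-cases for the subsumption, candidate, and expansion branches. The only cosmetic difference is that you explicitly cite the well-partial-order property from Subsection~\ref{sec::hypo-space-props} to obtain a covering child in the expansion branch, whereas the paper re-derives it inline by taking a minimal element of $\{h'' \mid h \prec h'' \preceq \delta\}$; the substance is identical.
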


\begin{proof}
  We prove the lemma by induction.  

  Initially, 
  $\SO = \{h_0\}$ so the coverage property holds.  

  Assume that the coverage property is true for some $\SO \neq \emptyset$
  and some $\SR$. 
  We prove that the property still holds after a single execution of the
  loop body.  
  Let $h \in \SO$ be the hypothesis chosen at Line~\ref{line::pfs::pop}.  
  Consider a candidate $\delta$:
  by induction, we know that there exists $h' \in \SO \cup \SR$ 
  such that $h' \preceq \delta$.  
  If $h' \neq h$, then the condition still holds in the next iteration,
  since $h'$ remains in  $\SO \cup \SR$.
  On the other hand, if $h' = h$, then there are three cases: 
  
  \noindent i) If the condition on Line~\ref{line::pfs::subsumption} is true, 
  then there exists $h'' \in (\SO \setminus \{h\}) \cup \SR$ such that $h'' \preceq h \preceq \delta$.  
  Since $h''$ remains in $\SO \cup \SR$ at the start of the next iteration,
  candidate $\delta$ is covered.

  \noindent ii) If the condition on Line~\ref{line::pfs::test} is true, 
  $h$ is simply moved from $\SO$ to $\SR$, 
  so $\SO \cup \SR$ remains unchanged
  and the coverage property holds by induction.

  \noindent iii) If neither of these two conditions is satisfied,
  $h$ will be removed from $\SO$ and its children added instead.
  In this case, since $h \preceq \delta$ but $h$ cannot be equal
  to $\delta$, we have $h \prec \delta$. Hence, there exists
  at least one hypothesis $h''$ such that $h \prec h'' \preceq \delta$,
  and any minimal such hypothesis is a child of $h$.
  Hence candidate $\delta$ is covered at the next iteration by at least
  one child $h''$ of $h$ that has been added to $\SO$.
\end{proof}

\begin{theo}\label{theo::pfs::correct}
PFS returns the minimal diagnosis. Furthermore, if the hypothesis
  space is finite, then PFS terminates.
\end{theo}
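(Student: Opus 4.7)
The plan is to handle correctness and termination separately. For correctness, my strategy is to apply Theorem~\ref{theo::plscharacterisationofmindiag} to the set $\SR$ on termination: I need to verify (i) every element of $\SR$ is a candidate, (ii) no two elements of $\SR$ are strictly ordered by $\preceq$, and (iii) $\SR$ covers $\Delta$. Conditions (i) and (iii) are almost immediate. For (i), $\SR$ is only extended at line~\ref{line::pfs::addingcandidate}, after the test at line~\ref{line::pfs::test} has succeeded. For (iii), on termination $\SO = \emptyset$, so Lemma~\ref{lemm::coverage} gives that $\SR$ alone covers $\Delta$.

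The non-trivial step will be (ii): showing that $\SR$ is always an antichain. I would argue this by contradiction, supposing $h_1, h_2 \in \SR$ with $h_1 \prec h_2$, and splitting on the order in which they were inserted. In the easy case, $h_1$ is added first, and then when $h_2$ is popped the subsumption check at line~\ref{line::pfs::subsumption} would find $h_1 \preceq h_2$ in $\SR$ and skip $h_2$, a contradiction. The harder case is when $h_2$ is added first. There I would invoke Lemma~\ref{lemm::coverage} at the start of the iteration in which $h_2$ is processed: because $h_1 \in \Delta$, some $g \in \SO \cup \SR$ satisfies $g \preceq h_1$ at that moment. Chaining with $h_1 \prec h_2$ gives $g \prec h_2$, hence $g \neq h_2$, so $g$ remains in $\SO \cup \SR$ even after $h_2$ is removed from $\SO$ at line~\ref{line::pfs::pop}. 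The subsumption check then catches $g \preceq h_2$, contradicting the addition of $h_2$ to $\SR$. This case analysis, combining the coverage invariant with the precise semantics of the subsumption test, is the main obstacle of the proof.

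For termination under the assumption that $\hypospace$ is finite, I would use a simple counting argument. Each iteration of the main loop pops exactly one element from $\SO$, and elements enter $\SO$ only at two places: at initialisation (once, for $h_0$) and at line~\ref{line::pfs::addingchildren} after a failed test, which adds the children of the just-popped hypothesis. Hence each hypothesis $h \neq h_0$ is inserted into $\SO$ at most $|\parents(h)|$ times, and the total number of insertions is bounded by $1 + \sum_{h \in \hypospace} |\parents(h)|$. When $\hypospace$ is finite this bound is finite, so the loop must terminate after finitely many iterations.
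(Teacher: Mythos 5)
Your correctness argument is sound and is essentially the paper's own, modulo packaging: the paper proves directly that $\SR \subseteq \Delta_\preceq$ and $\Delta_\preceq \subseteq \SR$, whereas you route the same two ingredients -- Lemma~\ref{lemm::coverage} and the semantics of the subsumption test at Line~\ref{line::pfs::subsumption} -- through Theorem~\ref{theo::plscharacterisationofmindiag}. Your Case~B is exactly the paper's argument that no non-minimal candidate reaches Line~\ref{line::pfs::addingcandidate}, including the necessary observation that the covering hypothesis $g$ satisfies $g \prec h_2$, hence is not the element just popped and so still triggers the subsumption check.

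The termination argument has a gap. From Line~\ref{line::pfs::addingchildren} you get only that each insertion of $h$ into $\SO$ is caused by an \emph{expansion} of some parent of $h$; to conclude that $h$ is inserted at most $|\parents(h)|$ times you additionally need that each parent is expanded at most once, and nothing in your argument rules out a hypothesis being expanded, later re-entering $\SO$, and being expanded again (the naive bound ``insertions of $h$ $\le$ expansions of parents of $h$ $\le$ pops of parents of $h$ $\le$ insertions of parents of $h$'' is circular). This is precisely what the paper's proof establishes: when $h$ is expanded, the condition at Line~\ref{line::pfs::subsumption} has just failed, so no ancestor of $h$ is in $\SO \cup \SR$; since $h$ can only re-enter $\SO$ when one of its parents is expanded, and that parent could itself only re-enter $\SO$ via one of \emph{its} parents, some ancestor of $h$ would have to be sitting in $\SO$ at the moment $h$ was expanded -- contradiction. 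Hence an expanded hypothesis never re-enters $\SO$, each hypothesis is expanded at most once, and your counting then goes through. (In a finite space you could alternatively avoid the sharp bound by inducting on depth with the weaker recurrence that the number of insertions of $h$ is at most the sum, over parents $p$ of $h$, of the number of insertions of $p$, anchored at the single insertion of $h_0$.)
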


\begin{proof}
  Let $\SR$ be the result of the algorithm (assuming it terminates).  
  We prove that $\SR \subseteq \Delta_\preceq$,
  and then that $\Delta_\preceq \subseteq \SR$.  

  $\SR$ is initially empty and elements are added 
  (Line~\ref{line::pfs::addingcandidate})
  only when they are proved to be candidates: 
  hence $\SR \subseteq \Delta$.  
  Furthermore we know 
  from Lemma~\ref{lemm::coverage}
  that $\SR \cup \SO$ covers the diagnosis at all times.  
  Assume the non-minimal diagnosis candidate $h = \delta$ is added to $\SR$
  in some iteration.
  This means that $\delta$ is the hypothesis popped from $\SO$ in this
  iteration.
  Since $\delta$ is non-minimal, 
  there exists a preferred candidate $\delta' \preceq \delta$ 
  and this candidate is covered: $\exists h' \in \SR \cup \SO.\ h' \preceq \delta'$.  
  This, however, means that $h' \preceq \delta$, so $\delta$
  cannot have not have passed the test at Line~\ref{line::pfs::subsumption}. 
  Hence, $\SR$ contains only minimal candidates.  

  At the end of the algorithm, $\SO$ is empty, so $\SR$ alone covers
  the diagnosis. Hence, for any minimal candidate $\delta$, 
  there exists a hypothesis preferred to $\delta$ that appears in $\SR$.  
  But $\SR$ contains only minimal candidates, 
  and the only candidate $\delta'$ 
  that satisfies $\delta' \preceq \delta$ is $\delta$.  
  Therefore, all minimal candidates appear in $\SR$.  

To show termination, 
we prove that $\SO$ eventually becomes empty.  

  At each iteration, one hypothesis $h$ is removed from $\SO$;
  under certain conditions, the children of $h$ are also added
  to $\SO$. We show that when this happens, the hypothesis $h$
  that was just removed can never re-enter $\SO$ in any future
  iteration.

  A hypothesis $h$ can be added to $\SO$ only in the interaction in which
  one of its parents was removed from $\SO$. Thus, if no ancestor
  of $h$ is currently in $\SO$ then $h$ cannot be added to $\SO$ in
  any future iteration.

  Consider a hypothesis $h$, removed from $\SO$ in the current
  iteration, and suppose that the algorithm reaches
  Line~\ref{line::pfs::addingchildren}, so that children of $h$ are
  added to $\SO$. This means the condition on
  Line~\ref{line::pfs::subsumption} does not hold, which means
  there is no ancestor of $h$ in $\SO$ (or in $\SR$). Hence, $h$
  can never re-enter $\SO$.
\end{proof}

In general, there is no guarantee 
that PFS will terminate 
when the hypothesis space is infinite.

This is illustrated by the two examples below. In the first, lack of termination comes from \textit{useless} hypotheses which have no candidates among their descendants. As the second example shows even pruning those useless hypotheses is not sufficient to ensure termination.

\begin{exam}
Consider a SqHS with two faults $f_1$ and $f_2$, 
and suppose that the diagnosis is $\Delta = \{[f_1]\}$.  
Then, PFS will never end.  
Table~\ref{tab::pfsneverends} shows a possible evolution of PFS.  
PFS is unaware of the fact 
that no descendant of $[f_2,f_2,\dots,f_2]$ is a candidate, 
and will therefore explore this branch for ever.  

\begin{table}[ht]
  \begin{center}
  \begin{tabular}{| c | c | c |}
    \hline
    $\SO$ & $\SR$ & next element popped\\
    \hline
    $\{[]\}$ & $\{\}$ & $[]$\\
    $\{[f_1],[f_2]\}$ & $\{\}$ & $[f_1]$\\
    $\{[f_2]\}$ & $\{[f_1]\}$ & $[f_2]$\\
    $\{[f_1,f_2],[f_2,f_1],[f_2,f_2]\}$ & $\{[f_1]\}$ & $[f_1,f_2]$\\
    $\{[f_2,f_1],[f_2,f_2]\}$ & $\{[f_1]\}$ & $[f_2,f_1]$\\
    $\{[f_2,f_2]\}$ & $\{[f_1]\}$ & $[f_2,f_2]$\\
    $\{[f_1,f_2,f_2], [f_2,f_1,f_2], [f_2,f_2,f_1], 
        [f_2,f_2,f_2]\}$ & $\{[f_1]\}$ & $[f_1,f_2,f_2]$\\
    \dots & \dots & \dots \\
    \hline
  \end{tabular}
  \end{center}
  \caption{Possible evolution of PFS}
  \label{tab::pfsneverends}
\end{table}
\end{exam}

\begin{exam}
  Consider again a SqHS with two faults $f_1$ and $f_2$, 
  and consider that the diagnosis 
  is $\Delta = \{[f_1],[f_1,f_2],[f_1,f_2,f_2],\dots\}$, 
  i.e., any hypothesis that starts with $f_1$, 
  followed by any number of $f_2$.  
  Then, all hypotheses of the form
  $[f_2,\dots,f_2]$ have a child that is a candidate (the
  hypothesis with $f_1$ added to the beginning of the sequence),
  and hence none of them are useless. This makes it possible
  for PFS to explore an infinite path in the hypothesis
  space without 
  encountering any candidate, thus never
  terminating.
\end{exam}

However termination can also be guaranteed 
by pruning a different type of hypotheses.  
We call \textit{undesirable} those hypotheses that are not ancestors 
of any minimal candidates 
(formally, $\descendants(h) \cap \Delta_\preceq = \emptyset$).  
Again, assuming that all hypotheses have finite depth
then pruning undesirable hypotheses guarantees termination.  

In fact, we use an even stronger pruning condition, which discards
all undesirable hypotheses as well as some hypotheses that do not
satisfy the undesirability condition but are redundant because the
candidates they can lead to are covered by some other hypothesis.
We call these hypotheses \textit{non-essential}.
Pruning non-essential hypotheses works better than pruning only
the undesirable hypotheses for two reasons: First, because
the undesirability condition cannot be directly tested during
search, since the minimal diagnosis $\Delta_\preceq$ is not known;
the essentiality property, on the other hand, is straightforward to
test. Second, pruning more hypotheses, as long as it does not
compromise completeness of the returned diagnosis, is of course
preferable since it leads to less search.
Note that the part of the proof of Theorem~\ref{theo::pfse::correct}
that establishes termination does not actually depend on 
pruning non-essential hypotheses; it continues to hold also if only
undesirable hypotheses are pruned.
 
A hypothesis $h$ is said to be non-essential,
with respect to $\SO$ and $\SR$,
if all candidates $\delta$ that are descendants of $h$ 
are also descendants of some other hypothesis in $\SO \cup \SR$.
The proof of Theorem~\ref{theo::pfs::correct} 
relies on the coverage property which states that
for every candidate $\delta$ some $h \preceq \delta$
(either $\delta$ or one of its ancestors)
appears in $\SO \cup \SR$ at the start of every iteration.
Therefore, if $(\SO \setminus \{h\}) \cup \SR$ covers the diagnosis,
then $h$ can be safely discarded from $\SO$ without losing the coverage
property. Because $\SO \cup \SR$ always covers the diagnosis 
(by Lemma~\ref{lemm::coverage}), $h$ is non-essential exactly when
$(\SO \setminus \{h\}) \cup \SR$ also covers the diagnosis.
Note that an undesirable hypothesis $h$ is always non-essential
w.r.t.\ $\SO$ and $\SR$ if $\SO \cup \SR$ covers the minimal
diagnosis. Therefore, any undesirable hypothesis will be pruned
by skipping non-essential hypotheses in PFS.
The non-essentiality test is shown in Algorithm~\ref{algo::pfse}.
It is added to PFS between Lines~\ref{line::pfs::endsubsumption}
and~\ref{line::pfs::test}. We call the resulting algorithm PFS+e.

\begin{algorithm}[ht]
  \begin{algorithmic}[1]
  \IF{$\coverage{\SO \cup \SR}$}\label{line::pfse::nonessential}
    \STATE {\bf continue} 
    \COMMENT{Note that $h$ is no longer in $\SO$ at this stage.}
  \ENDIF
  \end{algorithmic}
  \caption{Addition to PFS between Line~\ref{line::pfs::endsubsumption} 
    and Line~\ref{line::pfs::test}
    for ignoring non-essential hypotheses.}
  \label{algo::pfse}
\end{algorithm}

\begin{theo}\label{theo::pfse::correct}
  PFS+e returns the minimal diagnosis.  
  Furthermore, if all hypotheses of the hypothesis space have finite depth, 
  then PFS+e terminates.  
\end{theo}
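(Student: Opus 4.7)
The plan is to mirror the proof of Theorem~\ref{theo::pfs::correct}, augmenting the induction to handle the new non-essentiality skip, and then to exploit the finite-depth hypothesis to push the termination argument through in a possibly infinite hypothesis space.

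For correctness, I would begin by re-establishing the coverage invariant of Lemma~\ref{lemm::coverage} for PFS+e. The original proof proceeds by case analysis on the body of the loop, and the only new case is the non-essentiality skip, which fires precisely when $\coverage{\SO \cup \SR}$ holds after $h$ has been popped; thus the invariant is preserved by definition of the skip condition. Given the invariant, the remainder of the argument carries over verbatim: hypotheses enter $\SR$ only after a successful candidate test, so $\SR \subseteq \Delta$; a non-minimal candidate $\delta$ popped from $\SO$ is always filtered by the subsumption check, because the invariant furnishes some $h' \in \SO \cup \SR$ covering a strictly preferred candidate $\delta' \prec \delta$, and that $h'$ in turn subsumes $\delta$; and at termination $\SO = \emptyset$, so $\SR$ alone covers $\Delta$, whence each $\delta \in \Delta_\preceq$ is covered by some $h \in \SR \subseteq \Delta_\preceq$, forcing $h = \delta$.

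For termination, I would prove two intermediate facts. First, every hypothesis reaching the expansion line (Line~\ref{line::pfs::addingchildren}) is an ancestor of some minimal candidate: if the non-essentiality test failed, then some candidate $\delta$ was left uncovered by $(\SO \setminus \{h\}) \cup \SR$; choosing a minimal $\delta^* \preceq \delta$, any covering element for $\delta^*$ would also cover $\delta$, so $\delta^*$ is uncovered as well, and the pre-pop invariant forces $h \preceq \delta^*$. Second, under the finite-depth assumption, the set $\ancestors(\Delta_\preceq)$ is finite. Here I would use that $\Delta_\preceq$ is finite by Theorem~\ref{theo::minimaldiagnosis}, that each $\delta^* \in \Delta_\preceq$ has finite depth $d^*$, and that every ancestor of $\delta^*$ has depth at most $d^*$ because any chain $h_0 \prec \dots \prec h'$ can be extended to a chain ending at $\delta^*$. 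Counting ancestors layer by layer — starting from $\{h_0\}$ at depth $0$, and noting that each layer consists of children of earlier layers and that in a well partially ordered space each hypothesis has only finitely many children — shows the ancestor set of each $\delta^*$ is finite. Combining the two facts, only finitely many expansions occur; each adds finitely many elements to $\SO$; so $\SO$ is popped finitely many times and the loop terminates.

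The hardest part will be step two: concluding that every ancestor of $\delta^*$ has depth bounded by $\text{depth}(\delta^*)$, and enumerating them validly layer by layer. Once this is in place, the remaining bookkeeping — children added to $\SO$ that are not themselves ancestors of a minimal candidate will be popped and skipped but never expanded, so they contribute only to the finite total of pops — is routine. This decomposition also explains the remark made after the theorem statement: step one still goes through if we only skip the weaker class of \emph{undesirable} hypotheses, since the witness $\delta^*$ it produces is already a minimal-candidate descendant of $h$.
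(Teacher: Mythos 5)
Your proof is correct and shares the paper's two key ingredients, but assembles the termination argument along a more direct route. The correctness half matches the paper's: the non-essentiality skip preserves the coverage invariant of Lemma~\ref{lemm::coverage} by construction, and the rest of the argument of Theorem~\ref{theo::pfs::correct} carries over unchanged. For termination, both you and the paper prove that the set $A$ of ancestors of minimal candidates is finite (by the same layer-by-layer induction on depth, using finiteness of $\Delta_\preceq$ and the finiteness of each hypothesis' set of children in a well partially ordered space), and both isolate the fact that only members of $A$ ever reach the expansion line --- you prove it positively (an expanded $h$ must be the sole cover of some uncovered minimal candidate $\delta^*$, hence $h \preceq \delta^*$), while the paper proves the contrapositive (a popped $h \notin A$ necessarily fails the essentiality test, which is claim (iii) of its proof). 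Where you diverge is the final assembly: the paper tracks the sequence of intersections $\SO@i \cap A$ and derives a contradiction from a recurring intersection via two auxiliary claims, whereas you simply count --- finitely many expansions, each contributing finitely many insertions into $\SO$, hence finitely many pops. Your version is cleaner, but it silently uses one additional fact: that no hypothesis is expanded twice, since otherwise ``finitely many distinct expanded hypotheses'' does not bound the number of expansion events, and hence does not bound the number of insertions into $\SO$. This is exactly the ``once expanded, $h$ can never re-enter $\SO$'' argument from the termination part of Theorem~\ref{theo::pfs::correct}, which still applies to PFS+e because expansion requires first passing the subsumption check; you should state explicitly that you are importing it. With that one step made explicit, your argument is complete and arguably easier to follow than the published one.
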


\begin{proof}
  That PFS+e returns the minimal diagnosis can be shown
  simply by proving that the coverage property (Lemma~\ref{lemm::coverage}) 
  still holds.
  We now have a fourth case in the induction step of proof: If
  $h$ fails the non-essentiality test, then it is discarded without
  its children being added to $\SO$. However, this test 
  actually checks the very property 
  that we want to enforce, that $\SO \cup \SR$ covers the diagnosis,
  so when it triggers and returns to start of the {\bf while} loop,
  the coverage property also holds.

  Next, we consider termination.
  Let $\SO@0,\dots,\SO@i,\dots$ represent the content of the set $\SO$ 
  at the start of each iteration of the {\bf while} loop, when the
  loop condition is evaluated.
  We need to show that $\SO@i$ will eventually be empty.
  To do this, we make use of the following three facts 
  which we then prove:\\
  i) Let $A = \{ h \in \hypospace \mid 
  \exists \delta \in \Delta_\preceq.\ h \preceq \delta\}$:
  $A$ is finite.\\
  ii) $\SO@i \cap A = \SO@k \cap A \Rightarrow 
  \forall j \in \{i,\dots,k\}.\ \SO@j \cap A = \SO@i \cap A$. \\
  iii) $\SO@i \cap A = \SO@(i+1) \cap A \Rightarrow 
  \SO@(i+1) \subset \SO@i$.

  Assume the sequence $\SO@i$ goes on forever. 
  By (iii) and because $\SO$ is always finite,
  the intersection of $\SO$ and $A$ changes infinitely often.  
  Furthermore, by (i) there is only a finite number of intersections 
  of $\SO$ and $A$, which means that the same intersection 
  must eventually reappear. This contradicts (ii).

  It remains to prove claims (i) -- (iii).

  i) 
  First, note that that
  $A = \bigcup_{\delta \in \Delta_\preceq} \ancestors(\delta)$.
  Because $\Delta_\preceq$ is finite, $A$ is finite 
  iff the set of ancestors of every minimal candidate is finite.  
  Consider a minimal candidate $\delta$. 
  Since $\delta$ has finite depth (assumption of the theorem), 
  its ancestors all have depth $d$ or less.  
  We prove, by induction, that the set of hypotheses of depth
  $d$ or less is finite, for any $d$.
  This is true for $d = 1$, since only $h_0$ has depth $1$.  
  Assume it is true for $d-1$.  
  By definition of depth, every hypothesis $h$ of depth $d$ 
  is a child of some hypothesis $h'$ of depth $d-1$.  
  Since there is a finite number hypotheses $h'$ at depth $d-1$,
  by inductive assumption, and each of them has a finite number
  of children (because the hypothesis space is well partially
  ordered), there can only be a finite number of hypotheses with
  depth $d$. Thus, the number of hypotheses of depth $d$ or less
  is also finite.

  ii) 
  Assume $i < j < k$ such that 
  $\SO@i \cap A = \SO@k \cap A$ and 
  $\SO@i \cap A \neq \SO@j \cap A$.  
  Let $A' \subseteq A$ be the set of hypotheses $h$ 
  that are added at some point between iteration $i$ and iteration $k$,
  that is, $A' = \{h \in A \mid \exists \ell \in \{i,\dots,k-1\}.\ 
  h \not\in \SO@\ell \land h \in \SO@(\ell+1)\}$.
  Clearly $A'$ is not empty: Since $\SO@i \cap A \neq \SO@j \cap A$,
  some hypothesis has either been added between $i$ and $j$, or
  some hypothesis has been removed between $i$ and $j$, in which case
  it must added again before iteration $k$.
  Let $h$ be a hypothesis that is minimal in the set $A'$.
  Since $h$ is added to $\SO$ at some point 
  between iteration $i$ and iteration $k$, 
  a parent $h'$ of $h$ must be removed at the same iteration 
  (the only way to add an element to $\SO$ 
  is through Line~\ref{line::pfs::addingchildren}).
  However, if $h'$ is removed from $\SO$, 
  it must be added again to $\SO$ at some later point, as otherwise
  $\SO@k \cap A$ could not equal $\SO@i \cap A$. This means $h'$
  also belongs to $A'$, and since it is a parent of $h$, this
  contradicts the choice of $h$ as a minimal hypothesis in $A'$.

  iii) 
  Consider an iteration $i$ such that $\SO@i \cap A = \SO@(i+1) \cap A$.
  Because $\SO \cap A$ is unchanged, 
  the hypothesis $h$ chosen at iteration $i$ 
  does not belong to $A$.
  Any hypothesis not in $A$ is, by definition, undesirable, since $A$
  contains all ancestors of all minimal candidates. Thus, since
  $\SO@i \cup \SR@i$ covers the minimal diagnosis (by Lemma
  ~\ref{lemm::coverage}), so does $(\SO@i \cap A) \cup \SR@i$, and
  consequently so does $(\SO@i \setminus \{h\}) \cup \SR@i$.
  Thus, $h$ fails the essentiality test in PFS+e, so no children
  of $h$ are added to $\SO$ and we have $\SO@(i+1) = \SO@i \setminus \{h\}$.
\end{proof}

\subsection{Conflict-Based Strategy}
\label{sec::conflicts}

The conflict-based strategy is an improvement of PFS.  
The idea is to extract the core reason 
why hypothesis $h$ is not a candidate, 
in order to reduce the number of successors of $h$ 
that need to be inserted in the open list.  

We define a conflict as an implicit representation of a set of hypotheses 
that are not candidates.  
\begin{defi}
  A \emph{conflict} $C$ is an object that represents 
  a set $\hypos(C)$ of hypotheses 
  that does not intersect the diagnosis: 
  \begin{displaymath}
    \hypos(C) \cap \Delta = \emptyset.  
  \end{displaymath}
\end{defi}

We now assume that the test solver 
is not only able to decide whether the diagnosis 
intersects the specified set of hypotheses, 
but also to return a conflict in case the test fails.  
The following definition of a test result 
extends Definition~\ref{defi::test}.  

\begin{defi}
  The \emph{result} of a test $\langle \MOD, o, H\rangle$ 
  is either a hypothesis $h \in \Delta(\MOD,o,\hypospace) \cap H$ 
  or a conflict $C$ such that $H \subseteq \hypos(C)$.  
\end{defi}

In the worst case, i.e., if the test solver is not able 
to provide useful information, 
the conflict can be defined such that $\hypos(C) = H$.  

Two problems need to be solved at this stage: 
i) how do we compute conflicts, 
and ii) how do we use conflicts for diagnosis.  
We first concentrate on the second issue.  

\subsubsection{Using Conflicts for Diagnosis}

A conflict can be useful in two different ways.  

First, a conflict can be used to avoid certain tests.  
For instance, let $h$ be a hypothesis, 
the candidacy of which we want to test, 
and let $C$ be a conflict that was previously computed.  
If $h \in \hypos(C)$, 
then $h \not\in \Delta$ (by definition of a conflict).  
Therefore, inclusion in a conflict 
can serve as an early detection 
that a hypothesis is not a candidate.  

The second use of a conflict is to reduce the number of successors 
that need to be generated after a candidacy test failed.  
Again, let $h$ be a hypothesis 
and let $C$ be a conflict 
such that $h \in \hypos(C)$.  
Remember that the correctness of PFS 
relies on the fact 
that all diagnosis candidates are covered 
by a hypothesis from the open list 
or by a hypothesis from the already discovered minimal candidates.  
When $h$ is proved to be a non-candidate, 
we no longer need to get $h$ covered, 
but we need to cover the set $S$ of all strict descendants of $h$, 
which is the reason why 
Algorithm~\ref{algo::pfs} includes all the minimal elements of $S$ 
(the children of $h$) in the open list.  
Now however, not only do we know that $h$ is not a candidate, 
but the same also applies to all the hypotheses of $\hypos(C)$.  
Therefore, we may include in the open list 
the minimal elements of $S \setminus \hypos(C)$.  
This is illustrated with Algorithm~\ref{algo::conflict} 
where the conflict is used to compute the set of successors.  
We call PFS+ec (resp.\ PFS+c) the variant of PFS+e (resp.\ PFS) that uses conflicts.  

\begin{algorithm}
  \begin{algorithmic}
  \IF {$\candidate{h}$}
    \STATE $\SR := \SR \cup \{h\}$
  \ELSE
    \STATE Let $C$ be the conflict generated by the test.  
    \STATE $\SO := \SO \cup \min_{\preceq}
    (\descendants(h) \setminus \hypos(C))$\label{line::pfs+ec::update}
  \ENDIF
  \end{algorithmic}
  \caption{Replacement of the If statement 
  Lines~\ref{line::pfs::test}-\ref{line::pfs::endexpansion} of Algorithm~\ref{algo::pfs}.}
  \label{algo::conflict}
\end{algorithm}

\begin{theo}\label{theo::pfsecterminates}
  PFS+ec returns the minimal diagnosis.  
  Furthermore if all hypotheses of the hypothesis space 
  have finite depth, then PFS+ec terminates.  
\end{theo}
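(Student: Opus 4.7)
My plan is to adapt the proof of Theorem~\ref{theo::pfse::correct} by inserting the conflict-based update into the inductive coverage argument, and by observing that the termination skeleton only needs a minor modification in one of its three claims.

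For correctness, I would first re-establish Lemma~\ref{lemm::coverage} for PFS+ec. Only the case of Line~\ref{line::pfs+ec::update} is new: instead of $\children(h)$ we now add $\min_\preceq(\descendants(h) \setminus \hypos(C))$. Since $\hypos(C) \cap \Delta = \emptyset$ by definition of a conflict, any candidate $\delta$ strictly above the just-popped $h$ belongs to $\descendants(h) \setminus \hypos(C)$, so by well partial orderness of $\hypospace$ there is a minimal element $h^\star$ of this set with $h^\star \preceq \delta$; this $h^\star$ is added to $\SO$, so $\delta$ remains covered. The remaining cases of the invariant (subsumption, essentiality, moving $h$ into $\SR$) are inherited unchanged from PFS and PFS+e. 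Once the invariant holds, the inclusions $\SR \subseteq \Delta_\preceq$ and $\Delta_\preceq \subseteq \SR$ transcribe essentially verbatim from Theorems~\ref{theo::pfs::correct} and~\ref{theo::pfse::correct}: $\SR$ is still populated only by tested candidates that passed the subsumption check at Line~\ref{line::pfs::subsumption}, so it contains only minimal candidates, and upon termination (when $\SO$ is empty) it covers $\Delta$ on its own.

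For termination, I would re-run the three-claim argument from the proof of Theorem~\ref{theo::pfse::correct}. Claim (i), that $A = \bigcup_{\delta \in \Delta_\preceq} \ancestors(\delta)$ is finite, depends only on finite depth of every hypothesis plus well partial orderness, so it carries over without change. Claim (iii), that $\SO \cap A$ being unchanged in one step forces $\SO$ to shrink strictly, reuses the essentiality check, which is identical in PFS+ec: if $\SO \cap A$ is unchanged, the popped $h$ must lie outside $A$, hence is undesirable, hence $(\SO@i \setminus \{h\}) \cup \SR@i$ still covers $\Delta_\preceq$ (and therefore $\Delta$), the essentiality check succeeds, and no conflict-based successors are added to $\SO$.

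The only step requiring real rework is Claim (ii). In PFS+e the proof picked a hypothesis $h$ minimal in $A'$ and used that a \emph{parent} $h'$ of $h$ must have been popped at the iteration when $h$ was added. In PFS+ec the hypothesis $h^{\mathrm{pop}}$ popped at that iteration is merely a strict ancestor of $h$, because Line~\ref{line::pfs+ec::update} inserts minimal elements of $\descendants(h^{\mathrm{pop}}) \setminus \hypos(C)$. The argument still goes through: $h^{\mathrm{pop}} \prec h$ together with $h \in A$ gives $h^{\mathrm{pop}} \in A$, and since $h^{\mathrm{pop}}$ leaves $\SO$ between iterations $i$ and $k$, the equality $\SO@i \cap A = \SO@k \cap A$ forces $h^{\mathrm{pop}}$ to be re-added at some point, so $h^{\mathrm{pop}} \in A'$, contradicting the minimality of $h$ in $A'$. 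The main obstacle I anticipate is exactly this replacement of ``parent'' by ``strict ancestor'' in the minimality argument, and verifying that every ancestor-descendant step used in the old proof still applies.
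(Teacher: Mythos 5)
Your proof is correct and follows the same route as the paper's: re-establish the coverage invariant (Lemma~\ref{lemm::coverage}) by updating case iii) with the observation that a candidate $\delta \in \descendants(h)$ cannot lie in $\hypos(C)$, so some element of $\min_\preceq(\descendants(h) \setminus \hypos(C))$ covers it, and then reuse the termination argument of Theorem~\ref{theo::pfse::correct}. You are in fact more careful than the paper, which merely asserts that the PFS+e termination proof ``also applies''; your fix to claim (ii) --- replacing ``parent'' by ``strict ancestor'', which still lies in the downward-closed set $A$ and still contradicts minimality in $A'$ --- is precisely the detail needed to make that assertion go through.
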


\begin{proof}
  The correct outcome of the algorithm 
  is again proved by updating the coverage property 
  (Lemma~\ref{lemm::coverage}).  
  Item iii) of the proof needs to be updated
  as follows.
  Candidate $\delta$ is covered by the hypothesis $h$ 
  that has been disproved ($\delta \in \descendants(h)$).  
  Because $\delta$ is a candidate and $C$ is a conflict, 
  $\delta \not\in \hypos(C)$.  
  Hence $\delta \in \descendants(h) \setminus \hypos(C)$.  
  Since the hypothesis space is well partially ordered, 
  $\min_\preceq(\descendants(h) \setminus \hypos(C))$ is
  not empty and therefore, when the hypotheses in this set are
  added to $\SO$ (line~\ref{line::pfs+ec::update}), at least one of them
  will cover $\delta$ at the next iteration of the algorithm.

  The proof for termination of PFS+e 
  also apply to PFS+ec.  
\end{proof}

We now illustrate how PFS+ec can accelerate the diagnosis.  
First it may remove a number of successors.  

\begin{exam}
  Consider a SqHS with three fault events $f_1$, $f_2$, and $f_3$.  
  PFS+ec first tests the empty sequence $h_0 = []$.  
  Assuming $h_0$ is not a candidate, 
  then PFS would generate three successors, $[f_1]$, $[f_2]$, and $[f_3]$.  
  Assume now the test solver finds the conflict $C$ 
  that specifies that \textit{either fault $f_1$ or fault $f_2$} occurred.  
  This conflict rejects all hypotheses that contain only $f_3$ faults.  
  It is not difficult to show that the minimal elements 
  of $\descendants(h_0) \setminus \hypos(C)$ 
  are $[f_1]$ and $[f_2]$.  
  In other words, the conflict allowed to discard the hypothesis $[f_3]$.  
\end{exam}

But conflicts can also allow us to consider hypotheses 
that are ``deeper'' than the natural successors, 
thus skipping intermediate steps.  

\begin{exam}
  Consider the same example as before, 
  but this time with the conflict $C$ 
  that excludes $h_0$ and all hypotheses with a single fault.  
  {Then the successors of $h_0$ become:}
  $[f_1,f_1]$, $[f_1,f_2]$, $[f_1,f_3]$, 
  $[f_2,f_1]$, $[f_2,f_2]$, $[f_2,f_3]$, 
  $[f_3,f_1]$, $[f_3,f_2]$, and $[f_3,f_3]$.
  {PFS does not need to test any $h_0$'s three children.}
\end{exam}

\subsubsection{Computing Conflicts and Successors}

So far, we have merely characterised the set of successors rejected
by a conflict, but have not explained how to compute
conflicts and successors in practice.  
A key issue addressed by our approach below is that the set $\left(\descendants(h) \setminus \hypos(C)\right)$ is infinite in general.

\vspace*{1ex}

We first discuss the \emph{computation of conflicts}. Whilst
our approach restricts the type of conflicts computed,
it makes it easy to test for inclusion in a conflict and compute successors.
Conflicts are represented symbolically, similarly to the tested hypotheses.
A conflict is a set of hypothesis properties which, 
as explained in Definition~\ref{defi::prop}, 
is an implicit representation of a set of hypotheses: 
\begin{displaymath}
  C \subseteq \propspace.  
\end{displaymath}
To see how conflicts are computed, remember that the test solver is
given a set $P$ of properties that represents exactly the set $H$ of
hypotheses
to be tested ($H = \hypos(P)$).  
The task of the test solver is essentially 
to find an ``explanation'' of the observation 
that satisfies all these properties.  
If no such explanation exists (and, consequently, the test fails), 
then the solver may be able to track all the properties $P'\subseteq P$ 
that it used to decide the failure.  
Clearly: 
\begin{itemize}
\item 
  no hypothesis that satisfies $P'$ is a candidate; 
  hence $P'$ is a conflict; 
\item 
  the set of hypotheses represented by $P'$ 
  is a superset of the set of hypotheses represented by $P$: 
  $P' \subseteq P \Rightarrow \hypos(P') \supseteq \hypos(P) = H$.  
\end{itemize}
Therefore, $P'$ is returned as the result of the diagnosis test.

\vspace*{1ex}

Given this definition of conflict, we now discuss the efficient \emph{computation of the successors} of a hypothesis rejected by a conflict.  
First,
observe that PFS searches using Question~\ref{ques::candidate}, which,
as stated in Subsection~\ref{sub::relevantsetsofproperties}, can be
formulated via
two properties of the form $\propdesc(\cdot)$
and $\propanc(\cdot)$, 
or alternatively 
via a $\propdesc(\cdot)$
property in conjunction with a set of $\neg\propdesc(\cdot)$
properties.
We choose the latter representation, as using more
properties will enable the generation of a 
more general conflict and increase efficiency.

Second, the property of the form $\propdesc(\cdot)$
can be ignored for the purpose of computing successors. This is because
the successors of $h$ (as defined in Algorithm~\ref{algo::conflict})
should contradict at least one property of the conflict but cannot
contradict a $p=\propdesc(h')$ property: clearly if $p$ is a property
of $h$ then $h'\preceq h$ and all descendants $h''$ of $h$ satisfy
$h'\preceq h \preceq h''$, which means that $p$ is also a property of
$h''$.
Therefore, no successor of $h$ will contradict $p$ 
and, as a consequence, 
properties of the form $\propdesc(h')$ can be ignored 
to determine the successors.  
Formally, 
$\descendants(h) \setminus \hypos(C) = 
\descendants(h) \setminus \hypos(C')$ 
where $C'$ is the subset of properties of $C$ 
that are of type $\neg\propdesc(h')$; 
notice that this does not imply that $C'$ is a conflict.  

Now, let $h$ and $h'$ be two hypotheses.  
We write $h \otimes h'$ for
the set of least common descendants of $h$ and $h'$, 
i.e., $h \otimes h' = \min_{\preceq} 
\left( \descendants(h) \cap \descendants(h') \right)$.  
The following result holds: 
\begin{lemm}\label{lemm::conflictsuccessors}
  Let $S$ be a set of hypotheses 
  and let $C_S = \{\neg\propdesc(h') \in \propspace \mid h' \in S\}$ 
  be a set of properties.  
  Let $h$ be a hypothesis.  
  Then, 
  \begin{displaymath}
    \min_{\preceq} (\descendants(h) \setminus \hypos(C_S)) = 
    \min_{\preceq} (\bigcup_{h' \in S} h \otimes h').  
  \end{displaymath}
\end{lemm}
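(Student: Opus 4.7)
The plan is to unfold the definitions on the left-hand side to rewrite it as a minimisation over a union of intersections of descendant sets, and then invoke a general fact about how $\min_{\preceq}$ distributes over unions in a well partially ordered space.

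First, I would compute $\hypos(C_S)$ directly from Definition~\ref{defi::prop}. A hypothesis $h''$ lies in $\hypos(C_S)$ iff it exhibits every property $\neg\propdesc(h')$ with $h'\in S$, which holds iff $h'\not\preceq h''$ for every $h'\in S$. Negating, $h''\notin \hypos(C_S)$ iff $h''\in\bigcup_{h'\in S}\descendants(h')$. Intersecting with $\descendants(h)$ and distributing gives
\begin{displaymath}
  \descendants(h)\setminus\hypos(C_S)
  \;=\; \bigcup_{h'\in S}\bigl(\descendants(h)\cap\descendants(h')\bigr).
\end{displaymath}

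Second, I would establish an auxiliary distributivity fact: for any family $\{T_i\}_{i\in I}$ of subsets of a well partially ordered set,
\begin{displaymath}
  \min_{\preceq}\Bigl(\bigcup_{i\in I} T_i\Bigr)
  \;=\;
  \min_{\preceq}\Bigl(\bigcup_{i\in I} \min_{\preceq}(T_i)\Bigr).
\end{displaymath}
The inclusion $\supseteq$ is straightforward because the right-hand union is a subset of the left-hand union, so a minimal element of the larger set that also belongs to the smaller set remains minimal in it. For $\subseteq$, take a minimal element $t$ of $\bigcup_i T_i$; it belongs to some $T_i$, and by well partial orderedness there is some $t'\in\min_{\preceq}(T_i)$ with $t'\preceq t$; minimality of $t$ in the union forces $t=t'$, so $t$ lies in $\bigcup_i\min_{\preceq}(T_i)$, and nothing in the larger set is strictly below $t$ so it remains minimal.

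Third, I would apply this fact with $T_{h'} = \descendants(h)\cap\descendants(h')$, recalling that $h\otimes h' = \min_{\preceq}(T_{h'})$. Combined with the rewriting in step one, this yields the claim. Empty intersections cause no trouble: if $T_{h'}=\emptyset$ then $h\otimes h'=\emptyset$ as well, and both sides simply drop that index from the union.

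The only subtle step is the distributivity lemma in step two, and in particular its reliance on well partial orderedness to guarantee, for each element $t$ in a non-empty $T_i$, the existence of a minimal element $t'\preceq t$ in $T_i$. This is exactly where the blanket assumption made at the start of Section~\ref{sec::sets} that $\hypospace$ is well partially ordered is used; every other step is a straightforward manipulation of the definitions of $\hypos(\cdot)$, $\descendants(\cdot)$ and $h\otimes h'$.
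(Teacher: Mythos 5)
Your route is genuinely different from the paper's. The paper first proves a general fact that two sets which mutually cover each other (every element of one is weakly above some element of the other, and vice versa) have the same minimal elements, and then verifies this mutual coverage directly between $\descendants(h)\setminus\hypos(C_S)$ and $\bigcup_{h'\in S} h\otimes h'$ (the first half of which is in fact the outright inclusion $\bigcup_{h'\in S} h\otimes h'\subseteq \descendants(h)\setminus\hypos(C_S)$). You instead make explicit the rewriting $\descendants(h)\setminus\hypos(C_S)=\bigcup_{h'\in S}\left(\descendants(h)\cap\descendants(h')\right)$, which the paper leaves implicit inside its coverage argument, and reduce everything to a single reusable fact: that $\min_\preceq$ of a union equals $\min_\preceq$ of the union of the $\min_\preceq$'s in a well partially ordered space. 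This decomposition is cleaner and isolates exactly where well partial orderedness enters; the paper's mutual-coverage lemma is essentially the same general fact in different clothing.

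However, your justification of the $\supseteq$ inclusion of the distributivity lemma does not establish it. You argue that a minimal element of the larger set $\bigcup_i T_i$ which happens to lie in the smaller set $\bigcup_i \min_\preceq(T_i)$ remains minimal there --- but that is (part of) the $\subseteq$ direction restated. What $\supseteq$ requires is: given $t$ minimal in $\bigcup_i\min_\preceq(T_i)$, show that $t$ is minimal in $\bigcup_i T_i$. This does not follow from the subset relation alone and genuinely needs well partial orderedness: if some $u\in T_j$ satisfied $u\prec t$, one must first descend from $u$ to some $u'\in\min_\preceq(T_j)$ with $u'\preceq u\prec t$ in order to contradict the minimality of $t$ in the smaller set. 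Without well partial orderedness the inclusion can fail outright: with $T_1=(0,1)$ and $T_2=\{1/2\}$ in the reals one gets $\min(T_1\cup T_2)=\emptyset$ while $\min(\min(T_1)\cup\min(T_2))=\{1/2\}$. The repair is two lines and uses exactly the ingredient you already identified as the crux of the lemma, so the proof is easily completed, but as written this direction is a gap.
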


\begin{proof}
  This proof is in two parts: 
  {first, we prove that if $S_1$ covers $S_2$ 
  (i.e., for all hypotheses of $S_2$, 
  there exists a preferred hypothesis in $S_1$) and conversely,}
  then their minimal sets are equals; 
  second, we prove that the two-way coverage holds
  for $S_1 = \descendants(H) \setminus \hypos(C_S)$ 
  and for $S_2 = \bigcup_{h' \in S} h \otimes h'$.  

  Let $S_1$ and $S_2$ be two sets of hypotheses 
  such that 
  $\forall \{i,j\} = \{1,2\}~
  \forall h_i\in S_i ~\exists h_j \in S_j ~h_i \preceq h_j$.  
  Consider an element $h_i \in \min_\preceq(S_i)$; 
  since $h_i \in S_i$, there exists $h_j \in S_j$ 
  such that $h_j \preceq h_i$.  
  Furthermore since $h_j \in S_j$, there exists $h'_i \in S_i$ 
  such that $h'_i \preceq h_j$.  
  Hence $h'_i \preceq h_i$ and therefore $h'_i = h_i$ 
  (if $h'_i \prec h_i$, then $h_i$ would not minimal).  
  Consequently $h'_i \preceq h_j \preceq h_i$ and $h'_i = h_i$, 
  which implies that $h_i = h_j$.  
  Thus $\min_{\preceq} S_1 = \min_{\preceq} S_2$.  

  Assume now $S_1 = \descendants(h) \setminus \hypos(C_S)$ 
  and $S_2 = \bigcup_{h' \in S} h \otimes h'$.  
  We prove that $S_1$ covers $S_2$, and in the next paragraph we prove
  that the converse holds as well.
  Let $h_2 \in S_2$  and let $h' \in S$ be the hypothesis 
  such that $h_2 \in h \otimes h'$, 
  then $h \preceq h_2$ and $h' \preceq h_2$; 
  hence $h_2 \in \descendants(h)$ 
  and $h_2 \not\in \hypos(C_S)$ 
  (since $\hypos(C_S)$ includes $\neg\propdesc(h')$).  

  Let $h_1 \in S_1$ be a hypothesis.  
  By definition $h_1$ is a descendant of $h$ 
  and does not belong to $\hypos(C_S)$; 
  hence there exists $h' \in S$ such that $h_1 \preceq h'$.  
  By definition, 
  $S_2 = h \otimes h'$ contains a hypothesis $h_2$ 
  such that $h_2 \preceq h_1$.  
\end{proof}

Lemma~\ref{lemm::conflictsuccessors} gives us a way 
to compute the set of successors.  
Indeed, it should be clear that $h \otimes h'$ 
is finite for any $h$ and any $h'$ 
since the hypothesis space is a well partial order.  
Therefore, the union in Lemma~\ref{lemm::conflictsuccessors}
can be enumerated and the minimal elements found by
pairwise hypothesis comparisons.

The implementation of operator $\otimes$ is often simple.
We now give concrete realisations for some of 
the hypothesis spaces we introduced.  

In SHS, a hypothesis is a set of faults.  
The single hypothesis $h''$ such that $\{h''\} = h \otimes h'$ 
is then $h'' = h \cup h'$.  

In MHS, a hypothesis associates each fault 
with a number of occurrences.  
Again, $h\otimes h'$ produces a single hypothesis $h''$, 
which is defined by $h''(f) = \max \{h(f),h'(f)\}$ for all fault $f$.  

In SqHS, multiple hypotheses can be 
minimal common descendants of $h$ and $h'$.  
Such hypotheses $h''$ are such that they contain 
all faults in $h$ and $h'$, in the same order.  
The set of hypotheses can be computed by 
progressing
in $h$, $h'$, or in both at the same time (if the current fault is the same), 
until the end of both sequences is reached.  
Certain non-minimal hypotheses may still slip in, 
and must be removed.  
For instance, if $h = [a,b]$ and $h' = [b,c]$, 
the procedure described above would produce: 
$\{[a,b,b,c],[a,b,c,b],[a,b,c],[b,a,b,c],[b,a,c,b],[b,c,a,b]\}$ 
but the result is actually $h \otimes h' = \{[a,b,c],[b,a,c,b],[b,c,a,b]\}$.  


\section{Related Work}
\label{sec::related}
The AI and control communities have developed a wide spectrum of
diagnosis approaches targeting static or dynamic, discrete event,
continuous, or hybrid systems. Obviously, we cannot discuss all of
these. For instance, we do not cover approaches in state
estimation or probabilistic diagnosis whose goal is
to compute a probability distribution on candidates \cite{thorsley-teneketzis::tac::05,stern-etal::aaai::15}.  Instead, we
focus our discussion on the frameworks which ours generalises. This
includes in particular the founding works of
Reiter \cite{reiter::aij::87}, de Kleer and
Williams \cite{dekleer-williams::aij::87}, and approaches that employ
related algorithmic frameworks \cite{feldman-etal::jair::10}.

\subsection{Connection with Reiter's Theory}  

Reiter's work \cite{reiter::aij::87} is a key inspiration to the
present theory.  Similarly to Reiter's, our objective is a general
theory of diagnosis from first principles, which determines the
preferred diagnosis hypotheses solely from the available description
of the system and of its observed behaviour, and which is independent
from the way systems, hypotheses, and observations are represented.

Our work generalises Reiter's in two significant ways. First,
Reiter only considers the set hypothesis space (SHS). This
space has many properties
\cite{staroswiecki-etal::ijacsp::12}, 
which allowed Reiter to propose a more specific implementation of PFS+c
(\texttt{diagnose}).  
SHS is finite, which means that termination is not an issue 
(by no mean does this implies 
that Reiter and other researchers did not try to accelerate termination).  
It is also a lattice, 
i.e., any pair $\{h,h'\}$ of hypotheses has a unique least upper bound 
and a unique greatest lower bound; 
practically, this means that $h \otimes h'$ is always singleton, which
simplifies successor computation.
Finally, and most importantly, 
each hypothesis can be defined as the intersection 
of the set of descendants or non-descendants of hypotheses of depth~1.  
For instance, if $F = \{f_1,f_2,f_3\}$, 
then $\{f_1,f_2\}$ is the unique element 
of $\descendants(\{f_1\}) \cap \descendants(\{f_2\}) \cap 
\left(\hypospace \setminus \descendants(\{f_3\})\right)$.  
Similarly, the set of descendants of any hypothesis
is the intersection of descendants of hypotheses of depth~1: 
$\descendants(\{f_1,f_2\}) = 
\descendants(\{f_1\}) \cap \descendants(\{f_2\})$.  
Practically, this means that there exists a specialised property space
that can be used to uniformly represent all hypotheses and
that leads to conflicts that generalise well across the hypothesis space.
For all these reasons, Reiter did not have to introduce 
the more complex algorithmic machinery we use in this paper.  
However, our theory enables much richer hypotheses spaces to
be considered.

This leads us to the second main difference with Reiter's work:
whilst system-inde\-pen\-den\-ce was one of Reiter's original aims, his
theory was mainly applied to circuits and other static systems
\cite{dague:94}. Dynamic systems and in particular DESs were investigated
using totally different approaches. In part, this can be explained by
the immaturity of available consistency-checking tools for DESs
(model checkers and AI planners) at the time. However, dynamic systems
also naturally lend themselves to diagnostic abstractions richer than the
set hypotheses space, such as considering sequences of fault events
\cite{cordier:thiebaux:94}.

\subsection{Connection with de Kleer's Theory}
\label{sub::related::dekleer}

Reiter's theory applies to weak-fault models, which model only the
correct behavior of components. De Kleer and Williams
\cite{dekleer-williams::aij::87} extended Reiter's work to
strong-fault models, which incorporate information about faulty
behavior. They also used a different computational strategy,
exploiting an
assumption-based truth maintenance system (ATMS)
\cite{dekleer::aij::86}.  Their approach however still assumes the
set hypothesis space.

Strong-fault models bring additional challenges to the development of
a general theory of diagnosis. Weak-fault models have a certain
monotonicity property: if $\delta \preceq h$ and $\delta$ is a
candidate, then $h$ is also a candidate.  
This is one justification for returning the minimal diagnosis: 
it implicitly represents all diagnosis candidates.  
Such a representation however is no
longer possible with strong-fault models, and instead, a new notion
of ``kernel diagnosis'' was introduced \cite{dekleer-etal::aaai::90}.
A kernel diagnosis is the conjunction of descendants and
non-descendants of specified sets of hypotheses,
e.g. $\descendants(\{f_1\}) \cap \descendants(\{f_2\}) \cap
\left(\hypospace \setminus \descendants(\{f_3\})\right) \cap \left(\hypospace \setminus \descendants(\{f_4\})\right)$, and the
diagnosis can be represented by a (finite) set of maximal kernel
diagnoses. Note that although all minimal candidates belong to some
kernel diagnosis, i) this kernel diagnosis is not solely defined by
the minimal candidate and ii) not all kernel diagnoses contain a minimal
candidate.

The generalisation of a kernel diagnosis to a richer
hypothesis space than SHS is not trivial.  For strong-fault models,
the main benefits of representing the diagnosis as a set of kernel
diagnoses over a set of minimal diagnoses are that: i) the candidates
can be easily enumerated; and ii) verifying that a hypothesis is a
candidate is easy.  A kernel diagnosis represented by a set of
properties as defined in the present article satisfies these two
criteria.  However the set of kernel diagnoses may become infinite.
To see this, consider the following example
over a multiset hypothesis space (MHS) 
with two fault events $f_1$ and $f_2$; 
for simplicity a hypothesis will be written $h_{i,j}$ 
which means that fault $f_1$ occurred $i$ times, 
and fault $f_2$ $j$ times.  
We assume that $\Delta = \{ h_{0,j} \mid j \mathrm{\ mod\ } 2 = 1\}$, i.e., $f_1$ did not occur and $f_2$ occurred an odd number of times.
The kernel diagnoses are the following:
\begin{displaymath}
  \descendants(h_{0,1+2i}) 
  \setminus \descendants(h_{1,1+2i})
  \setminus \descendants(h_{0,2+2i}),\quad i \in \mathbf{N}.  
\end{displaymath}
Such a representation of the diagnosis is infinite,
which is why we advocate the computation of the minimal diagnosis.

The second characteristic of the theory 
developed by de Kleer and Williams 
is the use of an ATMS to generate all the maximal conflicts 
before computing the diagnosis.  
ATMSs compute these conflicts by propagating 
the consequences of assumptions on the hypothesis properties.  
However, assuming, as is the case in this article, 
that the conflicts are convex sets of non-candidate hypotheses, 
the set of maximal conflicts may be infinite.  
Consider again the above example,
and let $h_{0,i} \neq h_{0,j}$ be two non-candidate hypotheses.
Clearly both hypotheses cannot be in the same convex conflict
(at least one hypothesis between them is a candidate).
Thus, using an ATMS to pre-generate maximal convex conflicts
is not feasible in the context of more general hypothesis spaces.

Furthermore, even when the conflict set is finite, it can be
prohibitively large and include many conflicts that are not needed to
solve the problem. For instance, many conflicts will discard 
hypotheses that would not be minimal, even if they were candidates.
An example of this, from the example above, is a conflict $C$ 
where $\hypos(C) = \{h_{0,2}\}$.  
Such conflicts are not necessary to compute the minimal diagnosis.
In the PFS algorithm, as well as other algorithms for computing hitting
sets, the incremental generation of ``useful'' conflicts is preferrable.

To avoid computing a potentially exponentially long list 
of minimal candidates, 
Williams and Ragno \cite{williams-ragno::dam::07} 
proposed to compute a subset of candidates 
that optimise some utility function 
(for instance, maximises the probability 
given a priori probabilities on faults).  

\subsection{PLS-like Systems}

Bylander et al. proposed an approach 
that bears some similarities with PLS+r, 
in that it finds any diagnosis candidate 
and then searches for a candidate within its parent set
\cite{bylander-etal::aij::91}.  
It assumes the set hypothesis space, and that the problem has
the monotonicity property ($\delta \preceq h\ \land\ \delta \in \Delta 
\Rightarrow h \in \Delta$), like weak-fault models do.
This algorithm does not return all minimal candidates.  

SAFARI \cite{feldman-etal::jair::10} is a variant of this approach.  
It too assumes the SHS and a weak-fault model.  
The goal of this algorithm is to avoid the memory requirements 
associated with computing all the conflicts, as done with the
ATMS approach, or maintaining an open list of hypotheses, as in
the \texttt{diagnose} algorithm.

SAFARI first computes a diagnostic candidate.  
It then checks whether any parent of the current candidate 
is a candidate, in which case it iteratively 
searches for a candidate parent.  
Because the model is weak-fault, 
this approach is guaranteed 
to return a minimal candidate.  
When a minimal candidate is found, 
a new search is started.  
This approach does not guarantee 
that all minimal candidates will be found.  
Furthermore to speed up the implementation, 
not all the parents are checked: 
the refinement is stopped as soon 
as two parent checks fail.  

\subsection{Explanatory Diagnosis of Discrete-Event Systems}

Recently, Bertoglio et al. \cite{bertoglio-etal::ecai::20,bertoglio-etal::kr::20,lamperti-etal::jair::23}
proposed the \emph{explanatory diagnosis} of discrete event systems.
They compute all possible sequences of faults that are consistent with the observations.
The number of such sequences can be infinite in general,
but they use regular expressions to represent them compactly.
This diagnosis is more informative than the diagnosis traditionally computed for DES
(set of faults).

There are several important differences between their work and ours.
First, they compute the complete diagnosis while we focus on computing the \emph{minimal} diagnosis;
restricting ourselves to minimal diagnosis allows us to use more efficient algorithms,
while Bertoglio et al. must explore all behaviours exhaustively.
Second, they define diagnosis candidates as \emph{sequences} of faults
while we allow for more definitions.
This is not restrictive per se, as the sequences of faults
form the most abstract space,
but this, again, implies that we can use algorithmic improvements specific to our hypothesis space.
Thirdly,  we use an approach based on consistency tests
while Bertoglio et al. compute all behaviours consistent with the observations.
Finally, our approach is not limited to discrete event systems.

\subsection{Navigating the Space  of Plans}

The problem of navigating through the space of possible plans in AI
planning is very similar to the problem of diagnosis of discrete event
systems.  In classical planning, the optimal plan is generally the
least expensive one.  However the preference relation is sometimes
more complex. One example is oversubscription planning, which requires
finding a plan that satisfies all the hard goals and a maximal subset
of soft goals. Because the planner does not know which combination of
soft goals the user would rather see achieved, it should return all
(cost optimal) plans that are non-dominated, i.e., such that no other
plan achieve a superset of goals.

Such problems can be formulated in our framework. The observations are
the language of all plans that reach the hard goals. A ``hypothesis''
associated with a plan represents the subset of soft goals that this
plan achieves.  A hypothesis is preferable to another one if it is a
superset of the latter.  We can then use our search strategies to
efficiently search for solutions.  Eifler et
al. \cite{eifler-etal::aaai::20} propose techniques that are similar
to search over hypothesis space as is done in model-based diagnosis.
However our approach allows for more sophisticated definitions of
classes of plans: rather than two plan belonging to the same class if
they achieve the same soft goals, the user could also be interested in
the order in which these goals are achieved (for instance the order
in which a certain people can be visited). This can be modelled in our
framework as a variant of the Sequence Hypothesis Space in which an
element appears only once.

\subsection{Generation of Conflicts}

The theory of diagnosis from first principles 
relies heavily on the notion of conflicts 
to explore the hypothesis space efficiently.  
Junker presented an algorithm dubbed \textsc{QuickXplain} 
for computing minimal conflicts from a consistency checker
that is ignorant of the underlying problem 
\cite{junker::aaai::04}.  
\textsc{QuickXplain} isolates the subset of properties 
responsible for inconsistency by iteratively splitting 
the set of properties and testing them separately.  

Shchekotykhin et al.{} improved this work 
to produce several conflicts in a single pass
\cite{shchekotykhin-etal::ijcai::15}.  

The applications mentioned in the papers cited above considered the
Set Hypothesis Space but these algorithms are applicable to any
hypothesis space and can be used in our framework to generate
conflicts.

{In the context of heuristic search planning, Steinmetz and
Hoffmann \cite{steinmetz-hoffmann::aij::17} presented a technique to
find conflicts (that are not guaranteed minimal).  A conflict is
a conjunction of facts such that any state that satisfies it is a
dead-end from which the problem goal cannot be reached. Their
algorithm uses the critical path heuristic $h^C$ \cite{haslum:12}
which lower bounds the cost of reaching the goal, as a dead-end
detector, i.e., when $h^C(s)=\infty$, the state $s$ is a dead-end. The
algorithm  incrementally learns the value of the parameter $C$ -- a set of
conjunction of facts --- adding new conjunctions when a dead-end
unrecognised by $h^C$ is found by the search. In our implementation of
a test solver based on heuristic search below, we build on a different
planning heuristic, namely LM-cut.
}

\subsection{Other Diagnosis Approaches}

Besides test-based approaches to diagnosis, 
two different classes of approaches have been developed 
\cite{grastien::dx::13::spectrum}.  

The first, which bears some similarities with the test-based
approach, consists in determining, off-line, a mapping between
assumptions on the diagnosis and patterns satisfied by the
observation.  Implementations include indicators and ARR
\cite{staroswiecki-comtetvarga::automatica::01}, possible conflicts
\cite{pulido-alonsogonzalez::tsmc::04}, chronicles
\cite{cordier-dousson::safeprocess::00}, and, in an extreme
interpretation of this class, the Sampath diagnoser
\cite{sampath-etal::tac::95}.  The problem with approaches of this kind
is the
potentially large (exponential, or worse) number of observation
patterns that need to
be built off-line.

The second approach consists in computing the set of behaviours that
are consistent with the model and the observation, and
extracting the diagnosis information from these behaviours.  The main
issue here is finding a representation of the set of behaviours that
is compact enough and allows fast extraction of the diagnostic
information.  In circuit diagnosis, this approach has been pioneered
by Darwiche and co-authors, and led to a thorough study of
model compilation \cite{darwiche-marquis::jair::02,darwiche::ijcai::11}.  
For DES
diagnosis, this approach has dominated the research landscape
\cite{pencole-cordier::aij::05,su-wonham::tac::05,%
  schumann-etal::aaai::07,kanjohn-grastien::ecai::08,%
  zanella-lamperti::03}. The present paper significantly departs from
existing work on DES diagnosis by offering a generalised test-based theory that
encompasses DES and other types of dynamic systems.


\section{Implementations}
\label{sec::imple}
The framework presented in this paper was initially developed for the
diagnosis of discrete event systems.  In the DES case, the task of the
test solver is to decide if there exists a sequence $\sigma$ of events
that is allowed by the system model ($\sigma \in \MOD$), consistent
with the observation ($\OBS(\sigma)$ holds) and matching a hypothesis
in the test set $H$ ($\hypo(\sigma) = h$ for some $h \in H$).
Realistically, we must assume that the model is given in some compact,
factored representation, such as a network of partially synchronised
automata \cite{pencole-cordier::aij::05}, a Petri
net \cite{benveniste:etal:03} or a modelling formalism using state
variables and actions \cite{haslum:etal:19}. Even if these
representations are in theory equivalent to a single finite automaton,
the exponential size of that automaton means it can never be fully
constructed in practice.  Thus, the test solver must work directly on
the factored representation.  This is the same problem that is faced
in model checking
\cite{clarke-etal::00} and AI planning \cite{ghallab-etal::00}, and
techniques from those areas can be adapted to solve it.

In this section, we present two examples of how test solvers for
DES diagnosis, over different hypothesis spaces, can be implemented.
One implementation uses a reduction to propositional satisfiability
(SAT), while the other uses heuristic state space search.
To ground the discussion, we first introduce a simple, concrete
factored DES representation.

\subsection{Representation of Large DES}
The representation that we will use to describe the test solver
implementations below is a network of partially synchronised automata.
This is a commonly used representation for DES diagnosis 
\cite{zanella-lamperti::03,pencole-cordier::aij::05,su-wonham::tac::05}.

The DES is defined by a set of \emph{components}, $\mathcal{C}$, and
a global alphabet of \emph{events}, $\mathcal{E}$. Each component $c$
is a finite state machine: it has a set of local states $S_c$ and a
local transition relation
$T_c \subseteq S_c \times \mathcal{E}_c \times S_c$,
where $\mathcal{E}_c$ is the set of events that component $c$
participates in.
As usual, $(s, e, s') \in T_c$ means the component can change from
state $s$ to $s'$ on event $e$.  
The global state of the system is the tuple of component states,
and a \emph{global transition} is a set of simultaneous component
transitions. Synchronisation is partial: if $e \not\in \mathcal{E}_c$
then $c$ does not perform a transition when $e$ occurs.
More formally, given a global state $(s_1, \ldots, s_n)$, event
$e$ induces a global transition to a new state $(s_1', \ldots, s_n')$
iff for each component $c_i$, either (i) $(s_i, e, s_i') \in T_{c_i}$, or
(ii) $e \not\in \mathcal{E}_{c_i}$ and $s_i' = s_i$.

A subset $\mathcal{E}_O$ of events are \emph{observable}. In 
the diagnosis problems we consider, the observation is a sequence of
observable events: $\OBS = e_o^1, \ldots, e_o^k$.  Another subset
$\mathcal{F} \subseteq \mathcal{E}$ are designated fault events.

\subsection{Implementation of PFS+ec using SAT}
\label{sub::dessat}

Propositional satisfiability (SAT) is the problem of finding a
satisfying assignment to a propositional logic formula on conjunctive
normal form (CNF), or prove that the formula is inconsistent.
SAT has many appealing characteristics as a basis for implementing
a test solver: modern SAT solvers based on clause learning are very
efficient, both when the answer to the question is positive and
negative, and can easily be modified to return a conflict. Reductions
to SAT have previously been used to solve discrete event reachability
problems for diagnosis \cite{grastien-etal::aaai::07,%
grastien-anbulagan::tac::13}, AI planning
\cite{kautz-selman96} and model checking \cite{biere:etal:03}.

The main disadvantage of reducing the reachability problem to SAT is
that it requires a bound on the ``parallel length'' $n$ of the
sequence $\sigma$ that is sought, and the size of the SAT encoding
grows proportionally to this parameter.\footnote{The SAT encoding
allows parallel execution of non-synchronised local transitions in
separate components. The semantics of such parallel execution is
simple: a parallel set of global transitions is permitted iff every
linearisation of it would be. The purpose of allowing this form of
parallelism is only to reduce the size of the encoding.}
For the benchmark problem that we consider in our experiments
(described in Section \ref{sec:xp:problem}) this is not problematic:
the structure of this benchmark allows us to prove that the maximum
number of local transitions that can take place in any component
between two observable events is at most $7$, and therefore the
parallel length of the sequence is bounded by $7 \times |o|$, where
$|o|$ is the number of observed events.
For diagnosis of DESs where such a bound cannot be proven, however,
this can be an issue.

In order to represent a path of parallel length $n$ (where we
take $n = 7 \times |o|$),
we define SAT variables that model the state of every component 
between every consecutive pair of transitions 
as well as variables that model which event occurred on each transition.  
For each $s \in S_c$ of some component 
and each ``timestep'' $t \in \{0,\dots,n\}$, 
the propositional variable $s@t$ 
will evaluate to \textit{true} iff the state of component $c$ is $s$ 
after the $t$-th transition.  
Similarly, for every event $e \in \mathcal{E}$ 
and every timestep $t \in \{1,\dots,n\}$, 
the propositional variable $e@t$ will evaluate to \textit{true} 
if event $e$ occurred in the $t$-th transition.
For simplicity, we also define the propositional variable $tr@t$ 
which represents whether the (component) transition $tr$ 
was triggered at timestep $t$.  

The SAT clauses are defined to ensure that any solution to the SAT problem 
represents a path that satisfies the following three constraints 
\cite{grastien-etal::aaai::07}: 
(i) it should be allowed by the model; 
(ii) it should be consistent with the observations; 
(iii) its corresponding hypothesis should belong to the specified set $H$.  

The translation of the first constraint into SAT 
is summarised on Table~\ref{tab::modsatconstraints}.  
The first two lines ensure 
that the origin and target states of each transition are satisfied.  
The third line encodes the frame axiom, 
which specifies that a component state changes 
only as an effect of a transition.  
The fourth line is a cardinality constraint \cite{marquessilva-lynce::cp::07} 
which indicates that a component can only be in one state at a time.  
The fifth and sixth lines ensure 
that the transitions and events match 
and the seventh line is a cardinality constraint 
whereby only one event can take place at a time for each component.  
The last line defines the initial state 
(component $c$ starts in state $s_{c0}$).  

\begin{table}[t!]
  \begin{displaymath}
    \begin{array}{|l r|}
    \hline
    \forall c \in \mathcal{C}.\ \forall tr = (s,e,s') \in T_c.\ 
    \forall t \in \{1,\dots,n\}& 
    tr@t \rightarrow s'@t\\
    \forall c \in \mathcal{C}.\ \forall tr = (s,e,s') \in T_c.\ 
    \forall t \in \{1,\dots,n\}& 
    tr@t \rightarrow s@(t-1)\\
    \forall c \in \mathcal{C}.\ \forall s \in S_c.\ 
    \forall t \in \{1,\dots,n\}& 
    (s@t \land \overline{s@(t-1)}) \rightarrow \bigvee_{tr \in T_c} tr@t\\
    \forall c \in \mathcal{C}.\ \forall t \in \{0,\dots,n\}&
    \large{=}_1 \{s@t \mid s \in S_c\}\\
    \forall c \in \mathcal{C}.\ \forall tr = (s,e,s') \in T_c.\ 
    \forall t \in \{1,\dots,n\}& 
    tr@t \rightarrow e@t\\
    \forall c \in \mathcal{C}.\ \forall e \in \mathcal{E}_c.\ 
    \forall t \in \{1,\dots,n\}&
    e@t \rightarrow \bigvee_{tr \in T_c} tr@t\\
    \forall c \in \mathcal{C}.\ \forall t \in \{1,\dots,n\}&
    \large{\le}_1 \{e@t \mid e \in \mathcal{E}_c\}\\
    \forall c \in \mathcal{C}&
    s_{c0}@0\\
    \hline
    \end{array}
  \end{displaymath}

  \caption{Ensuring that the SAT solutions represent paths 
  accepted by the model}
  \label{tab::modsatconstraints}
\end{table}

The second constraint (i.e., that the path matches the observation) 
is very easy to encode.  
Given that the $i$-th observed event took place at timestep $7 \times i$, 
we know which observable events occurred at which timestep.  
This information is simply recorded as unit clauses, 
i.e., if observable event $e$ occurred at timestep $t$ 
the clause $e@t$ is created, 
otherwise the clause $\overline{e@t}$ is created.  
More complex observations can be encoded in SAT, 
for instance if the order between the observed event 
is only partially known \cite{haslum-grastien::spark::11}.  

Finally the last constraint is that the hypothesis 
associated with the path should belong to the specified set.  
Remember that the set is implicitly represented 
by a collection of hypothesis properties.  
We have shown in Section~\ref{sub::prop::dynamic} 
how hypothesis properties can be seen as regular languages 
or intersections of such languages; 
these languages can be represented as finite state machines 
which in turn can be translated to SAT 
similarly to the translation to SAT of the model.  

However, for a given hypothesis space, it is usually possible to find
a simpler, more compact, yet logically equivalent encoding of hypothesis
properties.
Let us first consider the set hypothesis space: The property of
being a descendant from $h \subseteq F$ 
can be represented by the clauses
\begin{displaymath}
  f@1 \lor \dots \lor f@n, \quad \forall f \in h,
\end{displaymath}
which state that the faults $f \in h$ must occur in $\beh$.  
On the other hand, the property of being an ancestor of $h$ 
can be represented by the unit clauses
\begin{displaymath}
  \overline{f@t}, \quad \forall f \in F \setminus h,\ t \in \{1,\dots,n\}, 
\end{displaymath}
which state that the faults $f \not\in h$ should not occur in $\beh$.
For the multiset hypothesis space,
these properties can be represented in a similar way using cardinality
constraints: 
$\beh$ corresponds to a descendant (resp. ancestor) of $h$ 
iff for all fault~$f$, 
$\beh$ exhibits more (resp. less) than $h(f)$ occurrences of $f$.  

The encoding for the sequence hypothesis space 
is more complex.  
Let $h = [f_1,\dots,f_k]$ be a hypothesis 
for which the property $\propdesc(h)$ must be encoded.  
We write $\{h_0,\dots,h_k\}$ the set of prefixes of $h$ 
such that $h_k = h$.  
Consider another hypothesis $h' \succeq h_i$ 
for some $i \in \{0,\dots,k-1\}$, 
and assume $f \in F$ is appended to $h'$; 
then $h'f \succeq h_i$.  
Furthermore if $f = f_{i+1}$ 
then $h'f \succeq h_{i+1}$.  
To model this, we introduce fresh SAT variables $dh_i@t$ 
that evaluate to \textit{true} 
iff the trajectory $\beh$ until timestep $t$ 
corresponds to a hypothesis that is a descendant of $h_i$.  
Clearly, $dh_0@t$ is \textit{true} for all $t$; 
furthermore $dh_i@0$ is \textit{false} for all $i > 0$.  
The value of $dh_i@t$ ($i>0$) can be enforced 
by the following constraints: 
\begin{displaymath}
  dh_i@t\ \longleftrightarrow \ 
  dh_i@(t-1) \lor 
  \left(dh_{i-1}@(t-1) \land f_i@t\right).  
\end{displaymath}

Encoding the ancestor property is more difficult.  
Consider a hypothesis $h' \preceq h_j$ 
for some $j \in \{0,\dots,k\}$, 
and assume $f \in F$ is appended to $h'$; 
then $h'f \preceq h_i$ for any $i$ 
such that $f$ appears in $\{f_{j+1},\dots,f_i\}$.  
The negation of this expression is modelled as follows:
$h'f$ is not an ancestor of $h_i$ 
if $h'$ is not an ancestor of $h_i$ 
or there exists a $0 \le j < i$ 
such that $f \notin \{f_{j+1},\dots,f_i\}$ 
and $h'$ is not an ancestor of $h_j$.  
As was the case for descendant properties, 
we create SAT variables $ah_i@t$.  
For all $i$, $ah_i@0$ is \textit{true}.  
The value of $ah_i@t$ is then ensured by 
\begin{displaymath}
  \overline{ah_i@t}\ \longleftrightarrow\ 
  \overline{ah_i@(t-1)} \lor 
  \bigvee_{j < i}\left(\bigvee_{f \in F \setminus \{f_{j+1},\dots,f_i\}}
  \left(\overline{ah_{j}@(t-1)} \land f@t\right)\right).  
\end{displaymath}

\subsection{Implementation using Heuristic State Space Search}

State space exploration algorithms are widely used in model checking
and AI planning. They construct part of the explicit representation
on-the-fly, while searching for a state satisfying a given goal
condition. The use of heuristic guidance enables these algorithms
to focus the search towards the goal and explore only a very small fraction
of the state space before the goal is found.
Problem-independent heuristics are derived automatically from
the factored problem representation \cite{bonet-geffner01}.

To take advantage of the very effective heuristics and search
algorithms that exist, we need to express the hypothesis test
as a state reachability problem, i.e., as a condition on the goal
state to be found. This is straightforward to do with the help
of some auxiliary components.

The main goal is to find a sequence of events that generates the
observation: Suppose, for simplicity, that the observation is a
sequence of events, $e_o^1, \ldots, e_o^n$.
We add new component $c_o$ with states $0,\ldots,n$, which tracks
how far along the sequence we are. Its local transitions are
$(i-1, e_o^i, i)$; thus, any transition that emits an observable
event will synchronise with $c_o$, ensuring that these events
match the observation. The goal condition is then to reach
$c_o = n$. Transitions that emit an observable event not in the
sequence will never be applicable, and can simply be removed.

The formulation of the hypothesis, or set of hypotheses, to be
tested is more complex. Unlike in the SAT encoding, we cannot
just provide an encoding of each diagnosis property in isolation
and specify the test by their conjunction. Instead, we provide
encodings of two of the diagnostic questions described in
Section \ref{sub::relevantsetsofproperties} that are required
by the \PLS{} and \PFS{} algorithms.
We will use the multiset hypothesis space as the illustrative
example. Encodings of the set hypothesis space are also easy to
define (they work exactly the same but consider only the
presence/absence of each fault, rather than the number of times
it occurred). Encoding tests in the sequence hypothesis space is
much more complicated.

\noindent%
\textit{Question 1:} $\candidate{h}$.\hspace{\labelsep}
Recall that a multiset hypothesis is a mapping
$h : \mathcal{F} \rightarrow \mathbb{N}$.
$h$ is a candidate if there is an event sequence $\sigma$ that
includes each fault $f \in \mathcal{F}$ exactly $h(f)$ times.
We can track the occurrences of fault events in the same way
as the observation: For each fault $f$, introduce a component
$c_f$ with states $0,\ldots,h(f)$, and local transitions
$(i-1,f,i)$. This construction ensures that the sequence contains
no more than $h(f)$ occurences of each fault $f$. Adding $c_f = h(f)$,
for each $f$, to the goal condition also ensures that the sequence
exhibits exactly the specified fault counts.

\noindent%
\textit{Generating conflicts.}\hspace{\labelsep}
A complete search on the formulation above will find an event
sequence witnessing that $h$ is a candidate if such a sequence
exists. If it does not, however, the search will only return the
answer ``no'', after exhausting the reachable fraction of the
state space. To generate a conflict, we need a small modification
to both the encoding and the search algorithm.

We extend each fault-counting component $c_f$ with an extra
state $h(f) + 1$, and the local transitions $(h(f), f, h(f) + 1)$
and $(h(f) + 1, f, h(f) + 1)$. This allows event sequences that
contain more occurrences of faults than $h$ specifies. (We also
remove $c_f = h(f)$ from the goal.)
But we also assign a cost to each transition: the cost is one for
these transitions that correspond to additional faults, and zero
for all other transitions. This means that instead of every
sequence that reaches the goal being a witness for $h$, every
sequence with a total cost of zero is such a witness.

We then run an optimal A$^\star$ search \cite{hart-nilsson-raphael68}
using the admissible LM-Cut heuristic \cite{helmert-domshlak:icaps09:lmcut},
but interrupt the search
as soon as the optimal solution cost is proven to be greater than
zero. At this point, every state on the search frontier (open list)
is either reached by a non-zero cost transition (corresponding to
an additional fault not accounted for by $h$), or has a heuristic
estimate greater than zero, indicating that some additional fault
transition must take place between the state and the goal.
Here, the specific heuristic that we use becomes important:
The LM-Cut heuristic solves a relaxed version of the problem and
finds a collection of sets of transitions (with non-zero cost) such
that at least one transition from every set in the collection must
occur between the current state and the goal. Each such set is what
is known as a \emph{disjunctive action landmark} in the planning
literature.
Thus, this heuristic
tells us not only that some additional fault transition must take
place, but gives us a (typically small) set of which additional
faults. Taking the union of these sets (or the singleton set of the
fault transition already taken) over all states on the search
frontier gives us a set $F'$ of faults such any candidate descendant
of $h$ must include at least one fault in $F'$ in addition to those
accounted for by $h$, and that is our conflict.

\noindent%
\textit{Question 3:} $\coverage{S}$.\hspace{\labelsep}
This question asks whether there is any candidate $h'$ such that
$h \not\preceq h'$ for every $h \in S$. For the multiset hypothesis
space, this means finding an event sequence $\sigma$ such that for
each $h \in S$ there is some fault $f$ that occurs in $\sigma$
strictly fewer times than $h(f)$.
As above, we introduce components $c_f$ to count the number of
fault occurrences in the sequence. We set the maximum count to
$n_f = \max_{h \in S} h(f)$, but add the local transition
$(n_f, f, n_f)$, so that $c_f = n_f$ means ``$n_f$ or more
occurrences of $f$''.
That $h \not\preceq \hypo(\sigma)$ can then be expressed by the
disjunction $\bigvee_{f \in \mathcal{F}} c_f < n_f$.
The goal condition, that $h \not\preceq \hypo(\sigma)$ for all
$h \in S$, is simply the conjunction of these conditions for
all $h \in S$.


\section{Experiments}
\label{sec::xp}
In this section, we apply implementations of different diagnosis
algorithms derived from our theoretical framework to two realistic
diagnosis problems, and benchmark them against other algorithms from
the literature.

\subsection{Competing Algorithms}

We compare the \textsc{sat}-based and planning-based implementations 
of the algorithms presented in this paper 
with existing algorithms from the literature.  

\subsubsection{Diagnoser} 

The seminal work on diagnosis of discrete event systems 
introduced the diagnoser \cite{sampath-etal::tac::95}.  
The diagnoser is a deterministic finite automaton (DFA) 
whose transitions are labeled with observations 
and states are labeled with the diagnosis.  
Given a sequence of observations 
one simply needs to follow the single path labeled by this sequence 
and the diagnosis is the label of the state reached in this way.  
There are several issues with the diagnoser 
that prevented the use of this approach.  

First its size (the number of states of the DFA) 
is exponential in the number of states of the model 
and double exponential in the number of faults 
(for the set hypothesis space) \cite{rintanen::ijcai::07}.  
{For the power network that we use as a benchmark in~\ref{sub::power}, the average
number of possible fault events per component is 9, and the average
number of states per component is well over 100; the number of
components is over 10,000. A Sampath diagnoser for this system will
have over $100^{10,000} \times 2^{(9 * 10,000)} \simeq 10^{50,000}$
states.} This method is therefore inapplicable but for small systems 
or systems that can be strongly abstracted.

Second the diagnoser is originally designed 
for totally ordered observations.  
In our application many observations have the same time stamp, 
meaning that the order in which they were emitted is unknown.  
The diagnoser, as presented by Sampath et al., 
can certainly be adapted to account for bounded partial observable order 
but this would augment the size of the diagnoser 
to additional orders of magnitude.  

Third the approach is not applicable to infinite hypothesis spaces 
since the DFA would be infinitely large.  

\subsubsection{Automata}

Automata-based approaches consist in computing an implicit representation 
of the set of all sequences (traces) of events consistent 
with both the model and the observations.  
This representation is useful if it allows one
to quickly infer some information about the actual system behaviour.  
For instance the implicit representation as a single finite state machine 
whose language is exactly said set 
makes it easy (in linear time) to decide whether a specific fault 
could have or definitely has occurred.  

A significant part of the work in discrete event systems 
aims at finding such representations that are compact, 
that can be computed quickly, and that allow for fast inferences~%
\cite{su-wonham::tac::05,pencole-cordier::aij::05,%
cordier-grastien::ijcai::07}.  

We chose an approach based on junction trees 
\cite{kanjohn-grastien::ecai::08}, the state-of-the-art 
in automata-based diagnosis of discrete event systems.  
This approach is based on the property 
that local consistency in tree structures 
is equivalent to global consistency, 
which result we explain now.  

Consider a finite set $S$ of automata 
that implicitely represents the automaton $A$ 
obtained by standard synchronisation of the automata in $S$.  
Each automaton $A_i \in S$ of this set is characterised 
by a set of events $E_i$.  
A property of this setting is that every trace 
obtained by projecting a trace of $A$ on $E_i$ 
is a trace of $A_i$; 
intuitively this means that a sequence of events allowed by $A$ 
is (by definition of the synchronisation) allowed by every $A_i$.  
The converse, the property of global consistency, is generally not true: 
$A_i$ could contain traces 
that are the synchronisation of no trace from $A$.  
Global consistency is a very powerful property, 
because it allows us to answer many questions regarding $A$ 
by only using $S$ (typically, 
questions such as whether a given fault certainly/possibly occurred).  
In general global consistency can only be obtained by computing $A$ 
and then projecting $A$ on every set of events $E_i$ 
(in case this type of operations is repeated several times, 
a minimisation operation is necessary to reduce space explosion issues); 
this is computationally infeasible outside trivial problems.  

Local consistency is the property 
that every pair of automata in $S$ is consistent, 
i.e., the property of consistency holds for the set $\set{A_i,A_j}$ 
and the synchronisation of $A_i$ and $A_j$.  
Local consistency does not imply global consistency.  
It is now possible to view the set $S$ as a graph, 
where each node maps to an automaton 
and there is a path between two automata that share an event 
($E_i \cap E_j = E_{ij} \neq \emptyset$) 
such that all automata on this path also share these events $E_{ij}$.  
If this graph is a tree, 
then local consistency of $S$ implies global consistency.  
In other words global consistency of $S$ can be achieved 
without computing the automaton $A$.  

There remains the issue of making $S$ represent a tree.  
A technique used to transform an arbitrary graph into a tree 
is to construct a hyper-graph 
where the hyper-nodes are subsets of nodes of the original graph: 
a junction tree \cite{jensen-jensen::uai::94}, aka decomposition tree.  
Accordingly, a new set $S'$ of automata is defined 
whose automata $A'_i$ are defined 
as the synchonisation of subsets of $S$.  
In order to reduce the cost of these synchronisations 
the junction tree should have hyper-nodes of minimal cardinality.  
The decision problem associated with finding the optimal junction tree 
is NP-hard but there exist polynomial algorithms 
that provide good trees \cite{kjaerulff::report::90}.  

We start of with $S$ defined as the set of ``local diagnoses'' 
where each local diagnosis is the synchronisation of each component's model
with its local observations.  
The local observations are not totally independent 
but are defined as batches of independent events.  
Therefore each batch is separated by a synchronisation tick 
that ensures that two ordered observations are indeed ordered.  

{
From a tree-shaped locally consistent representation $S'$, 
one needs to extract the minimal diagnosis.
Assuming that the hypothesis space is defined over a subset $F$ of fault events 
(as is the case with SHS, MHS, and SqHS), 
one option would be to compute the language $\mathcal{L}_F$, 
defined as the projection of the language of $S'$ onto $F$,
and then extract its minimal words, 
a problem similar to the \emph{enumeration problem} \cite{ackerman2009efficient}.
How to perform it efficiently given our definition of minimality, 
and how to perform it without explicitly computing $\mathcal{L}_F$
is an open question.
For this reason, we only provide the runtime for computing $S'$ 
as it gives us a good estimate of the overall performance of this approach.}

\subsubsection{BDD}

A different approach to diagnosis of discrete event system 
consists in i) embedding in the system state the diagnostic hypothesis 
associated with the paths that lead to this state  
and ii) computing the set of states (``belief state'') 
that the system may be in after generating the observations.  
The diagnosis is the set of hypotheses 
that label some state of the final belief state.  

The first point is rather easy to solve for some hypothesis spaces.  
For the set hypothesis space simply add 
a state variable $v_f$ for each fault $f$ 
that records the past occurrence of $f$: 
$v_f$ is false in the initial state 
and it switches to true whenever a transition labeled by $f$ 
is encountered on the path.  
Other hypothesis spaces could be defined as easily, 
but the problem is that 
it requires an infinite number of state variables in general.  
It seems that there is no practical upper bound on this number 
but for trivial problems.  

The second point can be described easily.  
The model can be rewritten as a function 
that associates every observable event $o$ 
with a set $T_o$ of pairs of states 
(the event $o$ is generated 
only when the state changes from $q$ to $q'$, 
where $\langle q,q'\rangle \in T$) 
as well as a set $T_\epsilon$ of pairs for unobservable transitions.  
Starting from a given set of states $\mathcal{B}$, 
the set of states reached by any number of unobservable events, 
written $\mathit{silent}(\mathcal{B})$, 
is the minimal set of states that satisfies 
$\mathcal{B} \subseteq \mathit{silent}(\mathcal{B})$ 
and $q \in \mathit{silent}(\mathcal{B}) \ 
\land\ \langle q,q'\rangle \in T_\epsilon
\Rightarrow q' \in \mathit{silent}(\mathcal{B})$; 
this set can be easily obtained by adding to $\mathcal{B}$ 
states $q'$ as defined above until the set remains stable.  
Starting from a set of states $\mathcal{B}$, 
the set of states reached by a single observable event $o$, 
written $\mathit{next}_o(\mathcal{B})$, 
is the set of states defined by the relation $T_o$: 
$\set{q' \mid \exists q \in \mathcal{B}.\ \langle q,q'\rangle \in T_o}$.  

We first assume that the observations 
are just a sequence $o_1,\dots,o_k$ of observed events.  
The belief state at the end of the sequence of observations 
can be computed incrementally 
by alternating the two functions presented before: 
$\mathcal{B} = \mathit{silent} \circ \mathit{next}_{o_1} \circ 
\mathit{silent} \cdots \mathit{silent} 
\circ \mathit{next}_{o_k} \circ \mathit{silent} (\mathcal{B}_0)$ 
where $\mathcal{B}_0$ is the initial belief state.  

Our observations are not a single sequence of observed events: 
the order between some observation fragments are unknown.  
One way to solve this issue 
is by computing all possible sequences of observations 
and computing the union of the belief states obtained for each sequence.  
We use a more sophisticated approach.  
Because the observations are batches of unordered events, 
we compute, for each batch $b_j$, all possible sequences; 
we compute then the belief state from $\mathcal{B}_{j-1}$ 
for each sequence, and we obtain the belief state at the end 
of batch $b_j$ as the union of the belief state for each sequence.  

So far we have not described how the belief states are represented.  
Because a state is an assignment of state variables to Boolean values, 
a state can be seen as a formula in propositional logic.  
A set of states is also a formula 
and the union (resp. the intersection) of two sets 
are implemented as the logical disjunction (resp. the conjunction).  
Sets of pairs of states as $T$ can also be represented as a formula, 
but this requires a copy $v'$ for each state variable $v$.  
The set of states $q'$ associated 
with at least one state of a specified set $Q$, 
formally $\set{q' \mid \exists q \in Q.\ \langle q,q'\rangle \in T}$, 
can be represented by the propositional formula: 
$\exists V.\ (\Phi_T \land \Phi_Q)[V'/V]$ 
where $V'$ is the list of copied variables, 
$\Phi_T$ is the formula representing $T$, 
$\Phi_Q$ is the formula representing $Q$, 
and $[V'/V]$ is the operation that consists 
in renaming in a formula all variables $v'$ with $v$.  

Practically, for applications as model checking 
\cite{burch-etal::tcad::94}, 
classical planning \cite{kissman-edelkamp::aaai::11}, 
and diagnosis \cite{schumann-etal::aaai::07}, 
these formulas are represented using BDDs \cite{bryant::tc::86}.  

Finally one important issue when using BDDs is that of variable order.  
We make sure that every state variable $v$ is followed by its copy $v'$.  
Furthermore we define all variables of each component 
in a single sequence.  

\subsection{Setup}\label{sec:xp1:setup}

We benchmark six different implementations of algorithms derived from
our framework. These are: \PFS+ec using the SAT-based test solver,
applied to the set, multiset and sequence hypothesis spaces; \PFS+c
using the test solver based on heuristic search, applied to the set
hypothesis space; and \PLS{} using the heuristic search-based test
solver, applied to the set and multiset hypothesis spaces. Recall that
the basic version of \PFS{}, without the essentiality test, is only
guaranteed to terminate when used with the finite set hypothesis
space.
Code can be downloaded here: \url{github.com/alban-grastien/diagfwork}.

In addition, we compare the performance of these algorithms 
with two diagnosis methods presented in the previous subsection: 
the junction tree (\JT) approach and the BDD-based (\BDD) approach.  

\JT{}, \BDD{}, and the \PFS{} variants using the SAT-based test solver
are implemented in Java. The SAT-based test solver itself is a version
of \texttt{minisat 2.0} \cite{een:soerensson:03}, modified to return conflicts, and is implemented
in C. The \PFS{} and \PLS{} variants using the heuristic search-based
solver are implemented in Lisp; the test solver is based on the
\texttt{HSP*} AI planner \cite{haslum:08}, which is implemented in C++.
A new test solver instance is invoked for each test, without reuse of
information from previous tests. For the SAT-based solver, it is likely
that using incremental SAT \cite{hooker::jlp::93} could improve the
aggregate performance over multiple tests.
Remember, as we discussed in Subsection~\ref{sub::expressiveness}, 
that computing the diagnosis in the set, multiset and sequence
hypothesis spaces is increasingly harder.

\subsection{First Benchmark: Diagnosis of a Power Transmission Network}
\label{sub::power}

\subsubsection{The Diagnosis Problem}\label{sec:xp:problem}

The problem we consider is that of intelligent alarm processing
for a power transmission network, as introduced by Bauer et al.
\cite{bauer-etal::dx::11}. The observations are alarms, generated
by equipment in the network such as protection devices, switchgear,
voltage and current monitors, etc. The objective of intelligent
alarm processing is to reduce the volume of alarms, which can get
very high, particularly in severe fault situations, by determining
which alarms are ``secondary'', meaning they can be explained as
follow-on effects of others. This is not simply a function of the
alarm itself, but 
depends on the context. As a simple example, if we
can deduce that a power line has become isolated, then an alarm
indicating low or zero voltage on that line is secondary (implied
by the fact that the line is isolated); but in other circumstances,
a low voltage alarm can be the primary indicator of a fault.

The power network is modelled, abstractly, as a discrete event system.
The number of states in each component ranges between $8$ and $1,024$,
with most components having well over a hundred states.
The entire network has over $10,000$ components, but for each problem
instance (partially ordered set of alarms), only a subset of components
are relevant to reasoning about that set of alarms; the number varies
between $2$ and $104$ components in the benchmark problem set.
The initial state is only partially known, and certain components have
up to $128$ initial states. There are 129 instances in the benchmark set,
and the number of observations (alarms) in each ranges from $2$ to $146$.

\subsubsection{Results}\label{sec:xp1:results}

A summary of results, in the form of runtime distributions, is shown
in Figure~\ref{fig::plot}.

\begin{figure}[ht]
  \includegraphics[width=\linewidth]{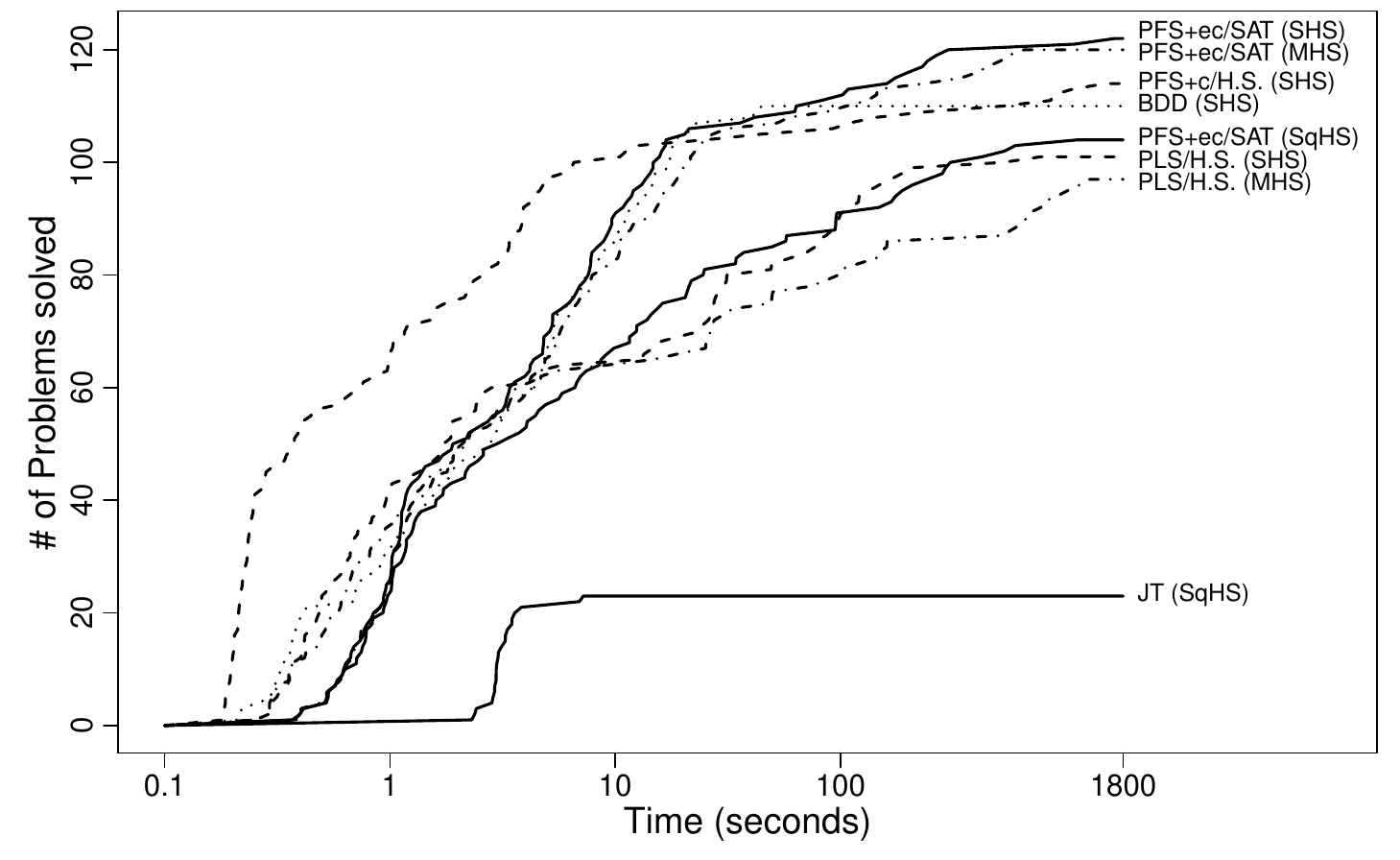}
  \caption{Runtime distribution (number of problems solved vs.\ time limit)
    for all diagnosis algorithms compared in the experiment.}
  \label{fig::plot}
\end{figure}

The complexity of the benchmark instances varies significantly.
Many problems are quite simple, but the complexity rises sharply
in the larger instances. Thus, solving a handful more problems 
is actually a pretty good result.

\JT{} solves only $23$ out of the $129$ instances.  
As soon as the problem includes a transmission line, 
the problem becomes too hard: 
the transmission line component has $1,024$ states, and
$64$ possible initial states, 
which makes the automata determinisation 
required by \JT{} too expensive.

Comparing all the diagnosers operating on the set hypothesis space (SHS),
\PFS{}, with both test solvers, solves more problems than \BDD{} (4 more
with the heuristic search-based solver, 12 more with the SAT-based solver),
which in turn solves 9 more problems than \PLS.
However, it is worth noting that \PFS{} and \PLS{} can return some
diagnosis candidates even when they fail to complete within the given
time limit. All candidates found by \PFS{} are minimal, and so form
a subset of the minimal diagnosis. The instances not solved by
\PFS+ec/SAT (SHS) are also not solved by any other diagnoser, so we cannot
determine how much of the minimal diagnosis has been found.
Concerning \PLS/H.S., in $17\%$ of the instances that it does not solve but
for which the minimal diagnosis is known (because they are solved by
some other diagnoser), the candidate set found by \PLS{}/H.S. is in fact
the minimal diagnosis; it is only the last test, proving that there is no
uncovered candidate, that fails to finish. This can be attributed to
the asymmetric performance of the heuristic search-based test solver:
heuristically guided state space search can be quite effective at finding
a solution when one exists, but is generally no more efficient than blind
search at proving that no solution exists.

It is also interesting to note that the performance of \PFS{}+ec in
the three different hypothesis spaces (SHS, MHS and SqHS) follows the
expected hierarchy of problem hardness: fewer instances are solved in
the sequence hypothesis space, which is a harder diagnosis problem,
than in the easier multiset hypothesis space, and still more instances
are solved in the easiest, the set hypothesis space, though the
difference between MHS and SHS is only two problems.
It turns out that most problem instances 
have the same number of minimal candidates 
for these hypothesis spaces.  
Only two instances solved by both diagnosers show different numbers: 
problem \texttt{chunk-105}, for example,
has two minimal MHS candidates,
$\{\texttt{Line\_X9\_X10.fault}\rightarrow1, 
\texttt{Breaker\_X1\_X2.fault}\rightarrow1\}$ 
and $\{\texttt{Breaker\_X1\_X2.fault}\rightarrow2\}$,
which lead to a single minimal SHS candidate,
$\{\texttt{Breaker\_X1\_X2.fault}\}$.  
Because the size of the minimal diagnoses are similar, 
the number of tests is also very similar 
and incurs only a small penalty for \PFS{} (MHS).
On the contrary, because MHS tests are more precise 
(specifying the exact number of faults), 
and because \PFS{} does not use incremental solving, 
each individual MHS test may be easier to solve.  

\subsection{Second Benchmark: The Labour Market Database}

\subsubsection{The Diagnosis Problem}

This diagnosis problem is based on the data cleansing problem
that we already discussed in \ref{sub::motiv::conformancechecking}.

Specifically, we consider the database provided by \cite{boselli-etal::icaps::14}
that records the employment history in the Italian Labour Market.
This history is subject to logical and legal constraints
(e.g., a job can end only after it has started; a person cannot hold two full-time jobs at the same time).
Data cleansing is the problem of correcting the database to restore its integrity.

Because the constraints apply to each person individually,
we created one problem centered around each person whose history does not satisfy the constraints.
For each problem, we considered all the relevant environment,
in particular the list of employers mentioned in this person's records.
For employers, employees, and jobs, we built generic automata modelling
all histories consistent with the integrity rules.
We further added faulty transitions modelling how the records could be incorrectly inserted into the database,
e.g., transitions representing the fact that an employment cessation was filed for the wrong employer.
A diagnosis is then a sequence of such incorrect operations.

We end up with 600 problems.
The systems are fairly small:
in the worst case, a worker was in contact with five different employers,
which translates into six automata with no more than six states each,
and up to $46$ events per component.

\subsubsection{Results}

A summary of results, in the form of runtime distributions, is shown
in Figure~\ref{fig::plot2}.

\begin{figure}[p]
  \includegraphics[width=\linewidth]{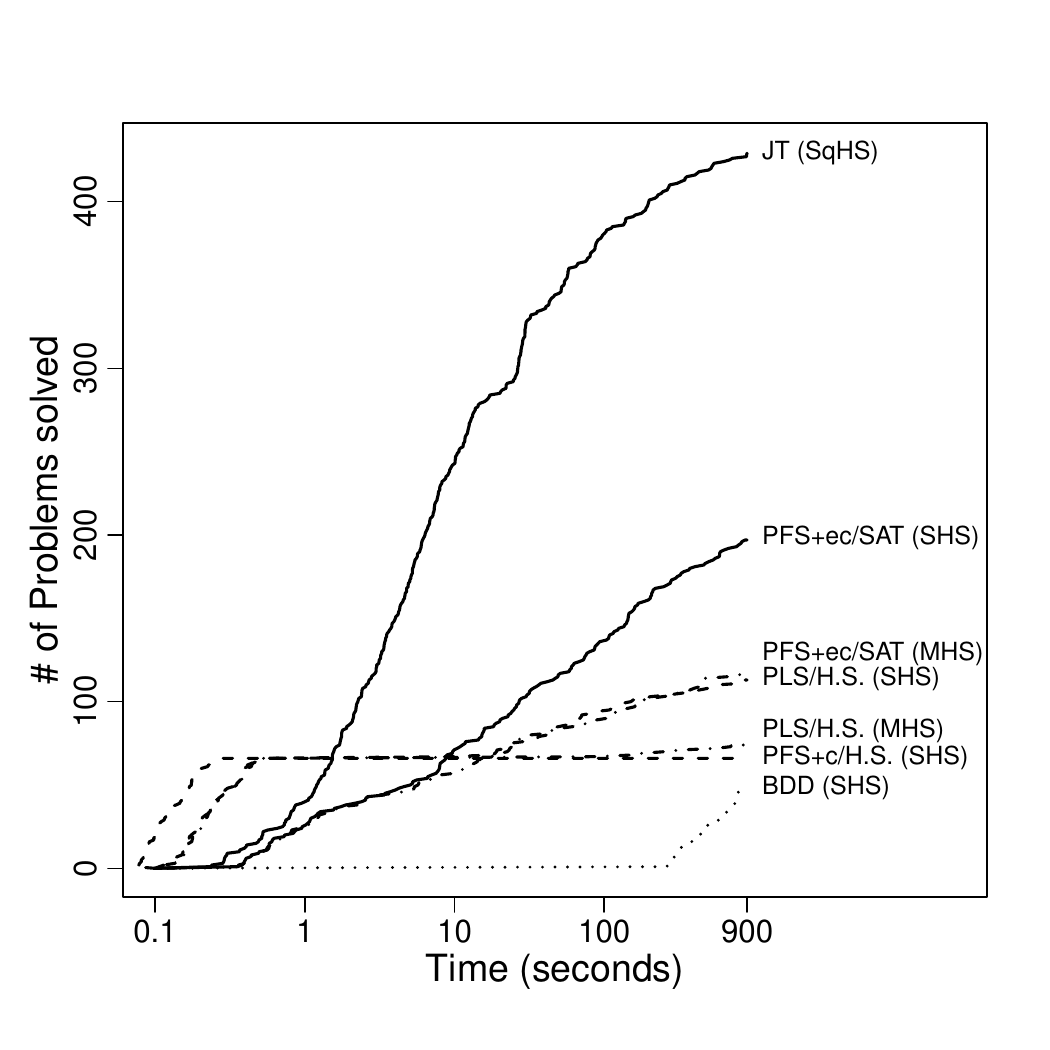}
  \caption{Runtime distribution (number of problems solved vs.{} time limit)
    for all diagnosis algorithms compared in the experiment.}
  \label{fig::plot2}
\end{figure}

The maximum number of minimal candidates in any of the solved instances is $450$,
and the maximum number of faults in such a candidate is $12$.
These numbers are very high, and suggest that the problem definition could be refined.
For instance, in the Set Hypothesis Space,
the preference relation could be enriched by saying
that a hypothesis is preferred over another hypothesis that contains two more faults.
This type of constraint can be easily handled by our framework.

The profile of the algorithms' performance is very different from the first experiments.
We believe that this is due to the features of the problems
that differ significantly from the power network domain.

The Junction Tree algorithm is able to solve a large majority of the instances.
This is due to the fairly small number of states in the diagnosed system.
As a consequence, the necessary operations, such as the automata determinisations,
are relatively quick.
On the other side of the spectrum, the approach based on BDD is able to solve
only a small number of instances;
this is due to the large number of events and transitions,
as well as the number of fault events, that makes each iteration very expensive.
For most instances in which we let the BDD-based diagnoser run longer than 900s,
the computer ran out of memory,
which suggests that this approach will not be able to catch up with the other approaches
beyond the time limit.

Comparing the different algorithms presented in this paper,
we see that PFS is still better, in particular when combined with SAT.
The performance of PFS and PLS are however similar for an oracle using heuristic search planning.


\section{Conclusion and Future Work} 
\label{sec::future}
Prior to our work, diagnosis of discrete event systems has followed its own path,
distinct from that initiated by de Kleer, Reiter, and Williams
for diagnosis for static systems \cite{reiter::aij::87,dekleer-williams::aij::87}. 

In this article, we extended the consistency-based theory of model based diagnosis
to handle diagnosis of systems beyond these static ones.
We showed how to apply the consistency-based approach to all types of systems,
notably discrete event systems and hybrid dynamic ones.
We showed that, for such systems, diagnosis can be computed via a series of consistency tests
that each decide whether the model allows for a behaviour
that i) satisfies certain specified assumptions and ii) agrees with the observations.
We also showed how to perform these tests in practice,
e.g., by using propositional SAT or classical planning.

Extending the consistency-based diagnosis approach to a larger class of systems,
and, in particular, to dynamic systems
prompted us to consider more elaborate definitions of diagnosis and minimal diagnosis.
Some applications, for instance, require us to determine the number or order of fault occurrences.
In other applications, 
certain faults are orders of magnitude less likely than others
and should therefore be ignored whenever more likely behaviours exist,
which leads us to unconventional definitions of preferred hypotheses.
These diagnosis problems are not trivial to solve a priori
as they now feature an infinite search space,
but we showed that our theory can easily handle them
as it only requires us to specify the assumptions appropriately in the consistency tests.
Specifically, as we proved, each assumption should indicate
that the diagnosis hypothesis of the behaviour that the test is looking for
should be better, not better, worse, or not worse than a specified diagnosis hypothesis.
We then just need the test solver to be able to express these assumptions.

We proposed several strategies to generate the diagnosis tests
and showed properties and termination conditions for these strategies.
We also extended the definition of conflict, a central concept in model based diagnosis,
to align with our general theory.

\paragraph{}
Since the beginning of this work, we applied this theory to a range of applications.
We used this theory in combination with SAT modulo theory (SMT)
to diagnose hybrid systems \cite{grastien::ecai::14};
and with model checking to diagnose timed automata \cite{feng-grastien::dx::20}.
The consistency-based approach also allowed us to reason about the observations themselves
and compute a subset of observations that are sufficient
to derive the diagnosis \cite{christopher-etal::cdc::14}.

There has been recently an increased interest in planning for richer problems
involving constraints similar to the ones used in diagnosis tests.
This is motivated by a range of applications:
\begin{description}
\item[Top-$k$ Planning]
  computes several plans that need to be significantly different
  \cite{nguyen2012generating,katz2020reshaping}.
\item[Legibility]
  asks for a plan whose purpose is clear for the observer
  \cite{chakraborti-etal::icaps::19}.
\item[Normative Constraint Signalling]
  requires the plan to communicate to a (partial) observer
  that it follows some normative constraints
  \cite{grastien-etal::aies::21}.
\item[Model Reconciliation]
  assumes two agents with different models of the world,
  and searches for a minimal change to one model
  so that the agents agree on the optimal plan
  \cite{chakraborti-etal::ijcai::17}.
\item[Goal Recognition]
  is the problem of determining what an agent is trying to achieve
  \cite{pereira2019online}.
\item[Plan Explanation]
  provides an explanation alongside the plan that justifies
  why there is no better plan than the proposed one
  \cite{eifler-etal::aaai::20}.
\end{description}
All these problems require reasoning about plans with similar or different properties.
The search strategies developed in this paper can be used
to help some these problems.


\bibliographystyle{theapa}
\bibliography{bib}

\end{document}